\def\eqref#1{equation~\ref{#1}}
\def\1{\bm{1}}
\DeclareMathAlphabet{\mathsfit}{\encodingdefault}{\sfdefault}{m}{sl}
\SetMathAlphabet{\mathsfit}{bold}{\encodingdefault}{\sfdefault}{bx}{n}
\newtheorem{theorem}{Theorem}
\newtheorem{definition}{Definition}
\newtheorem{assumption}{Assumption}
\newtheoremstyle{TheoremNum}
    {\topsep}{\topsep}              
    {\itshape}                      
    {}                              
    {\bfseries}                     
    {.}                             
    { }                             
    {\thmname{#1}\thmnote{ \bfseries #3}}
\theoremstyle{TheoremNum}
\newtheorem{theoremnum}{Theorem}
\title{Learning Robust State Abstractions for Hidden-Parameter Block MDPs}
\author{%
Amy Zhang\thanks{Corresponding author: \texttt{amy.x.zhang@mail.mcgill.ca}}$^{\;\;123}$
\hspace{8mm} 
Shagun Sodhani$^{2}$
\hspace{8mm} 
Khimya Khetarpal$^{13}$
\hspace{8mm} 
Joelle Pineau$^{123}$ \\
$^1$McGill University \\ $^2$Facebook AI Research \\ $^3$Mila
}
\begin{document}

\maketitle

\begin{abstract}
Many control tasks exhibit similar dynamics that can be modeled as having common latent structure. Hidden-Parameter Markov Decision Processes (HiP-MDPs) explicitly model this structure to improve sample efficiency in multi-task settings.
However, this setting makes strong assumptions on the observability of the state that limit its application in real-world scenarios with rich observation spaces. 
In this work, we leverage ideas of common structure from the HiP-MDP setting, and extend it to enable robust state abstractions inspired by Block MDPs. We  derive instantiations of this new framework for  both multi-task reinforcement learning (MTRL) and  meta-reinforcement learning (Meta-RL) settings. Further, we provide transfer and generalization bounds based on task and state similarity, along with sample complexity bounds that depend on the aggregate number of samples across tasks, rather than the number of tasks, a significant improvement over prior work that use the same environment assumptions. To further demonstrate the efficacy of the proposed method, we empirically compare and show improvement over multi-task and meta-reinforcement learning baselines.
\end{abstract}

\section{Introduction}
\begin{wrapfigure}{r}{0.4\linewidth}
    \centering
    \vspace{-20pt}
    \includegraphics[width=0.48\linewidth]{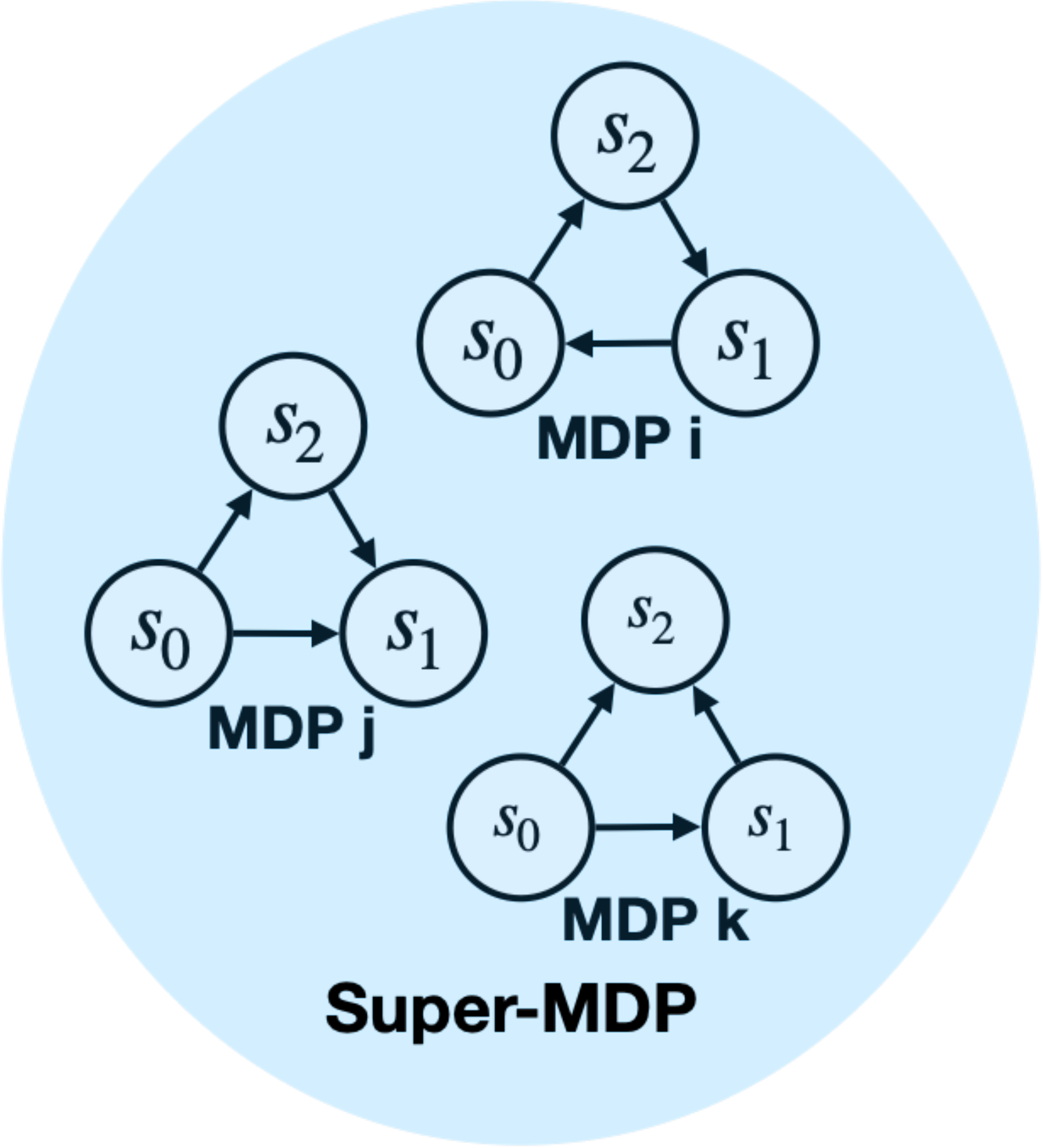}
    \includegraphics[width=0.5\linewidth]{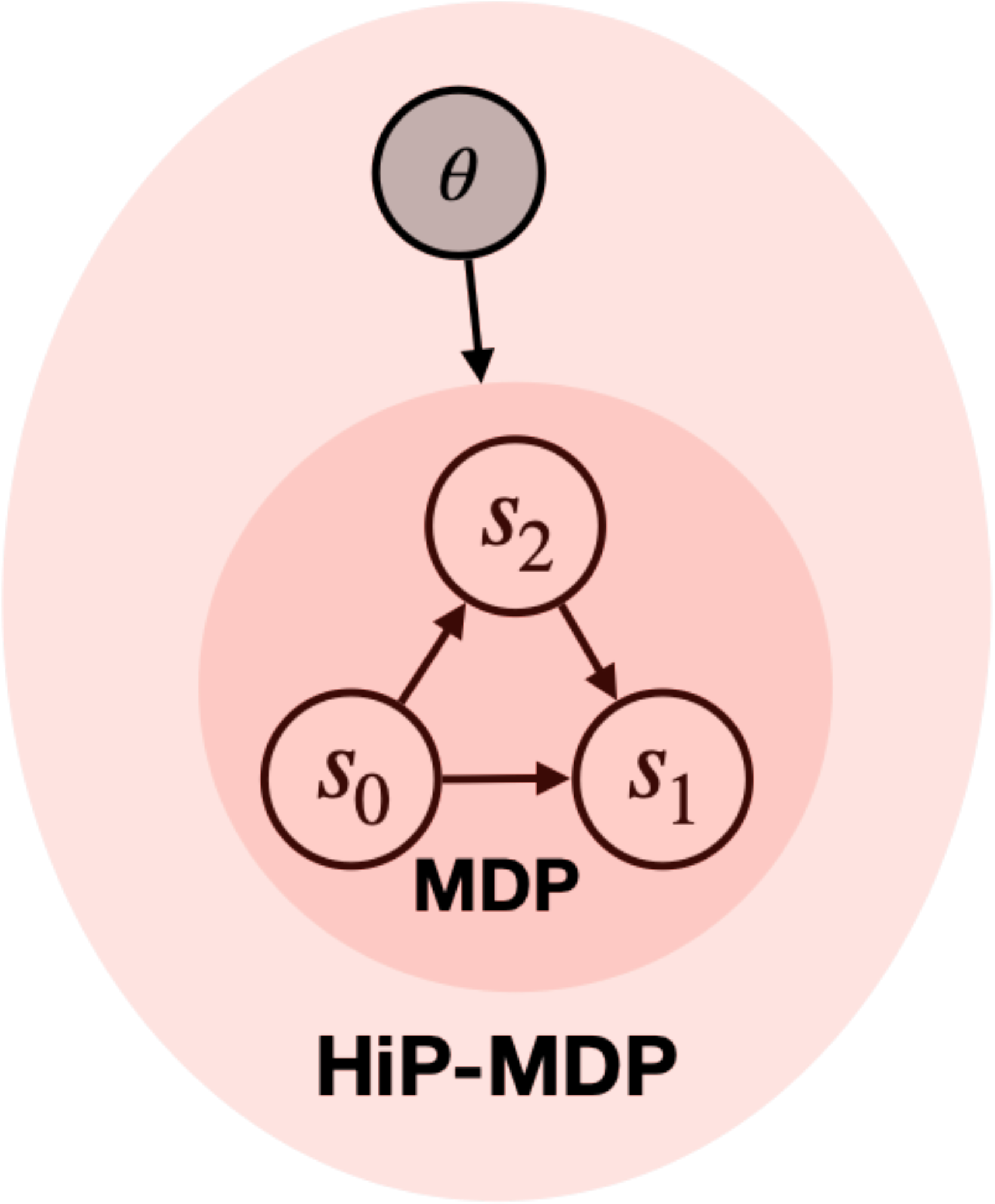}
    \caption{\small Visualizations of the typical MTRL setting and the HiP-MDP setting.}
    \vspace{-10pt}
    \label{fig:teaser}
\end{wrapfigure}
A key open challenge in AI research that remains is how to train agents that can learn behaviors that generalize across tasks and environments. When there is common structure underlying the tasks, we have seen that multi-task reinforcement learning (MTRL), where the agent learns a set of tasks simultaneously, has definite advantages (in terms of robustness and sample efficiency) over the single-task setting, where the agent independently learns each task.  There are two ways in which learning multiple tasks can accelerate learning: the agent can learn a common representation of observations, and the agent can learn a common way to behave.  
Prior work in MTRL has also leveraged the idea by sharing representations across tasks~\citep{D'Eramo2020Sharing} or providing per-task sample complexity results that show improved sample efficiency from transfer~\citep{brunskill2013mtrl}. However, explicit exploitation of the shared structure across tasks via a unified dynamics has been lacking. Prior works that make use of shared representations use a naive unification approach that posits all tasks lie in a shared domain (\cref{fig:teaser}, left). 
On the other hand, in the single-task setting, research on state abstractions has a much richer history, with several works on  improved generalization through the aggregation of behaviorally similar states~\citep{ferns2004bisimulation,li2006stateabs,luo2018algorithmic,zhang2020dbc}. 

In this work, we propose to leverage rich state abstraction models from the single-task setting, and explore their potential for the more general multi-task setting. We frame the problem as a structured super-MDP with a shared state space and universal dynamics model conditioned on a task-specific hidden parameter (\cref{fig:teaser}, right). 
This additional structure gives us better sample efficiency, both theoretically, compared to related bounds~\citep{brunskill2013mtrl,tirinzoni2020sequential} and empirically against relevant baselines~\citep{pcgrad,rakelly2019pearl,gradnorm,distral}. 
We learn a latent representation with smoothness properties for better few-shot generalization to other unseen tasks within this family. 
This allows us to derive new value loss bounds and sample complexity bounds that depend on how far away a new task is from the ones already seen. 

We focus on multi-task settings where dynamics can vary across tasks, but the reward function is shared. 
We show that this setting can be formalized as a \textit{hidden-parameter MDP} (HiP-MDP)~\citep{doshi-velezHiddenParameterMarkov2013}, where the changes in dynamics can be defined by a latent variable, unifying dynamics across tasks as a single global function.
This setting assumes a global latent structure over all tasks (or MDPs). Many real-world scenarios fall under this framework, such as autonomous driving under different weather and road conditions, or even different vehicles, which change the dynamics of driving. 
Another example is warehouse robots, where the same tasks are performed in different conditions and warehouse layouts. The setting is also applicable to some cases of RL for medical treatment optimization, where different patient groups have different responses to treatment, yet the desired outcome is the same. With this assumed structure, we can provide concrete zero-shot generalization bounds to unseen tasks within this family. Further, we explore the setting where the state space is latent and we have access to only high-dimensional observations, and we show how to recover robust state abstractions in this setting. This is, again, a highly realistic setting in robotics when we do not always have an amenable, Lipschitz low-dimensional state space. Cameras are a convenient and inexpensive way to acquire state information, and handling pixel observations is key to approaching these problems. A \textit{block MDP}~\citep{du2019pcid} provides a concrete way to formalize this observation-based setting. Leveraging this property of the block MDP framework, in combination with the assumption of a unified dynamical structure of HiP-MDPs, we introduce the \textit{hidden-parameter block MDP} (HiP-BMDP) to handle settings with high-dimensional observations and structured, changing dynamics.

\textbf{Key contributions} of this work are a new viewpoint of the multi-task setting with same reward function as a universal MDP under the HiP-BMDP setting, which naturally leads to a gradient-based representation learning algorithm. Further, this framework allows us to compute theoretical generalization results with the incorporation of a learned state representation. Finally, empirical results show that our method outperforms other multi-task and meta-learning baselines in both fast adaptation and zero-shot transfer settings.

\section{Background}
\label{sec:background}
In this section, we introduce the base environment as well as notation and additional assumptions about the latent structure of the environments and multi-task setup considered in this work.

A finite\footnote{We use this assumption only for theoretical results, but our method can be applied to continuous domains.}, discrete-time \textbf{Markov Decision Process} (MDP)~\citep{Bellman1957,puterman1995markov} is a tuple $\langle {\cal S}, {\cal A}, R, T, \gamma \rangle $, where ${\cal S}$ is the set of states, ${\cal A}$ is the  set of actions, $R: {\cal S} \times {\cal A}\rightarrow \mathbb{R}$ is the reward function, $T:{\cal S} \times {\cal A} \rightarrow Dist({\cal S})$ is the environment transition probability function, and $\gamma \in [0,1)$ is the discount factor. At each time step, the learning agent perceives a state $s_t \in {\cal S}$, takes an action $a_t \in {\cal A}$ drawn from a policy $\pi : {\cal S} \times {\cal A} \rightarrow [0,1]$, and with probability $T(s_{t+1}|s_t,a_t)$ enters next state $s_{t+1}$, receiving a numerical reward $R_{t+1}$ from the environment. The value function of policy $\pi$ is defined as: $V_\pi(s) = E_\pi[\sum_{t=0}^{\infty} \gamma^{t} R_{t+1} | S_0 = s]$. The optimal value function $V^{*}$ is the maximum value function over the class of stationary policies.

\textbf{Hidden-Parameter MDPs} (HiP-MDPs)~\citep{doshi-velezHiddenParameterMarkov2013} can be defined by a tuple $\mathcal{M}$: $\langle \mathcal{S}, \mathcal{A}, \Theta, T_\theta, R, \gamma, P_\Theta \rangle$ where $\mathcal{S}$ is a finite state space, $\mathcal{A}$ a finite action space, $T_\theta$ describes the transition distribution for a specific task described by task parameter $\theta\sim P_\Theta$, $R$ is the reward function, $\gamma$ is the discount factor, and $P_\Theta$ the distribution over task parameters. This defines a family of MDPs, where each MDP is described by the parameter $\theta\sim P_\Theta$. We assume that this parameter $\theta$ is fixed for an episode and indicated by an environment id given at the start of the episode.

\textbf{Block MDPs}~\citep{du2019pcid} are described by a tuple $\langle \mathcal{S}, \mathcal{A}, \mathcal{X}, p, q, R \rangle$ with an unobservable state space $\mathcal{S}$,  action space $\mathcal{A}$, and observable space $\mathcal{X}$. $p$ denotes the latent transition distribution $p(s'|s,a)$ for $s,s'\in\mathcal{S}, a\in\mathcal{A}$, $q$ is the (possibly stochastic) emission mapping that emits the observations $q(x|s)$ for $x\in\mathcal{X}, s\in\mathcal{S}$, and $R$ the reward function. We are interested in the setting where this mapping $q$ is one-to-many. This is common in many real world problems with many tasks where the underlying states and dynamics are the same, but the observation space that the agent perceives can be quite different, e.g. navigating a house of the same layout but different decorations and furnishings.
\begin{assumption}[Block structure~\citep{du2019pcid}]\label{assmpt:block}
Each observation $x$ uniquely determines its generating state $s$. That is, the observation space $\mathcal{X}$ can be partitioned into disjoint blocks $\mathcal{X}_s$, each containing the support of the conditional distribution $q(\cdot|s)$.
\end{assumption}
\cref{assmpt:block} gives the Markov property in $\mathcal{X}$, a key difference from partially observable MDPs~\citep{kaelbling1998pomdp,zhang2019causalstates}, which has no guarantee of determining the generating state from the history of observations. This assumption allows us to compute reasonable bounds for our algorithm in $k$-order MDPs\footnote{Any $k$-order  MDP can be made Markov by stacking the previous $k$ observations and actions together.} (which describes many real world problems) and avoiding the intractability of true POMDPs, which have no guarantees on providing enough information to sufficiently predict future rewards. A relaxation of this assumption entails providing less information in the observation for predicting future reward, which will degrade performance.  We  show empirically that our method is still more robust to a relaxation of this assumption compared to other MTRL methods. 

Bisimulation is a strict form of state abstraction, where two states are bisimilar if they are behaviorally equivalent. \textbf{Bisimulation metrics}~\citep{ferns2011contbisim} define a distance between states as follows: 
\begin{definition}[Bisimulation Metric (Theorem 2.6 in \citet{ferns2011contbisim})]
\label{def:bisim_metric}
Let $(\mathcal{S},\mathcal{A},P,r)$ be a finite MDP and $\mathfrak{met}$ the space of bounded pseudometrics on $\mathcal{S}$ equipped with the metric induced by the uniform norm. Define $F:\mathfrak{met} \mapsto \mathfrak{met}$ by
$$F(d)(s,s')=\max_{a\in \mathcal{A}}(|r_s^a - r_{s'}^a| + \gamma W(d)(P_s^a,P_{s'}^a)),$$
where $W(d)$ is the Wasserstein distance between transition probability distributions.  Then $F$ has a unique fixed point $\tilde{d}$ which is the bisimulation metric.
\end{definition}
A nice property of this metric $\tilde{d}$ is that difference in optimal value between two states is bounded by their distance as defined by this metric.
\begin{theorem}[$V^*$ is Lipschitz with respect to $\tilde{d}$ \citep{ferns2004bisimulation}] 
\label{thm:lipschitz}
Let $V^*$ be the optimal value function for a given discount factor $\gamma$. Then $V^*$ is Lipschitz continuous with respect to $\tilde{d}$ with Lipschitz constant $\frac{1}{1-\gamma}$,
$$|V^*(s) - V^*(s')|\leq \frac{1}{1-\gamma}\tilde{d}(s,s').$$
\end{theorem}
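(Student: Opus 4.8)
The plan is to bypass value iteration and argue directly from the fixed-point characterization of $\tilde d$ in \cref{def:bisim_metric}. Set $d^\circ(s,s') := (1-\gamma)\,|V^*(s) - V^*(s')|$. This is a bounded pseudometric on $\mathcal{S}$ (symmetry, the triangle inequality, and vanishing on the diagonal are inherited from $|\cdot|$, and boundedness of $V^*$ holds since $\mathcal{S},\mathcal{A}$ are finite and $\gamma<1$), so $d^\circ \in \mathfrak{met}$. The claim $|V^*(s)-V^*(s')|\le \frac{1}{1-\gamma}\tilde d(s,s')$ is exactly the pointwise inequality $d^\circ \le \tilde d$. Hence it suffices to show that $d^\circ$ is a pre-fixed point of $F$, i.e.\ $d^\circ \le F(d^\circ)$, and then use monotonicity of $F$ together with the Banach convergence $F^n(d^\circ) \to \tilde d$ already guaranteed by \cref{def:bisim_metric}.

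To establish $d^\circ \le F(d^\circ)$, I would start from the Bellman optimality equation, writing $V^*(s) = \max_a Q^*(s,a)$ with $Q^*(s,a) = r_s^a + \gamma\,\E_{s''\sim P_s^a}[V^*(s'')]$. The elementary estimate $|\max_a f(a) - \max_a g(a)| \le \max_a |f(a)-g(a)|$ then gives
\[
|V^*(s)-V^*(s')| \;\le\; \max_{a}\Bigl(|r_s^a - r_{s'}^a| + \gamma\,\bigl|\E_{P_s^a}[V^*] - \E_{P_{s'}^a}[V^*]\bigr|\Bigr).
\]
By the very definition of $d^\circ$, the function $V^*$ is $\tfrac{1}{1-\gamma}$-Lipschitz with respect to $d^\circ$, so Kantorovich--Rubinstein duality yields $\bigl|\E_{P_s^a}[V^*] - \E_{P_{s'}^a}[V^*]\bigr| \le \tfrac{1}{1-\gamma}\,W(d^\circ)(P_s^a, P_{s'}^a)$. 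Substituting and multiplying through by $1-\gamma$,
\[
d^\circ(s,s') \;\le\; \max_{a}\Bigl((1-\gamma)|r_s^a - r_{s'}^a| + \gamma\,W(d^\circ)(P_s^a,P_{s'}^a)\Bigr) \;\le\; \max_{a}\Bigl(|r_s^a - r_{s'}^a| + \gamma\,W(d^\circ)(P_s^a,P_{s'}^a)\Bigr) = F(d^\circ)(s,s'),
\]
where the last inequality uses $1-\gamma \le 1$.

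It remains to close the argument. The operator $F$ is monotone in $d$, since $d\mapsto W(d)$ is monotone (the cost $\int d\,d\lambda$ of any coupling $\lambda$ is nondecreasing in $d$, hence so is the infimum) and the reward term does not depend on $d$. Iterating $d^\circ \le F(d^\circ)$ therefore gives $d^\circ \le F(d^\circ) \le F^2(d^\circ)\le\cdots$, and because $F$ is a contraction with unique fixed point $\tilde d$ (\cref{def:bisim_metric}), $F^n(d^\circ)\to\tilde d$ in the uniform norm; passing to the limit yields $d^\circ \le \tilde d$, which is the statement. An essentially equivalent route would be an induction showing that every value-iteration iterate $V_n$ (with $V_0\equiv 0$) is $\tfrac{1}{1-\gamma}$-Lipschitz w.r.t.\ $\tilde d$, using the same two ingredients, followed by $n\to\infty$.

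The one step that is more than bookkeeping is the duality step: I am applying Kantorovich--Rubinstein to the Wasserstein distance $W(d^\circ)$ built from a \emph{pseudo}metric rather than a genuine metric. This is standard, but it warrants a one-line justification — either invoke the pseudometric version of the duality directly, or pass to the metric identification space $\mathcal{S}/\!\!\sim$ (collapsing states at $d^\circ$-distance $0$), on which $V^*$ descends to a well-defined $\tfrac{1}{1-\gamma}$-Lipschitz function. Everything else (the max-over-actions contraction, the factor-$(1-\gamma)$ algebra, monotonicity of $F$, and convergence of $F^n$) is immediate from \cref{def:bisim_metric}.
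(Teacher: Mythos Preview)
The paper does not supply its own proof of this theorem; it is quoted in the Background section as a result of \citet{ferns2004bisimulation}, with no argument given. So there is no in-paper proof to compare against.

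Your argument is correct and self-contained. The pre-fixed-point route (show $d^\circ:=(1-\gamma)|V^*(\cdot)-V^*(\cdot)|$ satisfies $d^\circ\le F(d^\circ)$, then iterate using monotonicity and convergence of $F^n$) is a clean alternative to the more commonly presented value-iteration induction you mention at the end; both rely on exactly the same two ingredients (the Bellman max estimate and Kantorovich--Rubinstein), and yours has the minor advantage of not having to track two coupled sequences $(V_n,d_n)$. One small bookkeeping point: you cite \cref{def:bisim_metric} for ``$F$ is a contraction,'' but as stated there it only asserts a unique fixed point. The contraction property (hence $F^n(d^\circ)\to\tilde d$ from any starting $d^\circ$) is indeed part of the original Ferns result, so this is fine, but you should attribute it to \citet{ferns2011contbisim} rather than to the definition as reproduced in the paper. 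Your remark about Kantorovich--Rubinstein for pseudometrics is well placed; passing to the metric quotient $\mathcal{S}/\!\sim_{d^\circ}$ is the cleanest way to dispatch it here.
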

Therefore, we see that bisimulation metrics give us a Lipschitz value function with respect to $\tilde{d}$. 

For downstream evaluation of the representations we learn, we use \textbf{Soft Actor Critic} (SAC)~\citep{soft-actor-critic}, an off-policy actor-critic method that uses the maximum entropy framework for soft policy iteration. At each iteration, SAC performs soft policy evaluation and improvement steps. The policy evaluation step fits a parametric soft Q-function $Q(s_t, a_t)$  using transitions sampled from the replay buffer $\mathcal{D}$ by minimizing the soft Bellman residual,
\begin{equation*}
    J(Q) = \mathbb{E}_{(s_t, s_t, r_t, s_{t+1}) \sim \mathcal{D}} \bigg[ \bigg(Q(s_t, a_t) - r_t - \gamma \Bar{V}(x_{t+1})\bigg)^2  \bigg].
\end{equation*}
The target value function $\Bar{V}$ is approximated via a Monte-Carlo estimate of the following expectation,
\begin{equation*}
    \Bar{V}(x_{t+1}) = \mathbb{E}_{a_{t+1} \sim \pi} \big[\Bar{Q}(x_{t+1}, a_{t+1}) - \alpha  \log \pi(a_{t+1}|s_{t+1}) \big],
\end{equation*}
where $\bar{Q}$ is the target soft Q-function parameterized by a weight vector obtained from an exponentially moving average of the Q-function weights to stabilize training. The  policy improvement step then attempts to project a parametric policy $\pi(a_t|s_t)$  by minimizing KL divergence between the  policy and a Boltzmann distribution induced by the Q-function, producing the following objective,
\begin{equation*}
    J(\pi)= \mathbb{E}_{s_t \sim \mathcal{D}} \bigg[ \mathbb{E}_{a_t \sim \pi} [\alpha \log (\pi(a_t | s_t)) - Q(s_t, a_t)] \bigg].
\end{equation*}

\section{The HiP-BMDP Setting}
\label{sec:hipbmdpsetting}
The HiP-MDP setting (as defined in \cref{sec:background}) assumes full observability of the state space. However, in most real-world scenarios, we only have access to high-dimensional, noisy observations, which often contain irrelevant information to the reward.  
We combine the Block MDP and HiP-MDP settings to introduce the \textbf{Hidden-Parameter Block MDP} setting (HiP-BMDP), where states are latent, and transition distributions change depending on the task parameters $\theta$. 
This adds an additional dimension of complexity to our problem -- we first want to learn an amenable state space $\mathcal{S}$, and a universal dynamics model in that representation\footnote{We overload notation here since the true state space is latent.}. In this section, we formally define the HiP-BMDP family in \cref{sec:the_model}, propose an algorithm for learning HiP-BMDPs in \cref{sec:method}, and finally provide theoretical analysis for the setting in \cref{sec:theory}. 

\subsection{The Model}
\label{sec:the_model}
A HiP-BMDP family can be described by tuple $\langle \mathcal{S}, \mathcal{A}, \Theta, T_\theta, R, \gamma, P_\Theta, \mathcal{X}, q \rangle$, with a graphical model of the framework found in \cref{fig:main_fig}. 
We are given a label $k\in\{1,...,N\}$ for each of $N$ environments. We plan to learn a candidate $\Theta$ that unifies the transition dynamics across all environments, effectively finding $T(\cdot, \cdot,\theta)$. For two environment settings $\theta_i, \theta_j \in \Theta$, we define a distance metric: 
\begin{equation}
\label{eq:theta}
    d(\theta_i,\theta_j):=\max_{s,a\in \{S,A\}}\bigg[W\big(T_{\theta_i}(s, a),T_{\theta_j}(s,a)\big)\bigg].
\end{equation}
The Wasserstein-1 metric can be written as 
$W_d(P,Q)=\sup_{f\in \mathcal{F}_d}\big\|\mathbb{E}_{x\sim P}f(x)-\mathbb{E}_{y\sim Q} f(y)\big\|_1,$
where $\mathcal{F}_d$ is the set of 1-Lipschitz functions under metric $d$~\citep{mueller_1997}. We omit $d$ but use $d(x,y)=\|x-y\|_1$ in our setting.
This ties distance between $\theta$ to the maximum difference in the next state distribution of all state-action pairs in the MDP.

Given a HiP-BMDP family $\mathcal{M}_\Theta$, we assume a multi-task setting where environments with specific $\theta\in\Theta$ are sampled from this family. We do not have access to $\theta$, and instead get environment labels $I_1,I_2,...,I_N$.
The goal is to learn a latent space for the hyperparameters $\theta$\footnote{We again overload notation here to refer to the learned hyperparameters as $\theta$, as the true ones are latent.}. 
We want $\theta$ to be smooth with respect to changes in dynamics from environment to environment, which we can set explicitly through the following objective:
\begin{equation}
\label{eq:theta_obj}
    ||\psi(I_1)-\psi(I_2)||_1 = \max_{\substack{s\in \mathcal{S} \\ a\in \mathcal{A}}}\bigg[W_2\big(p(s_{t+1}|s_t,a_t,\psi(I_1)), p(s_{t+1}|s_t,a_t,\psi(I_2))\big)\bigg],
\end{equation}
given environment labels $I_1$, $I_2$ and $\psi:\mathbb{Z}^+\mapsto \mathbb{R}^d$, the encoder that maps from environment label, the set of positive integers, to $\theta$.

\subsection{Learning HiP-BMDPs}
\label{sec:method}
\begin{figure}[t]
    \centering
    \vspace{-20pt}
    \raisebox{-5mm}{\includegraphics[height=0.35\linewidth]{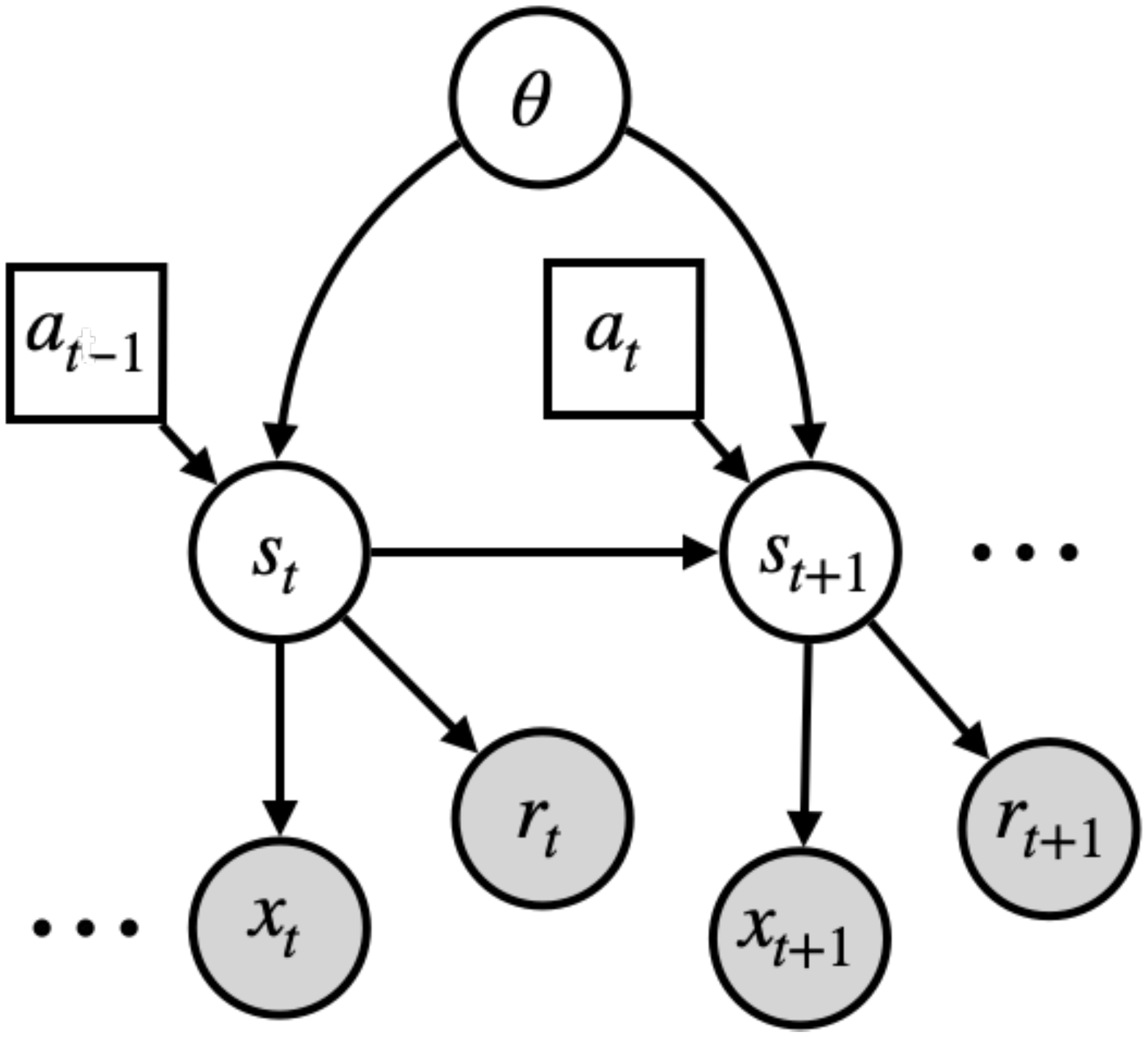}}
    \hspace{-10pt}
    \includegraphics[height=0.3\linewidth]{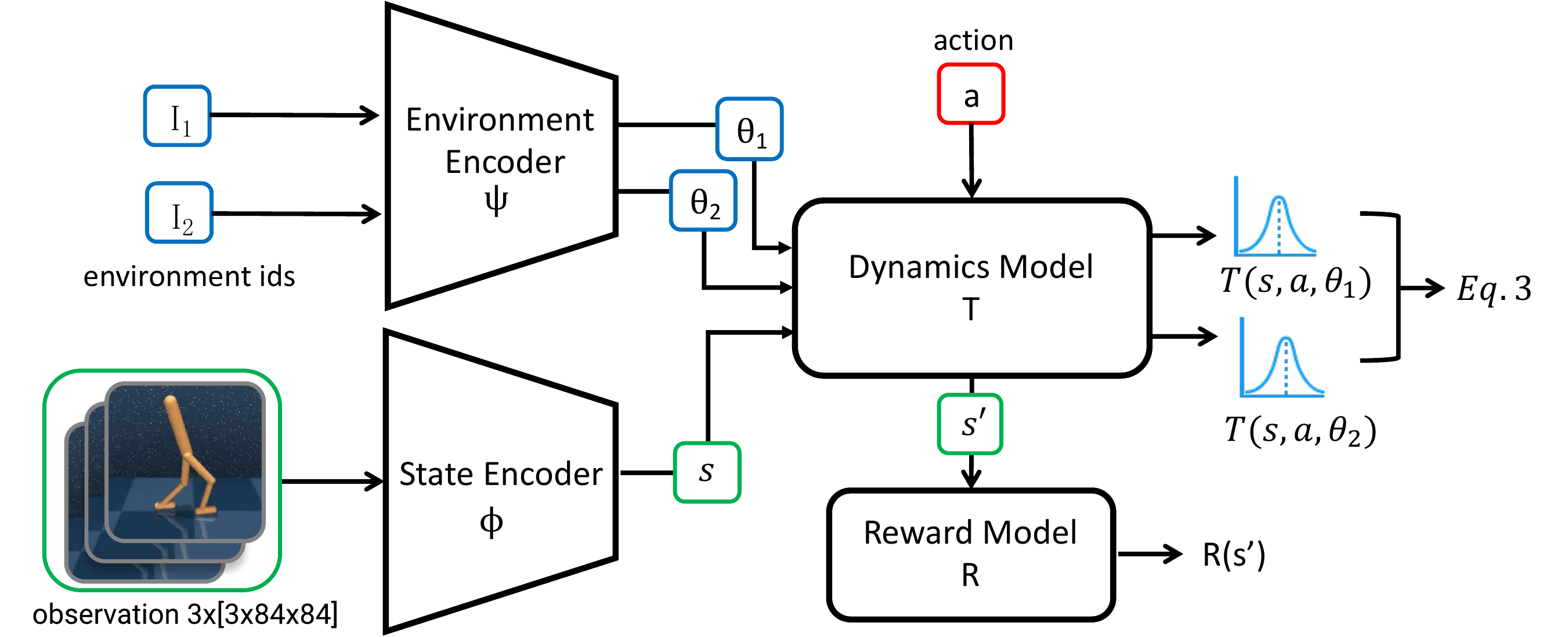}
    \vspace{-10pt}
    \caption{\small Graphical model of HiP-BMDP setting (left). Flow diagram of learning a HiP-BMDP (right). Two environment ids are selected by permuting a randomly sampled batch of data from the replay buffer, and the loss objective requires computing the Wasserstein distance of the predicted next-step distribution for those states.}
    \label{fig:main_fig}
\end{figure}
The premise of our work is that the HiP-BMDP formulation will improve sample efficiency and generalization performance on downstream tasks. We examine two settings, multi-task reinforcement learning (MTRL) and meta-reinforcement learning (meta-RL). 
In both settings, we have access to $N$ training environments and a held-out set of $M$ evaluation environments, both drawn from a defined family. In the MTRL setting, we evaluate model performance across all $N$ training environments and ability to adapt to new environments. Adaptation performance is evaluated in both the few-shot regime, where we collect a small number of samples from the evaluation environments to learn each hidden parameter $\theta$, and the zero-shot regime, where we average $\theta$ over all training tasks. We evaluate against ablations and other MTRL methods.
In the meta-RL setting, the goal for the agent is to leverage knowledge acquired from the previous tasks to adapt quickly to a new task. We evaluate performance in terms of how quickly the agent can achieve a minimum threshold score in the unseen evaluation environments (by learning the correct $\theta$ for each new environment).

Learning a HiP-BMDP approximation of a family of MDPs requires the following components: \textbf{i)} an encoder that maps observations from state space to a learned, latent representation, $\phi:\mathcal{X}\mapsto \mathcal{Z}$, \textbf{ii)} an environment encoder $\psi$ that maps an environment identifier to a hidden parameter $\theta$, \textbf{iii)} a universal dynamics model $T$ conditioned on task parameter $\theta$. \cref{fig:main_fig} shows how the components interact during training.
In practice, computing the maximum Wasserstein distance over the entire state-action space is computationally infeasible. Therefore, we relax this requirement by taking the expectation over Wasserstein distance with respect to the marginal state distribution of the behavior policy. 
We train a probabilistic universal dynamics model $T$ to output the desired next state distributions as Gaussians\footnote{This is not a restrictive assumption to make, as any distribution can be mapped to a Gaussian with an encoder of sufficient capacity.}, for which the 2-Wasserstein distance has a closed form: $$W_2(\mathcal{N}(m_1,\Sigma_1),\,\mathcal{N}(m_2,\Sigma_2))^2=||m_1-m_2||_2^2 + ||\Sigma_1^{1/2} - \Sigma_2^{1/2}||^2_\mathcal{F},$$ where $||\cdot||_\mathcal{F}$ is the Frobenius norm.

Given that we do not have access to the true universal dynamics function across all environments, it must be learned. The objective in \cref{eq:theta_obj} is accompanied by an additional objective to learn $T$, giving a final loss function:
\begin{equation}
\begin{split}
\label{eq:total_loss}
\mathcal{L}(\psi,T) &= MSE\underbrace{\bigg(\Big|\Big|\psi(I_1)-\psi(I_2)\Big|\Big|_2, W_2\big({\color{red}T}(s^{I_1}_t,\pi(s^{I_1}_t),\psi(I_1)), {\color{red}T}(s^{I_2}_t,\pi(s^{I_2}_t),\psi(I_2))\big)\bigg)}_{\Theta \text{ learning error}} \\ 
& + MSE\underbrace{\bigg(T(s^{I_1}_t,a^{I_1}_t,\psi(I_1)), s^{I_1}_{t+1}\bigg) + MSE\bigg(T(s^{I_2}_t,a^{I_2}_t,\psi(I_2)), s^{I_2}_{t+1}\bigg)}_{\text{Model learning error}}.
\end{split}
\end{equation}
where {\color{red}red} indicates gradients are stopped. Transitions $\{s^{I_1}_t,a^{I_1}_t, s^{I_1}_{t+1}, I_1\}$ and $\{s^{I_2}_t,a^{I_2}_t, s^{I_2}_{t+1}, I_2\}$ from two different  environments ($I_1\neq I_2$) are sampled randomly from a replay buffer. In practice, we scale the $\Theta$ learning error, our task bisimulation metric loss, using a scalar value denoted as $\alpha_{\psi}$.

\subsection{Theoretical Analysis}
\label{sec:theory}
In this section, we provide value bounds and sample complexity analysis of the HiP-BMDP approach. We have additional new theoretical analysis of the simpler HiP-MDP setting in \cref{app:hip-mdp}.
We first define three additional error terms associated with learning a $\epsilon_R,\epsilon_T,\epsilon_\theta$-bisimulation abstraction,
\begin{equation*}
\begin{split}
\epsilon_R&:=\sup_{\substack{a\in\mathcal{A},\\x_1,x_2\in \mathcal{X},\phi(x_1)=\phi(x_2)}} \big|R(x_1,a)-R(x_2,a)\big|,  \\ \epsilon_T&:=\sup_{\substack{a\in\mathcal{A},\\x_1,x_2\in \mathcal{X},\phi(x_1)=\phi(x_2)}}\big\lVert \Phi T(x_1,a) - \Phi T(x_2,a) \big\rVert_1, \\
\epsilon_\theta&:=\|\hat{\theta} - \theta\|_1.
\end{split}
\end{equation*}
$\Phi T$ denotes the \textit{lifted} version of $T$, where we take the next-step transition distribution from observation space $\mathcal{X}$ and lift it to latent space $\mathcal{S}$.
We can think of $\epsilon_R,\epsilon_T$ as describing a new MDP which is close -- but not necessarily the same, if $\epsilon_R,\epsilon_T>0$ -- to the original Block MDP. These two error terms can be computed empirically over all training environments and are therefore not task-specific. $\epsilon_\theta$, on the other hand, is measured as a per-task error. Similar methods are used in \citet{jiang2015abstraction} to bound the loss of a single abstraction, which we extend to the HiP-BMDP setting with a family of tasks.

\paragraph{Value Bounds.}
We first evaluate how the error in $\theta$ prediction and the learned bisimulation representation affect the optimal $Q^*_{\bar{\mathcal{M}}_{\hat{\theta}}}$ of the learned MDP, by first bounding its distance from the optimal $Q^*$ of the true MDP for a single-task.
\begin{theorem}[$Q$ error]
\label{thm:qerror_hipbmdp}
Given an MDP $\bar{\mathcal{M}}_{\hat{\theta}}$ built on a $(\epsilon_R,\epsilon_T,\epsilon_\theta)$-approximate bisimulation abstraction of an instance of a HiP-BMDP $\mathcal{M}_\theta$, we denote the  evaluation of the optimal $Q$ function of $\bar{\mathcal{M}}_{\hat{\theta}}$ on $\mathcal{M}$ as $[Q^*_{\bar{\mathcal{M}}_{\hat{\theta}}}]_{\mathcal{M}_\theta}$. 
The value difference with respect to the optimal $Q^*_\mathcal{M}$ is upper bounded by
\begin{equation*}
    \big\| Q^*_{\mathcal{M}_\theta} - [Q^*_{\bar{\mathcal{M}}_{\hat{\theta}}}]_{\mathcal{M}_\theta}\big\|_\infty \leq \epsilon_R + \gamma (\epsilon_T + \epsilon_\theta) \frac{R_\text{max}}{2(1-\gamma)}.
\end{equation*}
\end{theorem}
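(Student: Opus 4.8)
The natural route is the one followed by \citet{jiang2015abstraction} for a single abstraction, augmented with a second perturbation term to account for the error in the task embedding. I would introduce an intermediate object: the abstract MDP $\bar{\mathcal{M}}_\theta$ built on the \emph{same} $(\epsilon_R,\epsilon_T)$-approximate bisimulation partition but using the \emph{true} parameter $\theta$, so that $\bar{\mathcal{M}}_{\hat\theta}$ differs from $\bar{\mathcal{M}}_\theta$ only through $\hat\theta$ vs.\ $\theta$. Writing $\bar Q^\ast := Q^\ast_{\bar{\mathcal{M}}_{\hat\theta}}$ and letting $[\bar Q^\ast](x,a) := \bar Q^\ast(\phi(x),a)$, $[\bar V^\ast](x):=\max_a [\bar Q^\ast](x,a)$ be the lifts to $\mathcal{X}$, the quantity $[\bar Q^\ast]$ satisfies a Bellman-type identity against the aggregated reward $\bar R(\phi(x),a)$ and the aggregated transition $\bar T_{\hat\theta}(\cdot\mid\phi(x),a)$. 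The plan is then to expand the pointwise difference $Q^\ast_{\mathcal{M}_\theta}(x,a) - [\bar Q^\ast](x,a)$ as a Bellman residual and iterate the $\gamma$-contraction of the optimality operator.

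Concretely, I would split $Q^\ast_{\mathcal{M}_\theta}(x,a) - [\bar Q^\ast](x,a)$ into (i) a reward gap $R(x,a)-\bar R(\phi(x),a)$, which is at most $\epsilon_R$ in absolute value since $\bar R$ is a convex combination of rewards of observations sharing $\phi$-value and these differ by at most $\epsilon_R$; and (ii) a $\gamma$-discounted transition gap. For (ii) I would insert two auxiliary distributions over latent classes — first the pushforward $\Phi T_\theta(\cdot\mid x,a)$ of the true dynamics, then $\bar T_\theta(\cdot\mid\phi(x),a)$ — to get three contributions: $\E_{x'\sim T_\theta}\big[V^\ast_{\mathcal{M}_\theta}(x')-[\bar V^\ast](x')\big]$, bounded by $\|Q^\ast_{\mathcal{M}_\theta}-[\bar Q^\ast]\|_\infty$; the gap between $\Phi T_\theta(\cdot\mid x,a)$ and $\bar T_\theta(\cdot\mid\phi(x),a)$, controlled by $\epsilon_T$ (definition of the approximate abstraction); and the gap between $\bar T_\theta(\cdot\mid\phi(x),a)$ and $\bar T_{\hat\theta}(\cdot\mid\phi(x),a)$, controlled by $\epsilon_\theta=\|\hat\theta-\theta\|_1$ through the Lipschitz coupling of dynamics to task embeddings baked into \cref{eq:theta}--\cref{eq:theta_obj}. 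For the last two I would use the span-seminorm trick: since the compared distributions are both probability vectors, subtracting any constant from $[\bar V^\ast]$ leaves the inner product unchanged, so each contribution is at most $\tfrac12\|p-q\|_1\cdot\mathrm{span}([\bar V^\ast]) \le \tfrac12(\epsilon_T\text{ or }\epsilon_\theta)\cdot\frac{R_\text{max}}{1-\gamma}$ — this is where the factor $\frac{R_\text{max}}{2(1-\gamma)}$ originates, using $\|\bar V^\ast\|_\infty\le R_\text{max}/(1-\gamma)$ together with \cref{thm:lipschitz}-style range control. Collecting terms gives a Bellman residual of size $\epsilon_R + \gamma(\epsilon_T+\epsilon_\theta)\frac{R_\text{max}}{2(1-\gamma)}$, and the stated inequality follows.

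I expect the main obstacle to be the $\epsilon_\theta$ term: one must translate the learned-embedding property, which is phrased via the (2-)Wasserstein distance between the parametrized next-state distributions, into an $\ell_1$/coupling bound on $\bar T_\theta$ vs.\ $\bar T_{\hat\theta}$ that composes cleanly with the lifted value function — including reconciling the Gaussian $W_2$ closed form used in the algorithm with the $\ell_1$ ground metric used for $\epsilon_T$ in the finite-state analysis, and making sure aggregation (the push-forward onto $\phi$-classes) does not inflate the constant. The remaining work — verifying the aggregated reward and transition are genuine convex combinations, and the bookkeeping of which terms recurse through the contraction versus which enter only through a single backup — is routine.
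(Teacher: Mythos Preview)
Your proposal is correct and arrives at the same bound by the same mechanism as the paper: the Jiang-style approximate-abstraction bound, with a triangle inequality on the lifted transitions that splits the dynamics mismatch into the within-task abstraction error $\epsilon_T$ and the task-embedding error $\epsilon_\theta$. The only difference is packaging. The paper does not unroll the Bellman residual; it invokes the single-task result (\cref{thm:single_task_bisim}, i.e.\ Theorem~2 of \citet{jiangNotesStateAbstractionsa}) as a black box and reduces to it by showing that, when the abstraction is built with $\hat\theta$ but evaluated on $\mathcal{M}_\theta$, the \emph{effective} transition-abstraction error is at most $\epsilon_T+\|\theta-\hat\theta\|_1=\epsilon_T+\epsilon_\theta$, via a ``super-MDP'' over the two parameter values and a one-line triangle inequality on $\|\Phi T_{\theta}(x_1,a)-\Phi T_{\hat\theta}(x_2,a)\|_1$. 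You instead introduce the intermediate abstract MDP $\bar{\mathcal{M}}_\theta$ and re-derive the span-seminorm/contraction argument inline, inserting $\bar T_\theta$ as an extra waypoint; this is more explicit but otherwise equivalent. Your flagged obstacle --- reconciling the Wasserstein-based definition of $d(\theta_i,\theta_j)$ in \cref{eq:theta} with the $\ell_1$ bound needed on the lifted transitions --- is real, and the paper's proof glosses over it in exactly the same place, simply writing $\sup_{x,a}\|\Phi T_{\theta_i}(x,a)-\Phi T_{\theta_j}(x,a)\|_1=\|\theta_i-\theta_j\|_1$ without further comment.
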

Proof in \cref{sec:proofs}.
As in the HiP-MDP setting, we can measure the transferability of a specific policy $\pi$ learned on one task to another, now taking into account error from the learned representation.
\begin{theorem}[Transfer bound]
\label{thm:transfer_bound}
Given two MDPs $\mathcal{M}_{\theta_i}$ and $\mathcal{M}_{\theta_j}$, we can bound the difference in $Q^\pi$ between the two MDPs for a given policy $\pi$ learned under an $\epsilon_R,\epsilon_T,\epsilon_{\theta_i}$-approximate abstraction of $\mathcal{M}_{\theta_i}$ and applied to 
\begin{equation*}
\big\| Q^*_{\mathcal{M}_{\theta_j}} - [Q^*_{\bar{\mathcal{M}}_{\hat{\theta_i}}}]_{\mathcal{M}_{\theta_j}}\big\|_\infty \leq \epsilon_R + \gamma \big(\epsilon_T + \epsilon_{\theta_i} + \|\theta_i - \theta_j\|_1\big) \frac{R_\text{max}}{2(1-\gamma)}.
\end{equation*}
\end{theorem}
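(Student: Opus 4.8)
The plan is to reduce the Transfer bound to the $Q$-error bound of \cref{thm:qerror_hipbmdp} by inserting an intermediate MDP and applying the triangle inequality on the $\|\cdot\|_\infty$ norm over $Q$-functions. Concretely, I would compare $Q^*_{\mathcal{M}_{\theta_j}}$ with $[Q^*_{\bar{\mathcal{M}}_{\hat\theta_i}}]_{\mathcal{M}_{\theta_j}}$ by routing through $Q^*_{\mathcal{M}_{\theta_i}}$: first bound $\|Q^*_{\mathcal{M}_{\theta_j}} - Q^*_{\mathcal{M}_{\theta_i}}\|_\infty$ purely in terms of the task distance $d(\theta_i,\theta_j)$ defined in \cref{eq:theta}, and second bound $\|Q^*_{\mathcal{M}_{\theta_i}} - [Q^*_{\bar{\mathcal{M}}_{\hat\theta_i}}]_{\mathcal{M}_{\theta_j}}\|_\infty$ using the representation-and-$\theta$ error pieces $\epsilon_R,\epsilon_T,\epsilon_{\theta_i}$, essentially reusing \cref{thm:qerror_hipbmdp}.

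For the first piece, the two MDPs $\mathcal{M}_{\theta_i}$ and $\mathcal{M}_{\theta_j}$ share state space, action space, and reward function, and differ only in their transition kernels $T_{\theta_i}$ and $T_{\theta_j}$. I would write the Bellman optimality operators $\mathcal{T}_i$ and $\mathcal{T}_j$ for the two MDPs and bound $\|\mathcal{T}_i Q - \mathcal{T}_j Q\|_\infty$ for a fixed $Q$; since the reward terms cancel, this is $\gamma \sup_{s,a} |(T_{\theta_i}(s,a) - T_{\theta_j}(s,a)) \cdot \max_{a'} Q(\cdot,a')|$, which by the dual (Kantorovich--Rubinstein) form of $W_1$ used in the paper is at most $\gamma \cdot \mathrm{Lip}(\max_{a'}Q) \cdot \sup_{s,a} W(T_{\theta_i}(s,a), T_{\theta_j}(s,a)) = \gamma \cdot \mathrm{Lip}(\max_{a'}Q) \cdot \|\theta_i-\theta_j\|_1$. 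Using the standard fact that $V^* = \max_{a'} Q^*$ has values in $[0, R_{\max}/(1-\gamma)]$, one gets a bound with the constant $\frac{R_{\max}}{2(1-\gamma)}$ after the usual contraction argument ($\|Q^*_i - Q^*_j\|_\infty \le \frac{1}{1-\gamma}\|\mathcal{T}_i Q^*_j - \mathcal{T}_j Q^*_j\|_\infty$), so this piece contributes exactly $\gamma\|\theta_i-\theta_j\|_1 \frac{R_{\max}}{2(1-\gamma)}$, which is the extra additive term in the claimed bound relative to \cref{thm:qerror_hipbmdp}.

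For the second piece, I would invoke \cref{thm:qerror_hipbmdp} applied to the instance $\mathcal{M}_{\theta_i}$ with its $(\epsilon_R,\epsilon_T,\epsilon_{\theta_i})$-approximate bisimulation abstraction, giving $\|Q^*_{\mathcal{M}_{\theta_i}} - [Q^*_{\bar{\mathcal{M}}_{\hat\theta_i}}]_{\mathcal{M}_{\theta_i}}\|_\infty \le \epsilon_R + \gamma(\epsilon_T + \epsilon_{\theta_i})\frac{R_{\max}}{2(1-\gamma)}$. The subtlety is that the theorem statement evaluates $Q^*_{\bar{\mathcal{M}}_{\hat\theta_i}}$ on $\mathcal{M}_{\theta_j}$, not on $\mathcal{M}_{\theta_i}$; since $\bar{\mathcal{M}}_{\hat\theta_i}$ and its lifted $Q$-function are defined through the learned representation $\phi$ which is shared across the HiP-BMDP family, the "evaluation" is just pushing the abstract $Q$ through $\phi$ and does not itself depend on which environment's dynamics we then compare against — so $[Q^*_{\bar{\mathcal{M}}_{\hat\theta_i}}]_{\mathcal{M}_{\theta_j}}$ and $[Q^*_{\bar{\mathcal{M}}_{\hat\theta_i}}]_{\mathcal{M}_{\theta_i}}$ are the same function on the (shared) observation space, and the triangle inequality then closes the argument.

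The main obstacle I anticipate is making precise and rigorous the claim that the lifted evaluation $[\cdot]_{\mathcal{M}_{\theta}}$ is environment-independent (so that the two pieces can be glued), and tracking the Lipschitz constant of $V^*$ carefully enough to land the factor $\frac{1}{2}$ in $\frac{R_{\max}}{2(1-\gamma)}$ rather than a looser $\frac{1}{1-\gamma}$; the factor of $\tfrac12$ typically comes from bounding $|(P-Q)\cdot f|$ by $\tfrac12(\max f - \min f)\|P-Q\|_1$ together with a Wasserstein/total-variation comparison, so I would need to be careful that the $W_1$-based task distance interacts correctly with the range of $V^*$. Everything else is a routine contraction-mapping and triangle-inequality bookkeeping exercise.
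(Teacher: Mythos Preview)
Your route via a triangle inequality on $Q$-functions is different from the paper's and, as written, does not recover the stated constant. The paper's argument is a one-liner: the computation already carried out in the proof of \cref{thm:qerror_hipbmdp} shows that the abstraction $(\phi,\hat\theta_i)$, viewed as an abstraction of $\mathcal{M}_{\theta_j}$, is an $(\epsilon_R,\ \epsilon_T+\epsilon_{\theta_i}+\|\theta_i-\theta_j\|_1)$-approximate bisimulation abstraction, since by one more step of the triangle inequality on lifted transition kernels
\[
\sup_{\substack{a,\,x_1,x_2:\\ \phi(x_1)=\phi(x_2)}}\big\|\Phi T_{\theta_j}(x_1,a)-\Phi T_{\hat\theta_i}(x_2,a)\big\|_1 \le \epsilon_T + \epsilon_{\theta_i} + \|\theta_i-\theta_j\|_1.
\]
Then \cref{thm:single_task_bisim} (equivalently \cref{thm:qerror_hipbmdp}) is applied once to $\mathcal{M}_{\theta_j}$. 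Because the task distance enters the transition-error slot \emph{before} the single Jiang-style bound is invoked, it is multiplied only by $\gamma\,\frac{R_{\max}}{2(1-\gamma)}$.

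In your decomposition, piece~1 compares two \emph{optimal} $Q$-functions, and the contraction step you quote, $\|Q^*_i-Q^*_j\|_\infty\le\frac{1}{1-\gamma}\|\mathcal T_iQ^*_j-\mathcal T_jQ^*_j\|_\infty$, already costs one factor of $\frac{1}{1-\gamma}$. Bounding the one-step Bellman mismatch using the range of $V^*_j$ in $[0,R_{\max}/(1-\gamma)]$ then gives
\[
\|Q^*_{\mathcal{M}_{\theta_i}}-Q^*_{\mathcal{M}_{\theta_j}}\|_\infty \;\le\; \gamma\,\|\theta_i-\theta_j\|_1\,\frac{R_{\max}}{2(1-\gamma)^2},
\]
not $\gamma\,\|\theta_i-\theta_j\|_1\,\frac{R_{\max}}{2(1-\gamma)}$; this is exactly the $(1-\gamma)^{-2}$ scaling you see in \cref{thm:opt_val}. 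So routing through $Q^*_{\mathcal{M}_{\theta_i}}$ yields a bound that is looser by a factor of $\frac{1}{1-\gamma}$ on the $\|\theta_i-\theta_j\|_1$ term. The fix is precisely what the paper does: push the triangle inequality down to the level of the one-step transition errors and apply the abstraction bound a single time, rather than splitting at the level of infinite-horizon $Q$-functions. Your observation that $[Q^*_{\bar{\mathcal{M}}_{\hat\theta_i}}]_{\mathcal{M}_{\theta_j}}$ and $[Q^*_{\bar{\mathcal{M}}_{\hat\theta_i}}]_{\mathcal{M}_{\theta_i}}$ coincide (since the lift is through the shared $\phi$) is correct and harmless, but it is not needed in the paper's direct route.
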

This result clearly follows directly from \cref{thm:qerror_hipbmdp}. Given a policy learned for task $i$, \cref{thm:transfer_bound} gives a bound on how far from optimal that policy is when applied to task $j$. Intuitively, the more similar in behavior tasks $i$ and $j$ are, as denoted by $\|\theta_i - \theta_j\|_1$, the better $\pi$ performs on task $j$.

\paragraph{Finite Sample Analysis.}
In MDPs (or families of MDPs) with large state spaces, it can be unrealistic to assume that all states are visited at least once, in the finite sample regime. Abstractions are useful in this regime for their generalization capabilities. We can instead perform a counting analysis based on the number of samples of any abstract state-action pair. 

We  compute a loss bound with abstraction $\phi$ which depends on the size of the replay buffer $D$, collected over all tasks. Specifically, we define the minimal number of visits to an abstract state-action pair, $n_\phi(D)=\min_{x\in\phi(\mathcal{S}),a\in \mathcal{A}}|D_{x,a}|.$
This sample complexity bound relies on a Hoeffding-style inequality, and therefore requires that the samples in $D$ be independent, which is usually not the case when trajectories are sampled. 
\begin{theorem}[Sample Complexity]
\label{thm:sample_complexity}
For any $\phi$ which defines an $(\epsilon_R,\epsilon_T,\epsilon_\theta)$-approximate bisimulation abstraction on a HiP-BMDP family $\mathcal{M}_\Theta$, we define the empirical measurement of $Q^*_{\bar{\mathcal{M}}_{\hat{\theta}}}$ over $D$ to be $Q^*_{\bar{\mathcal{M}}^D_{\hat{\theta}}}$. Then, with probability $\geq 1-\delta$,
\vspace{-5pt}
\begin{equation}
     \big\| Q^*_{\mathcal{M}_\theta} - [Q^*_{\bar{\mathcal{M}}^D_{\hat{\theta}}}]_{\mathcal{M}_\theta}\big\|_\infty \leq \epsilon_R + \gamma (\epsilon_T + \epsilon_\theta) \frac{R_\text{max}}{2(1-\gamma)} + \frac{R_\text{max}}{(1-\gamma)^2}\sqrt{\frac{1}{2n_\phi(D)}\log\frac{2|\phi(\mathcal{X})||\mathcal{A}|}{\delta}}.
\end{equation}
\vspace{-5pt}
\end{theorem}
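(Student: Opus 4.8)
My plan is to decompose the error into two pieces via the triangle inequality: the "model bias" piece $\|Q^*_{\mathcal{M}_\theta} - [Q^*_{\bar{\mathcal{M}}_{\hat\theta}}]_{\mathcal{M}_\theta}\|_\infty$, which is already controlled by Theorem~\ref{thm:qerror_hipbmdp}, and the "estimation" piece $\|[Q^*_{\bar{\mathcal{M}}_{\hat\theta}}]_{\mathcal{M}_\theta} - [Q^*_{\bar{\mathcal{M}}^D_{\hat\theta}}]_{\mathcal{M}_\theta}\|_\infty$, which measures how far the optimal $Q$-function of the empirically-estimated abstract MDP is from that of the population abstract MDP, when both are evaluated (lifted) on $\mathcal{M}_\theta$. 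Since the first term is exactly the bound $\epsilon_R + \gamma(\epsilon_T+\epsilon_\theta)\frac{R_{\max}}{2(1-\gamma)}$ from Theorem~\ref{thm:qerror_hipbmdp}, all the work is in bounding the second term by $\frac{R_{\max}}{(1-\gamma)^2}\sqrt{\frac{1}{2n_\phi(D)}\log\frac{2|\phi(\mathcal{X})||\mathcal{A}|}{\delta}}$, after which I add the two and invoke the probability bound.

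For the estimation term, the key observation is that $\bar{\mathcal{M}}_{\hat\theta}$ and $\bar{\mathcal{M}}^D_{\hat\theta}$ live on the same finite abstract state-action space $\phi(\mathcal{X})\times\mathcal{A}$; they differ only because the empirical reward $\hat R$ and empirical transition kernel $\hat P$ are sample averages over the $|D_{x,a}|$ transitions seen for each abstract pair $(x,a)$, rather than the population quantities. First I would write, for each abstract state-action pair, the reward estimate as an average of $|D_{x,a}|\ge n_\phi(D)$ i.i.d. bounded random variables in $[0,R_{\max}]$ (invoking the independence caveat already flagged in the text) and the next-abstract-state distribution estimate likewise as an empirical distribution. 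A Hoeffding bound gives $|\hat R(x,a) - R(x,a)| \le R_{\max}\sqrt{\frac{1}{2n_\phi(D)}\log\frac{2}{\delta'}}$, and a similar Hoeffding/Bretagnolle–Huber-style argument controls the total-variation (i.e.\ $\ell_1$) deviation of the transition estimate with the same rate. Taking a union bound over all $|\phi(\mathcal{X})||\mathcal{A}|$ pairs with $\delta' = \delta/(|\phi(\mathcal{X})||\mathcal{A}|)$ produces the $\log\frac{2|\phi(\mathcal{X})||\mathcal{A}|}{\delta}$ factor. Then a standard simulation-lemma / perturbation argument for $Q^*$ under simultaneous reward and transition perturbations — of the form $\|Q^*_{M_1} - Q^*_{M_2}\|_\infty \le \frac{1}{1-\gamma}\|R_1-R_2\|_\infty + \frac{\gamma R_{\max}}{(1-\gamma)^2}\sup_{x,a}\|P_1(\cdot|x,a)-P_2(\cdot|x,a)\|_1$ — converts the reward and transition deviations into the $\frac{R_{\max}}{(1-\gamma)^2}$-scaled bound on $\|Q^*_{\bar{\mathcal{M}}_{\hat\theta}} - Q^*_{\bar{\mathcal{M}}^D_{\hat\theta}}\|_\infty$, and lifting to $\mathcal{M}_\theta$ preserves the sup-norm. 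Combining with the first term finishes the proof.

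I expect the main obstacle to be getting the constants to line up exactly as stated — in particular, showing that both the reward-perturbation contribution and the transition-perturbation contribution can be folded into a single $\frac{R_{\max}}{(1-\gamma)^2}\sqrt{\frac{1}{2n_\phi(D)}\log\frac{2|\phi(\mathcal{X})||\mathcal{A}|}{\delta}}$ rather than a sum of two such terms with different powers of $(1-\gamma)$. This likely requires either a slightly loose step (bounding $\frac{1}{1-\gamma}\le\frac{1}{(1-\gamma)^2}$ and $R_{\max}\le\frac{R_{\max}}{1-\gamma}$ so the reward term is dominated) together with a careful choice of how the Hoeffding confidence is split between the reward and transition events, or appealing to a combined concentration bound for the one-step Bellman backup $\hat{\mathcal T}Q$ directly rather than bounding $\hat R$ and $\hat P$ separately — the latter is cleaner and is the route I would take, since it concentrates a single bounded quantity (the empirical Bellman target, which lies in $[0, R_{\max}/(1-\gamma)]$) and then propagates the fixed-point error by the contraction factor $1/(1-\gamma)$, naturally producing the $\frac{R_{\max}}{(1-\gamma)^2}$ prefactor. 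The independence assumption needed for Hoeffding is acknowledged in the theorem's preamble, so I would simply state it as a hypothesis rather than attempt to remove it.
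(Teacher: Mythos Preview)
Your proposal is correct and, in the route you ultimately commit to (concentrate the empirical one-step Bellman backup directly and then propagate via the $\gamma$-contraction to pick up the extra $1/(1-\gamma)$), is essentially identical to the paper's proof: the paper splits by the triangle inequality, invokes Theorem~\ref{thm:qerror_hipbmdp} for the first term, and for the second term writes $\|Q^*_{\bar{\mathcal{M}}_{\hat\theta}} - Q^*_{\bar{\mathcal{M}}^D_{\hat\theta}}\|_\infty \le \frac{1}{1-\gamma}\|\mathcal{T}^\phi_D Q^*_{\bar{\mathcal{M}}_{\hat\theta}} - \mathcal{T}^\phi Q^*_{\bar{\mathcal{M}}_{\hat\theta}}\|_\infty$ before applying a bounded-differences concentration (they name McDiarmid rather than Hoeffding, but for a sample mean of $[0,R_{\max}/(1-\gamma)]$-valued targets these coincide) and the union bound over $|\phi(\mathcal{X})||\mathcal{A}|$ pairs. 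Your anticipated obstacle and its resolution are exactly the crux of the paper's argument, so there is nothing to add.
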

This performance bound applies to all tasks in the family and has two terms that are affected by using a state abstraction: the number of samples $n_\phi(D)$, and the size of the state space $|\phi(\mathcal{X})|$. We know that $|\phi(\mathcal{X})|\leq |\mathcal{X}|$ as behaviorally equivalent states are grouped  together under bisimulation, and $n_\phi(D)$ is the minimal number of visits to any abstract state-action pair, in aggregate over all training environments. This is an improvement over the sample complexity of applying single-task learning without transfer over all tasks, and the method proposed in \citet{brunskill2013mtrl}, which both would rely on the number of tasks or number of MDPs seen.

\section{Experiments \& Results}
We use environments from Deepmind Control Suite (DMC)~\citep{deepmindcontrolsuite2018} to evaluate our method for learning HiP-BMDPs for both multi-task RL and meta-reinforcement learning settings. 
We consider two setups for evaluation: \textbf{i)} an \textit{interpolation} setup and \textbf{ii)} an \textit{extrapolation} setup where the changes in the dynamics function are interpolations and extrapolations between the changes in the dynamics function of the training environment respectively. This dual-evaluation setup provides a more nuanced understanding of how well the learned model transfers across the environments. Implementation details can be found  in \cref{app:implementation} and sample videos of policies at \mbox{\url{https://sites.google.com/view/hip-bmdp}}.

\begin{wrapfigure}{r}{0.29\linewidth}
    \centering
    \vspace{-20pt}
    \includegraphics[width=0.3\linewidth]{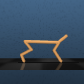}
    \includegraphics[width=0.3\linewidth]{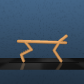}
    \includegraphics[width=0.3\linewidth]{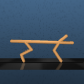}
    \caption{\small Variation in \texttt{Cheetah-Run-V0} tasks.}
    \vspace{-10pt}
    \label{fig::env::cheetah-v0}
\end{wrapfigure}
\textbf{Environments.}
\label{subsec::environments}
We create a family of MDPs using the existing environment-task pairs from DMC and change one environment parameter to sample different MDPs. 
We denote this parameter as the \textit{perturbation-}parameter. We consider the following HiP-BMDPs: 1. \texttt{Cartpole-Swingup-V0}: the mass of the pole varies, 2. \texttt{Cheetah-Run-V0}: the size of the torso varies, 3. \texttt{Walker-Run-V0}: the friction coefficient between the ground and the walker's legs varies, 4. \texttt{Walker-Run-V1}: the size of left-foot of the walker varies, and 5. \texttt{Finger-Spin-V0}: the size of the finger varies. We show an example of the different pixel observations for \texttt{Cheetah-Run-V0} in Figure \ref{fig::env::cheetah-v0}. Additional environment details are in \cref{app:implementation}.

We sample 8 MDPs from each MDP family by sampling different values for the \textit{perturbation-}parameter. The MDPs are arranged in order of increasing values of the \textit{perturbation-}parameter such that we can induce an order over the family of MDPs. We denote the ordered MDPs as $A - H$. MDPs $\{B, C, F, G\}$ are training environments and $\{D, E\}$ are used for evaluating the model in the interpolation setup (i.e. the value of the \textit{perturbation-}parameter can be obtained by interpolation). MDPs $\{A, H\}$ are for evaluating the model in the extrapolation setup (i.e. the value of the \textit{perturbation-}parameter can be obtained by extrapolation). 
We evaluate the learning agents by computing average reward (over 10 episodes) achieved by the policy after training for a fixed number of steps. All experiments are run for 10 seeds, with mean and standard error reported in the plots.

\textbf{Multi-Task Setting.}
We first consider a multi-task setup where the agent is trained on four related, but different environments with pixel observations. We compare our method, HiP-BMDP, with the following baselines and ablations: \textbf{i)} \textit{DeepMDP}~\citep{gelada2019deepmdp} where we aggregate data across all training environments, \textbf{ii)} \textit{HiP-BMDP-nobisim}, HiP-BMDP without the task bisimulation metric loss on task embeddings, iii) \textit{Distral}, an ensemble of policies trained using the Distral algorithm~\citep{distral} with \textit{SAC-AE} \citep{yarats2019improving} as the underlying policy, iv) PCGrad~\citep{pcgrad}, and v) GradNorm~\citep{gradnorm}. For all models, the agent sequentially performs one update per environment. For fair comparison, we ensure that baselines have at least as many parameters as HiP-BMDP. \textit{Distral} has more parameters as it trains one policy per environment. Additional implementation details about baselines are in \cref{sec:baselines}.

In Figures \ref{fig::multi_task::eval_and_extrapolation}, and \ref{fig::multi_task::interpolation} (in Appendix), we observe that for all the models, performance deteriorates when evaluated on interpolation/extrapolation environments. We only report extrapolation results in the main paper because of space constraints, as they were very similar to the interpolation performance. The gap between the HiP-BMDP model and other baselines also widens, showing that the proposed approach is relatively more robust to changes in environment dynamics.

At training time  (\cref{fig::multi_task::eval_and_extrapolation::train} in Appendix), we observe that HiP-BMDP consistently outperforms other baselines on all the environments. The success of our proposed method can not be attributed to task embeddings alone as HiP-BMDP-nobisim also uses task embeddings.  Moreover, only incorporating the task-embeddings is not guaranteed to improve performance in all the environments (as can be seen in the case of \texttt{Cheetah-Run-V0}). We also note that the multi-task learning baselines like Distral, PCGrad, and GradNorm sometimes lag behind even the DeepMDP baseline, perhaps because they do not leverage a shared global dynamics model.

\begin{figure}[t]
\vspace{-15pt}
\includegraphics[width=0.245\linewidth]{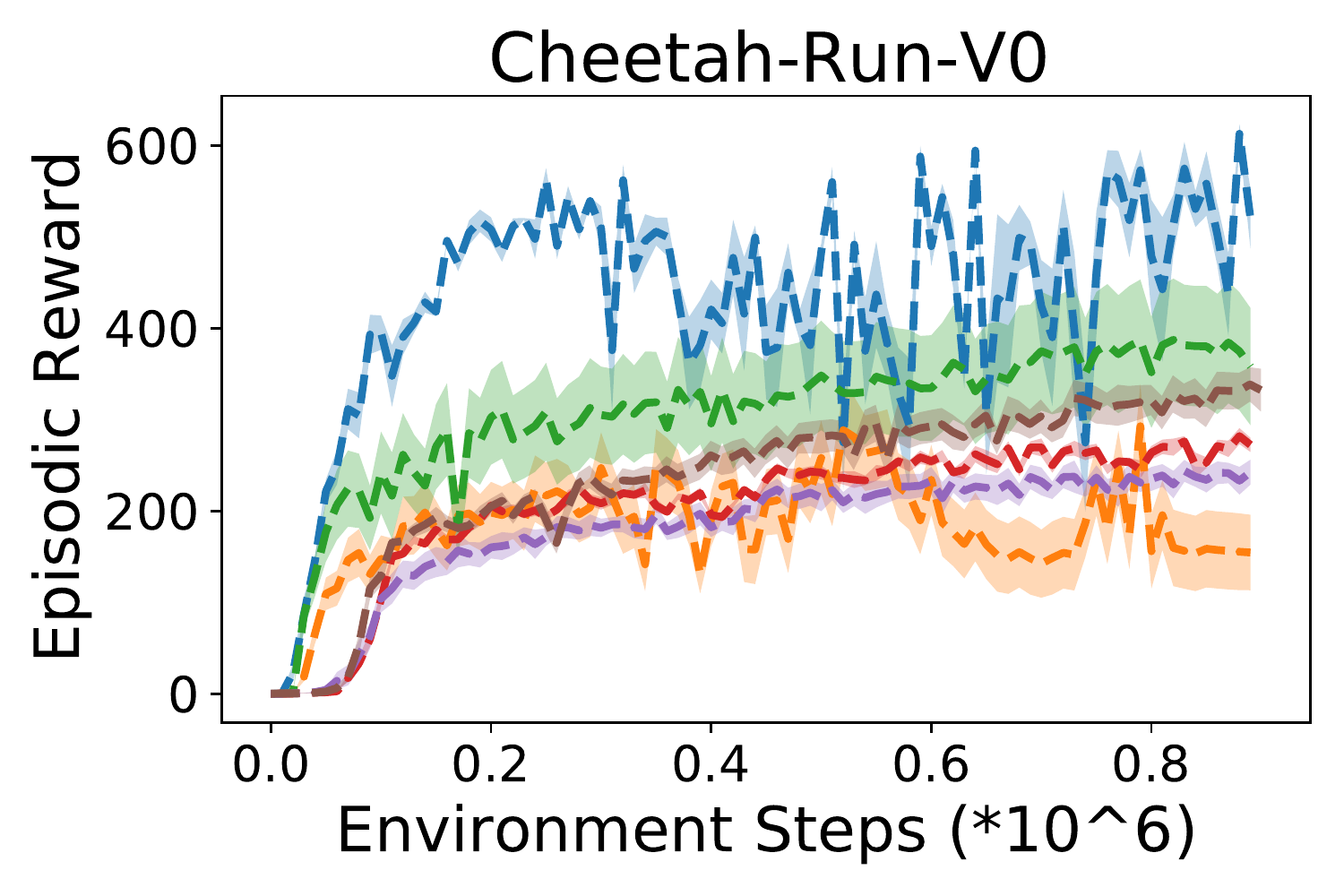}
\includegraphics[width=0.245\linewidth]{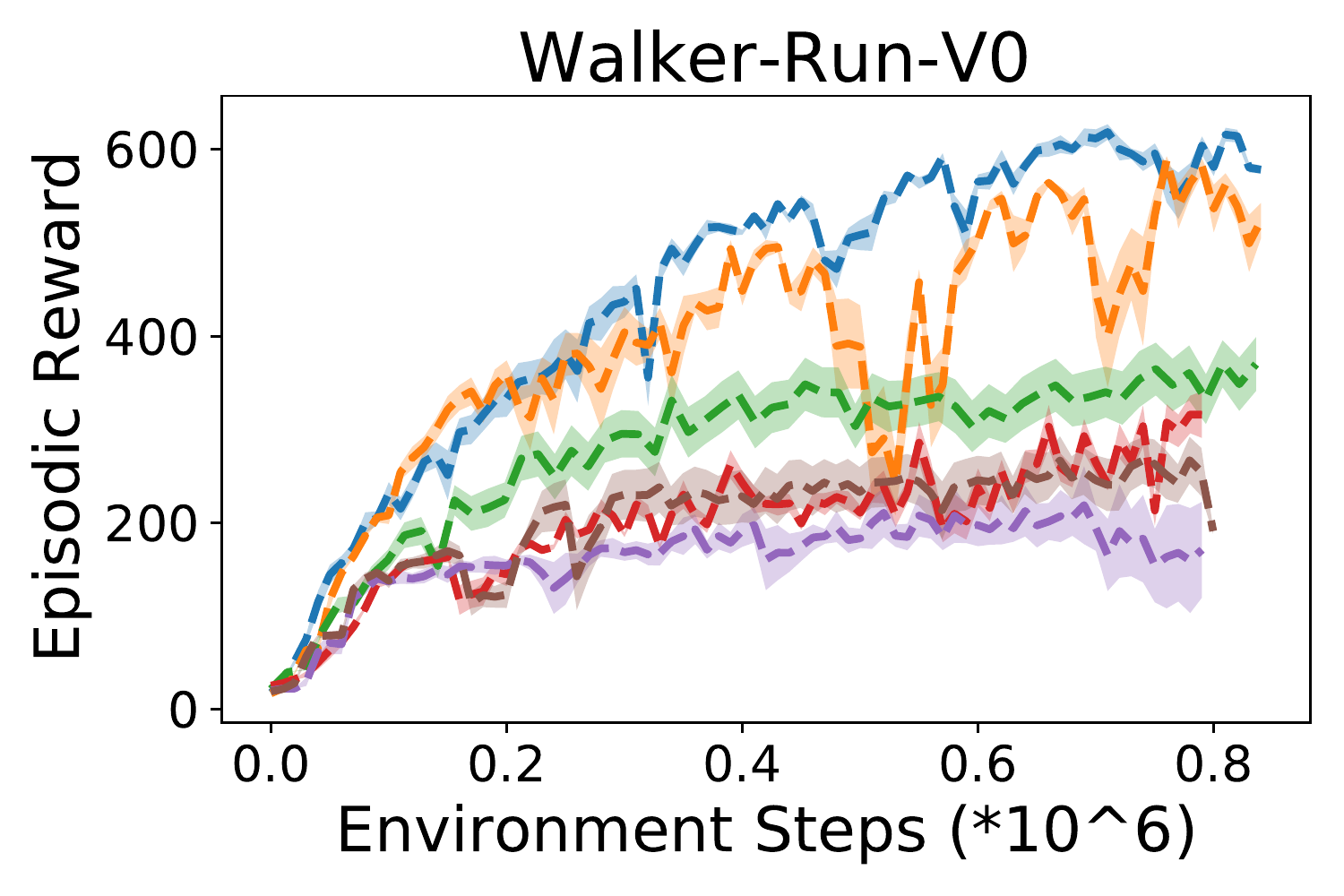}
\includegraphics[width=0.245\linewidth]{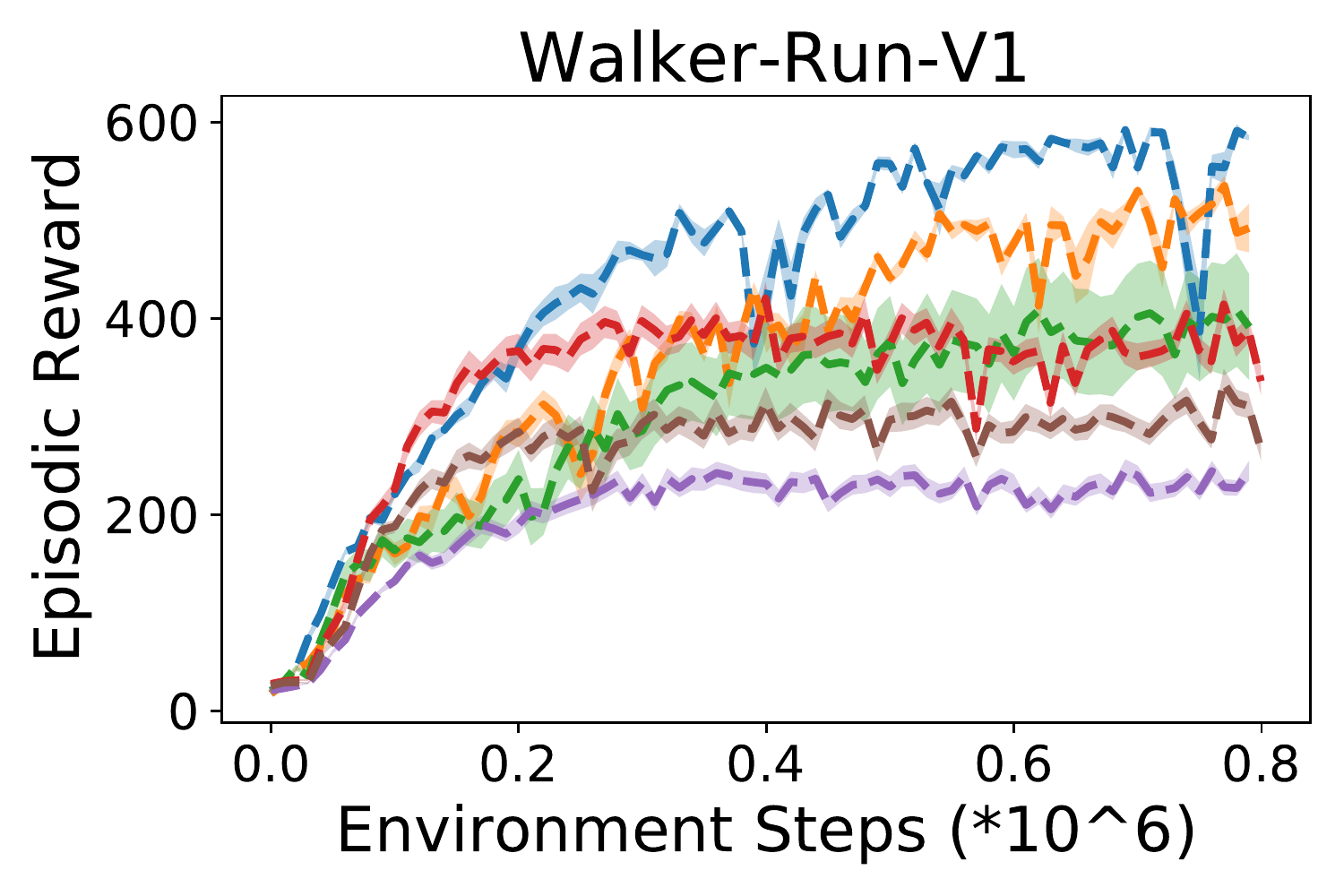}
\includegraphics[width=0.245\linewidth]{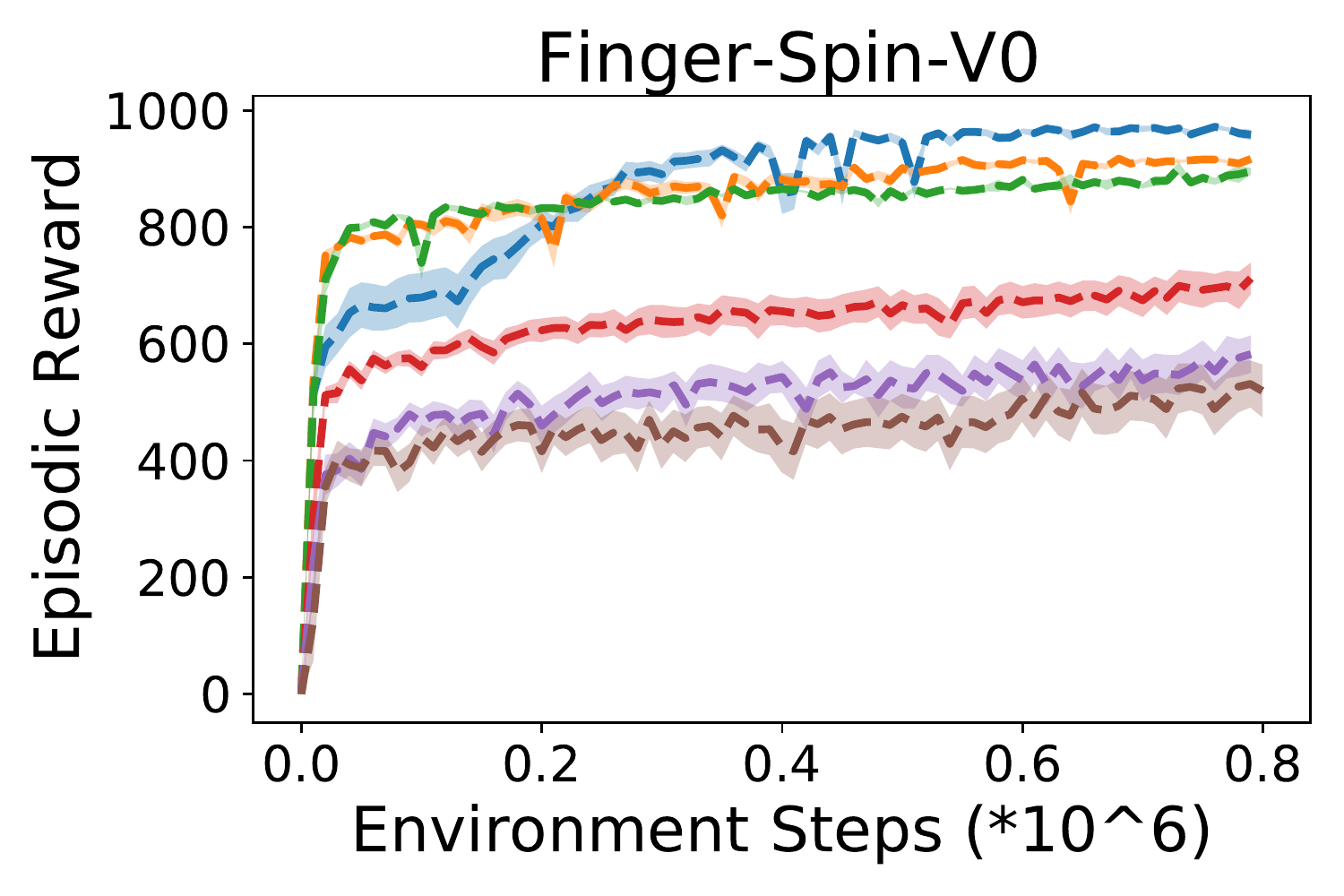}
\centering
\includegraphics[width=1\linewidth]{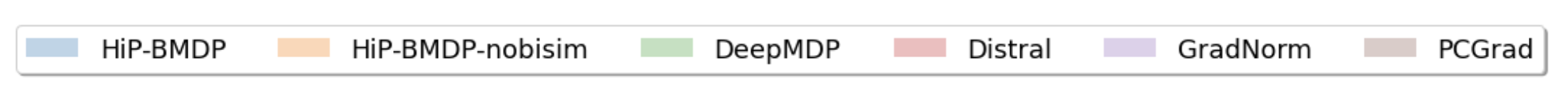}
\vspace{-10pt}
\caption{\small Multi-Task Setting. Zero-shot generalization performance on the extrapolation tasks.  We see that our method, HiP-BMDP, performs best against all baselines across all environments. Note that the environment steps (on the x-axis) denote the environment steps for each task. Since we are training over four environments, the actual number of steps is approx. 3.2 million.}
\vspace{-5pt}
\label{fig::multi_task::eval_and_extrapolation}
\end{figure}

\begin{figure}[b]
\vspace{-5pt}
\includegraphics[width=0.246\linewidth]{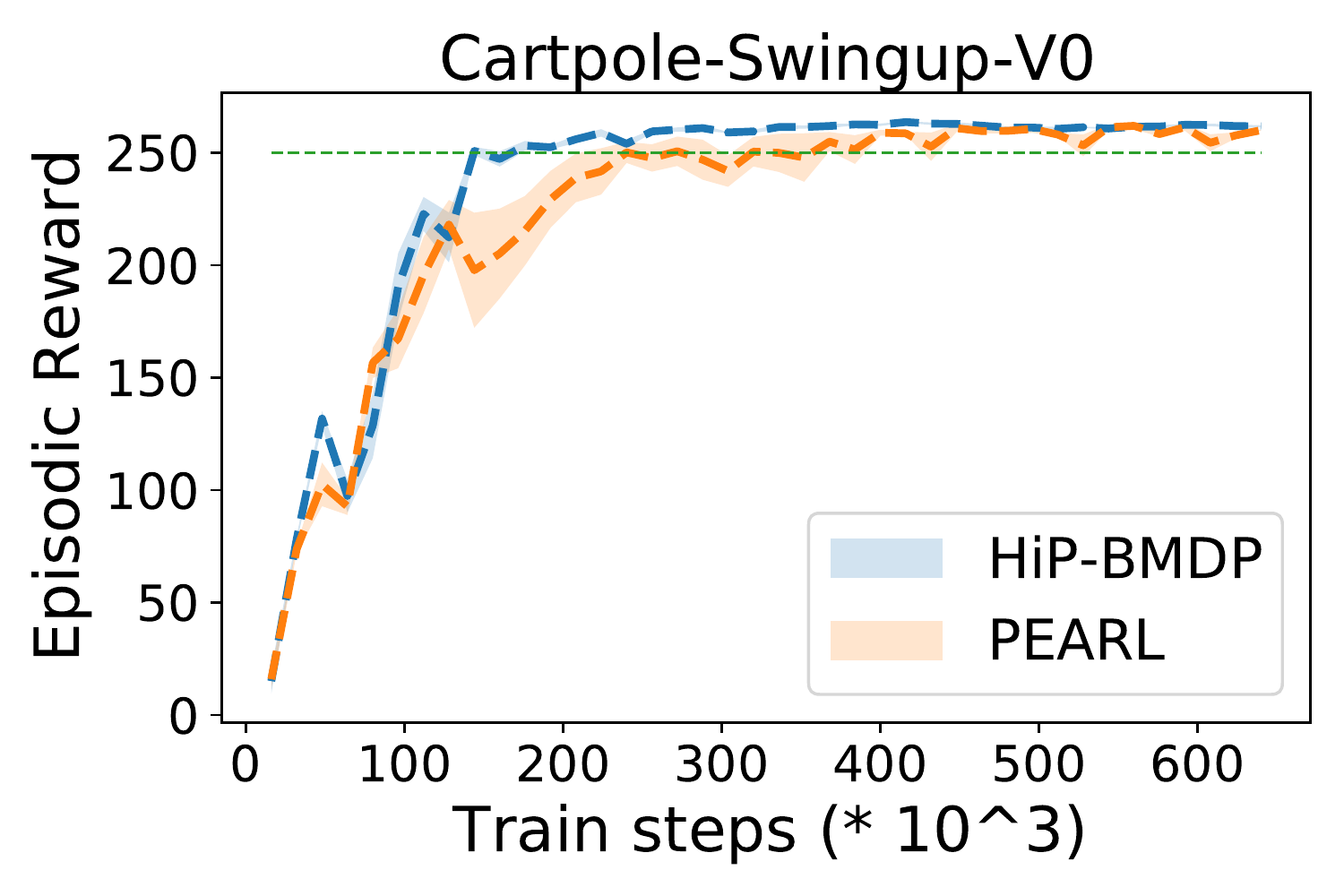}
\includegraphics[width=0.246\linewidth]{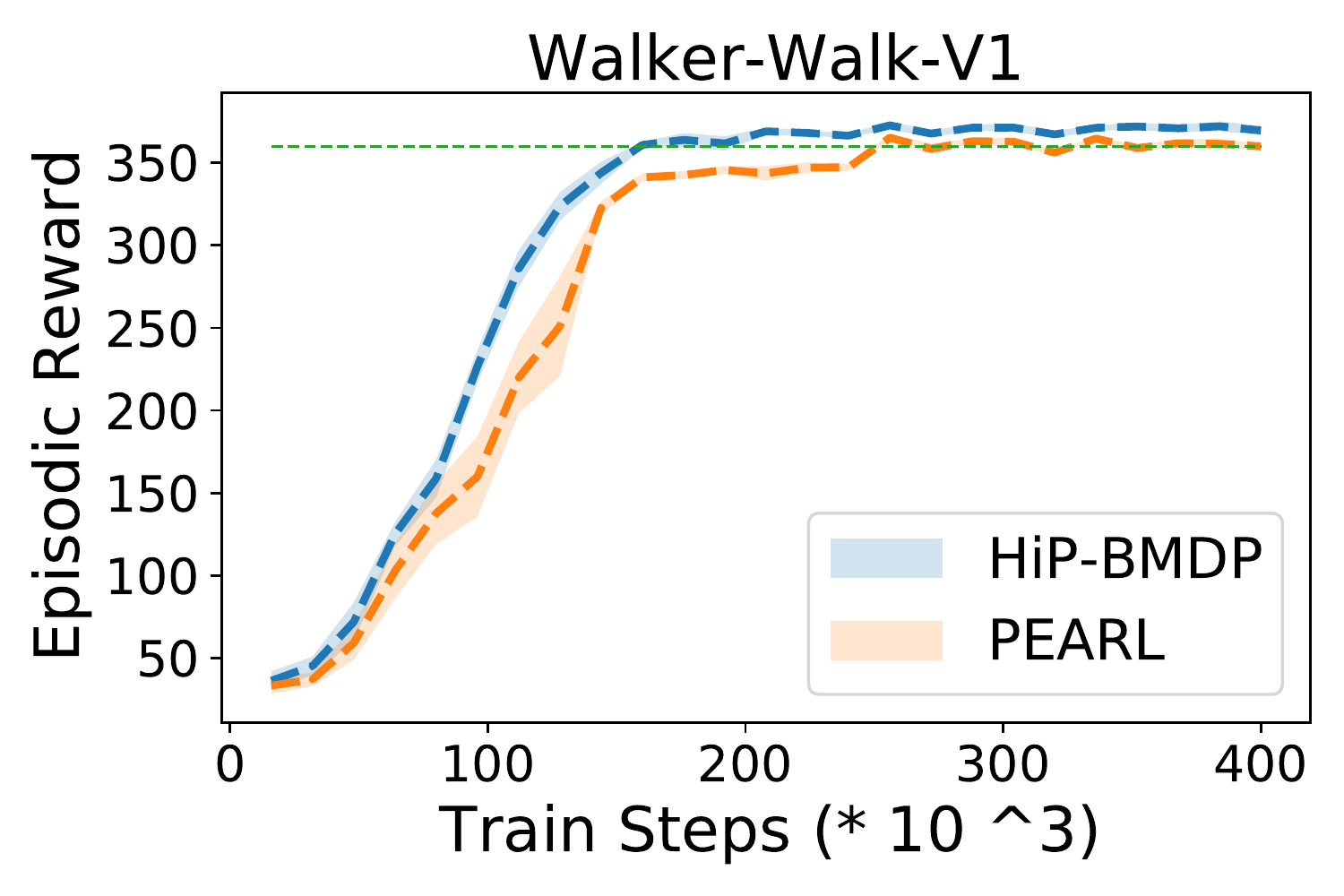}
\includegraphics[width=0.246\linewidth]{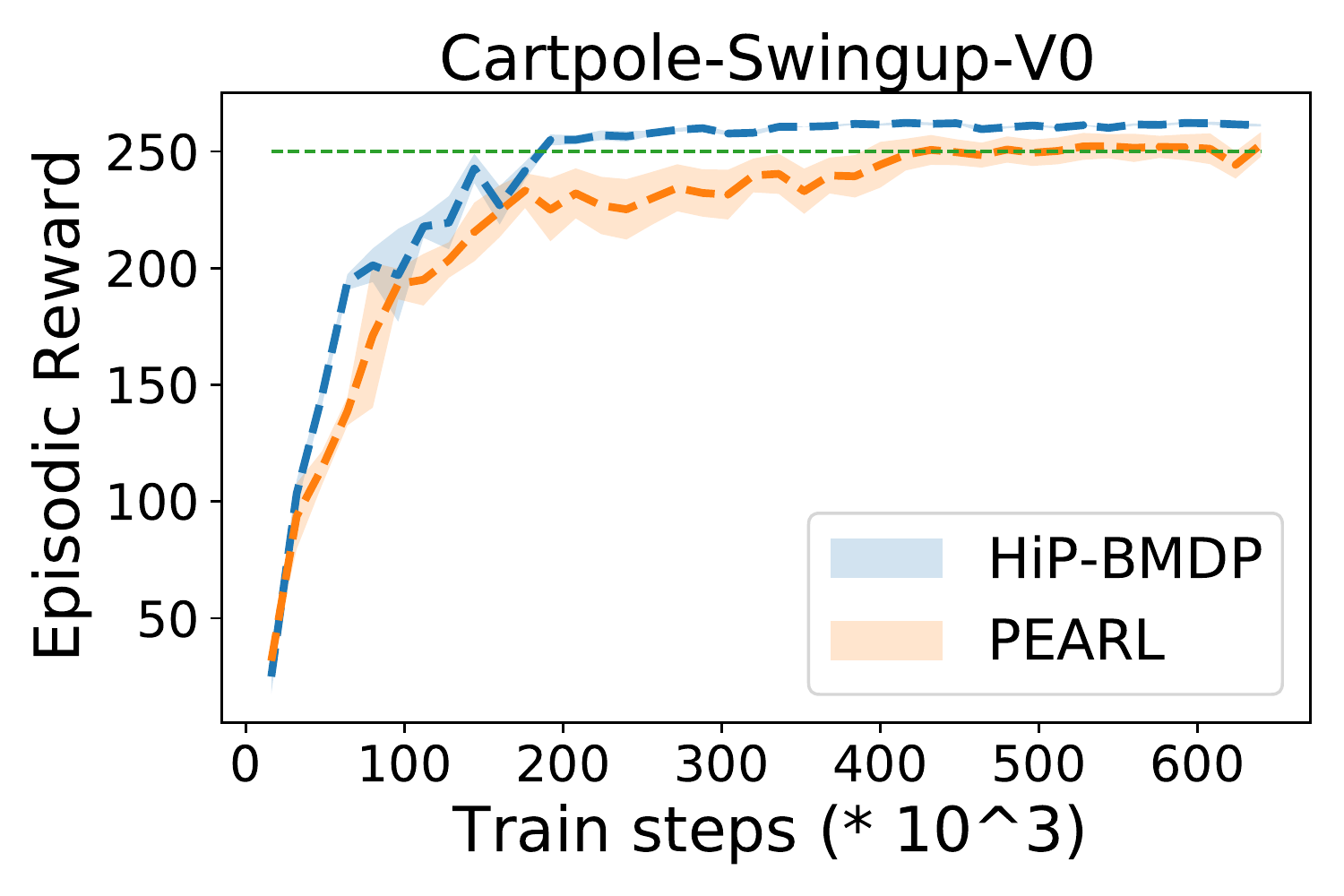}
\includegraphics[width=0.246\linewidth]{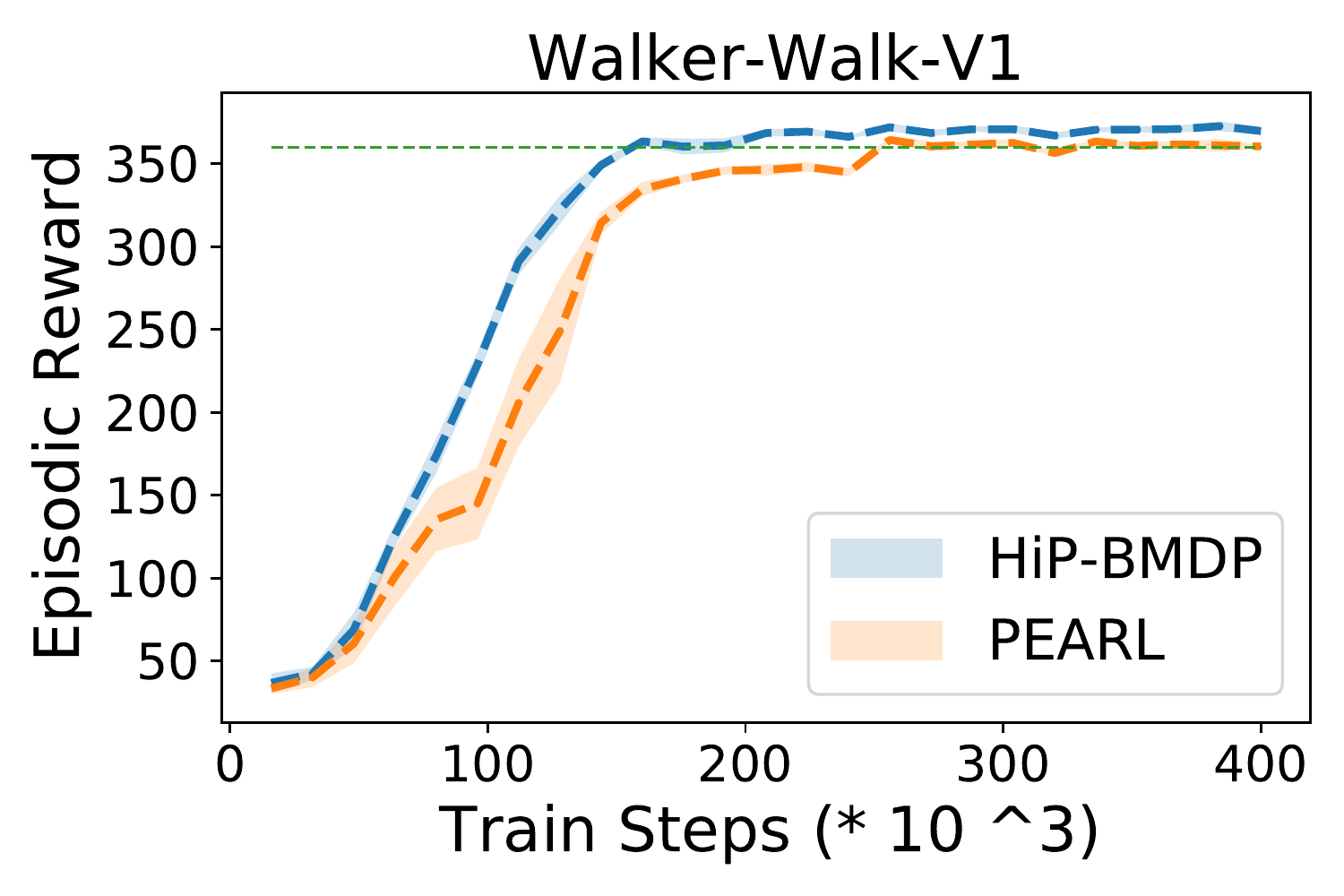}
\vspace{-20pt}
\caption{\small Few-shot generalization performance on the interpolation (2 left) and extrapolation (2 right) tasks. Green line shows a threshold reward. 100 steps are used for adaptation to the evaluation environments. }
\label{fig::metarl}
\vspace{-10pt}
\end{figure}

\textbf{Meta-RL Setting.}
We consider the Meta-RL setup for evaluating the few-shot generalization capabilities of our proposed approach on proprioceptive state, as meta-RL techniques are too time-intensive to train on pixel observations directly. Specifically, we use PEARL \citep{rakelly2019pearl}, an off-policy meta-learning algorithm that uses probabilistic context variables, and is shown to outperform common meta-RL baselines like MAML-TRPO \citep{finn2017model} and ProMP \citep{rothfuss2018promp} on proprioceptive state. 
We incorporate our proposed approach in PEARL by training the \textit{inference network} $q_\phi(\textbf{z} | \textbf{c})$ with our additional HiP-BMDP loss. The algorithm pseudocode can be found in \cref{app:implementation}. In \cref{fig::metarl} we see that the proposed approach (blue) converges faster to  a threshold reward (green) than the baseline for \texttt{Cartpole-Swingup-V0} and \texttt{Walker-Walk-V1}. We provide additional results in \cref{app::results}.

\begin{wrapfigure}{r}{0.33\linewidth}
    \centering
    \vspace{-20pt}
    \includegraphics[width=1\linewidth]{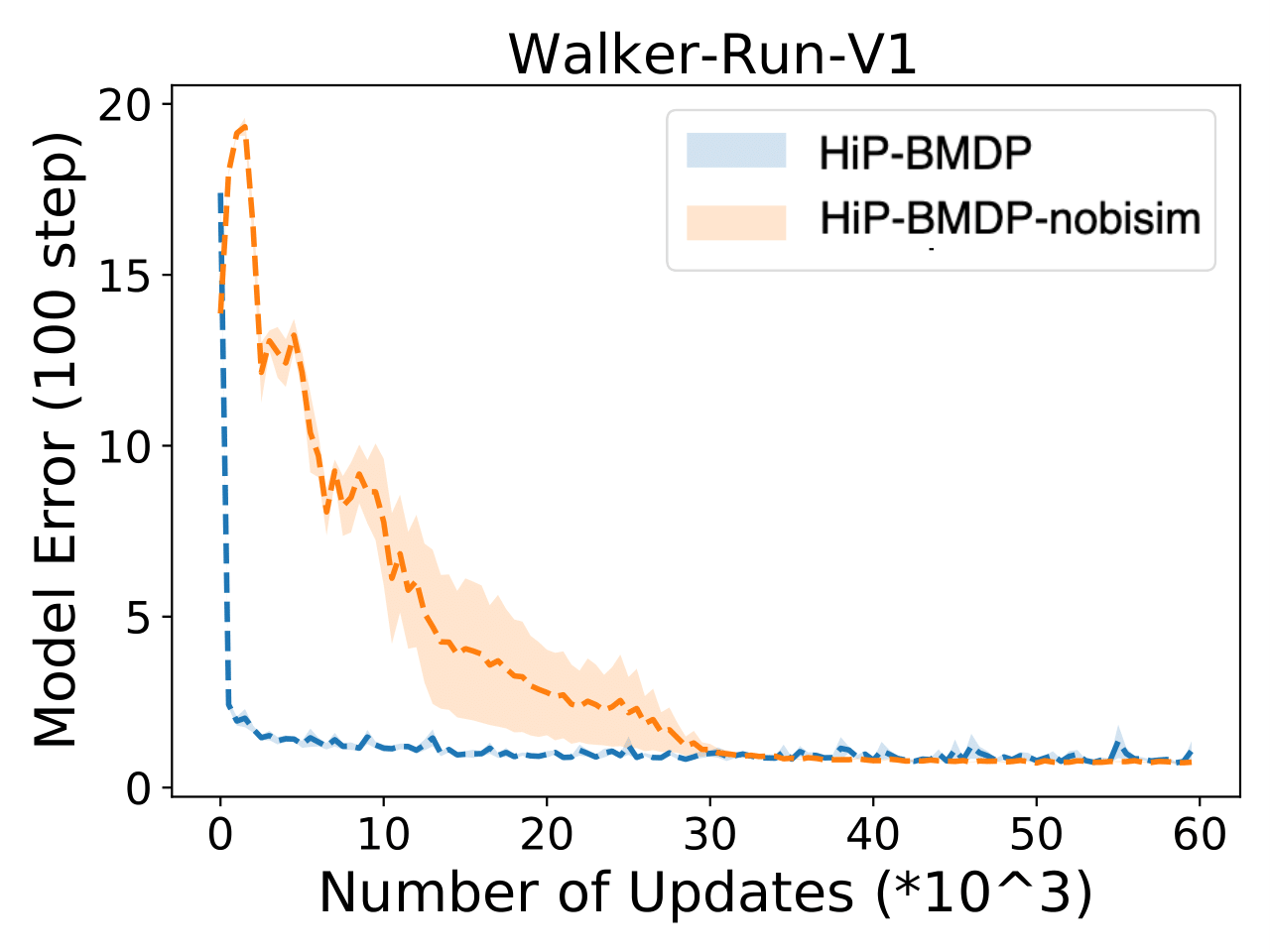}
    \vspace{-10pt}
    \caption{\small Average per-step model error (in latent space) after unrolling the transition model for 100 steps.}
    \vspace{-15pt}
    \label{fig::multitask::model}
\end{wrapfigure}

\textbf{Evaluating the Universal Transition Model.}
We investigate how well the transition model performs in an unseen environment by only adapting the task parameter $\theta$. We instantiate a new MDP, sampled from the family of MDPs, and use a behavior policy to collect transitions. These transitions are used to update only the $\theta$ parameter, and the transition model is evaluated by unrolling the transition model for $k$-steps. We report the average, per-step model error in latent space, averaged over 10 environments. While we expect both the proposed setup and baseline setups to adapt to the new environment, we expect the proposed setup to adapt faster because of the exploitation of underlying structure. In Figure \ref{fig::multitask::model}, we indeed observe that the proposed \textit{HiP-BMDP} model adapts much faster than the ablation \textit{HiP-BMDP-nobisim}.

\textbf{Relaxing the Block MDP Assumption.} We incorporate \textit{sticky observations} into the environment to determine how  HiP-BMDP behaves when the Block MDP assumption is relaxed. For some probability $p$ (set to 0.1 in practice), the current observation is dropped, and the agent sees the previous observation again. In Figure~\ref{fig::multitask::partial_obs}, we see that even in this setting the proposed HiP-BMDP model outperforms the other baseline models.

\begin{figure}[h]
    \centering{
    \vspace{-10pt}
    \includegraphics[width=.4\linewidth]{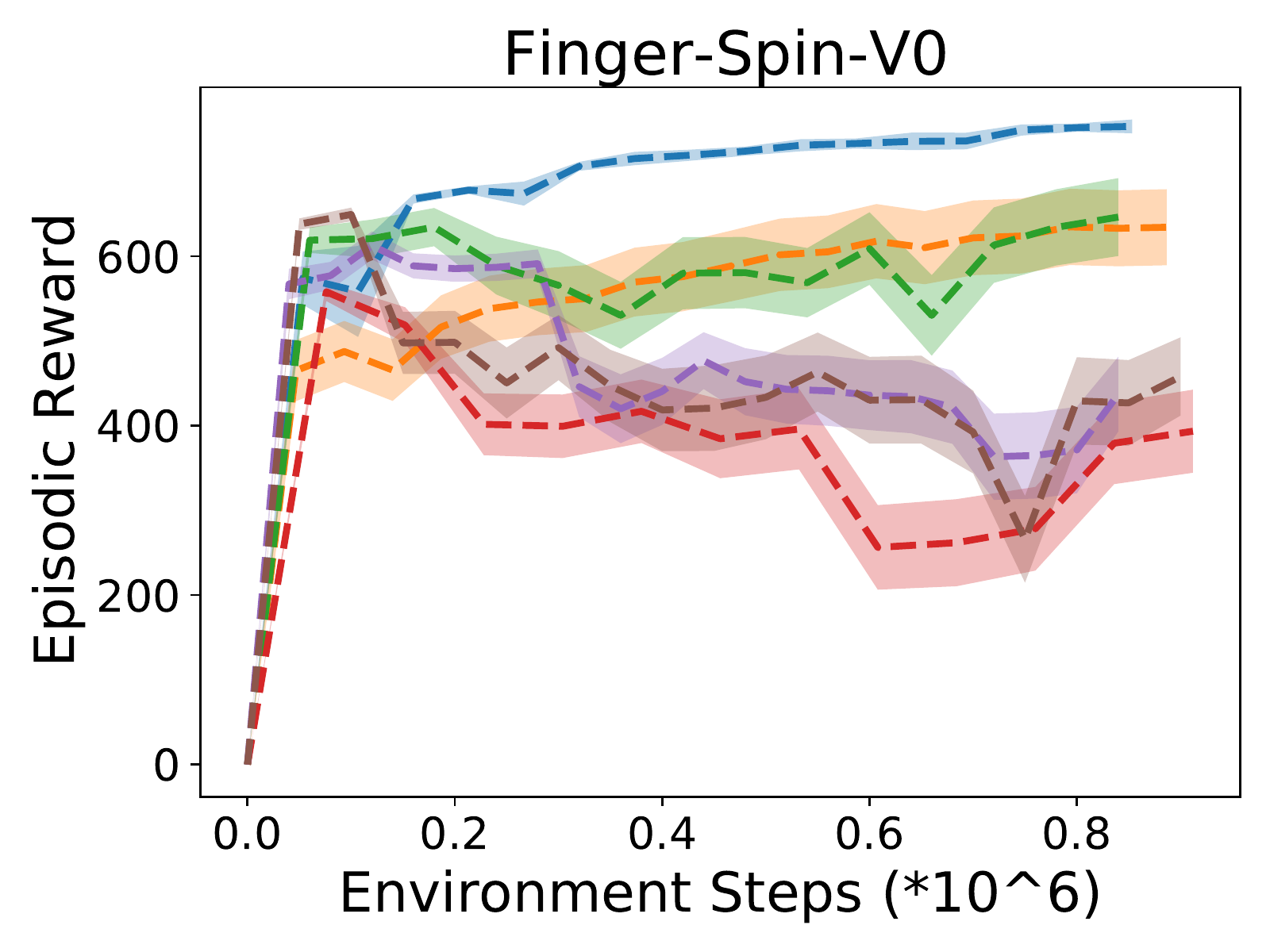}
    \hspace{20pt}
    \includegraphics[width=.4\linewidth]{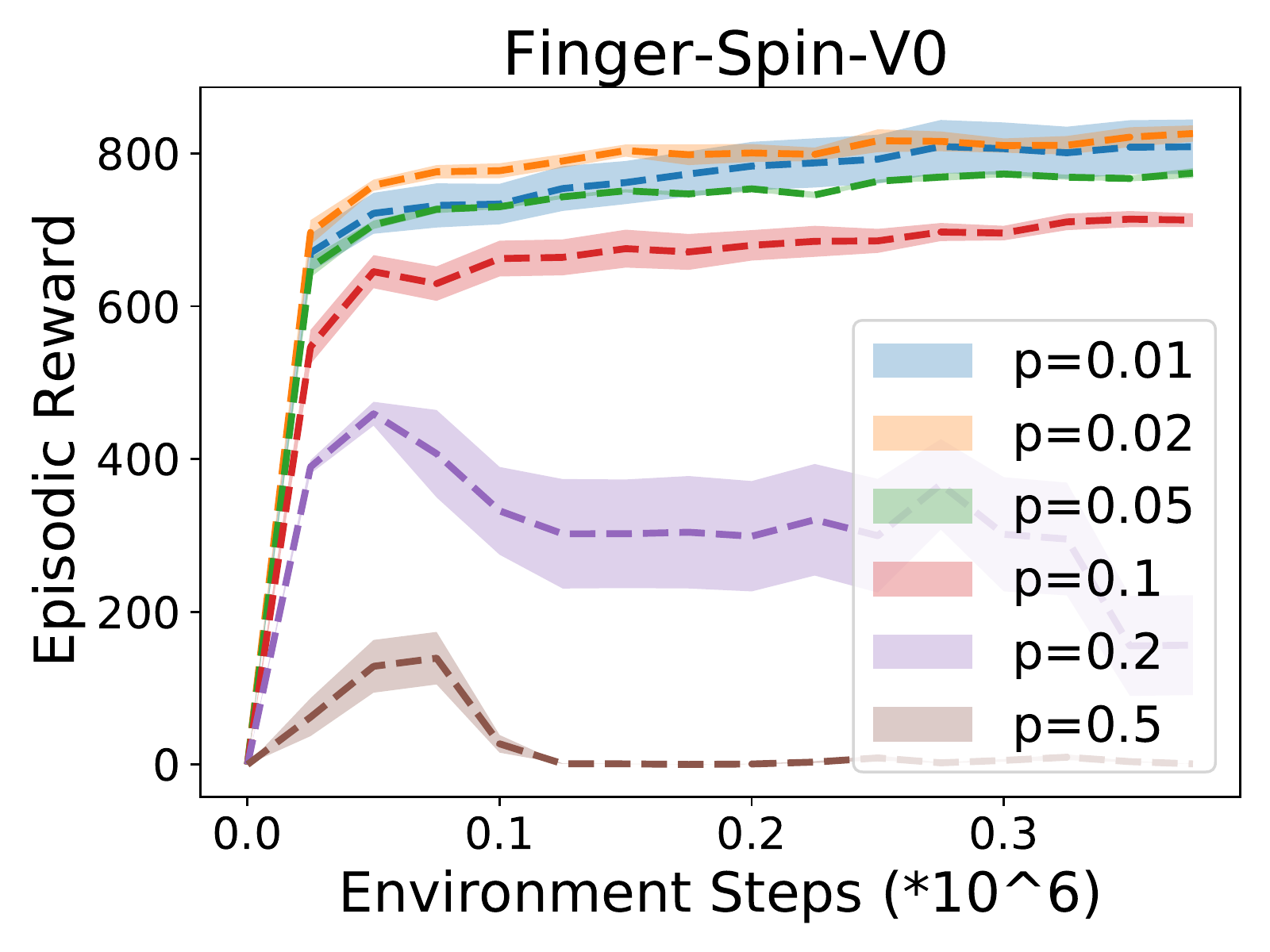}}
    \vspace{-7pt}
    \begin{flushleft}
    \hspace{5pt}
        \includegraphics[width=0.5\linewidth]{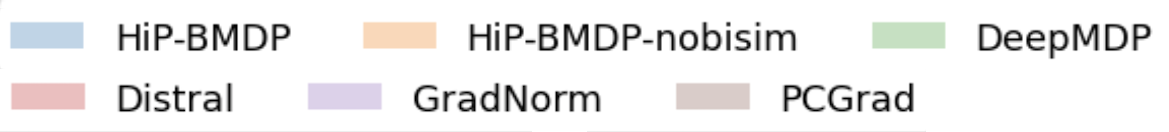}
    \end{flushleft}
    \vspace{-10pt}
    \caption{\small Zero-shot generalization performance on the evaluation tasks in the MTRL setting with partial observability. HiP-BMDP (ours) consistently outperforms other baselines (left). We also show decreasing performance by HiP-BMDP as $p$ increases (right). 10 seeds, 1 stderr shaded.}
    \vspace{-15pt}
    \label{fig::multitask::partial_obs}
\end{figure}

\section{Related Work}
\vspace{-5pt}
Multi-task learning has been extensively studied in RL with assumptions around common properties of different tasks, e.g., reward and transition dynamics. A lot of work has focused on considering tasks as MDPs and learning optimal policies for each task while maximizing shared knowledge. However, in most real-world scenarios, the parameters governing the dynamics are not observed. Moreover, it is not explicitly clear how changes in dynamics across tasks are controlled. The HiP-BMDP setting provides a principled way to change dynamics across tasks via a latent variable.

Much existing work in the \textbf{multi-task reinforcement learning} (MTRL) setting focuses on learning shared representations~\citep{ammar2014online,parisotto16_actormimic,calandriello2014multitask,maurerBenefitMultitaskRepresentation,landolfi2019mbmtrl}. \citet{D'Eramo2020Sharing} extend approximate value iteration bounds in the single-task setting to the multi-task by computing the average loss across tasks and \citet{brunskill2013mtrl} offer sample complexity results, which still depend on the number of tasks,  unlike ours. \cite{sun2020temple,tirinzoni2020sequential} also obtain PAC bounds on sample complexity for the MTRL setting, but \cite{sun2020temple} relies on a constructed state-action abstraction that assumes a discrete and tractably small state space.  \cite{tirinzoni2020sequential} assumes access to a generative model for any state-action pair and scales with the minimum of number of tasks or state space. In the rich observation setting, this minimum will almost always be the number of tasks.
Similar to our work, \citet{perezGeneralizedHiddenParameter2020} also treats the multi-task setting as a HiP-MDP by explicitly designing latent variable models to model the latent parameters, but require knowledge of the structure upfront, whereas our approach does not make any such assumptions. 

\textbf{Meta-learning}, or learning to learn, is also a related framework with a different approach. We focus here on context-based approaches, which are more similar to the shared representation approaches of MTRL and our own method.
\citet{rakelly2019pearl} model and learn latent contexts upon which a universal policy is conditioned. However, no explicit assumption of a universal structure is leveraged. 
\citet{amit2018meta,Yin2020Meta-Learning} give a PAC-Bayes bound for meta-learning generalization that relies on the number of tasks $n$. Our setting is quite different from the typical assumptions of the meta-learning framework, which stresses that the tasks must be mutually exclusive to ensure a single model cannot solve all tasks. Instead, we assume a shared latent structure underlying all tasks, and seek to exploit that structure for generalization. We find that under this setting, our method indeed outperforms policies initialized through meta-learning.

The ability to extract meaningful information through \textbf{state abstractions} provides a means to generalize across tasks with a common structure. \citet{abel2018state} learn transitive and PAC state abstractions for a distribution over tasks, but they concentrate on finite, tabular MDPs. One approach to form such abstractions is via \textbf{bisimulation metrics} \citep{givan2003equivalence, ferns2004bisimulation} which formalize a concrete way to group behaviorally equivalent states. Prior work also leverages bisimulation for transfer \citep{castro2010using}, but on the policy level. Our work instead focuses on learning a latent state representation and established theoretical results for the MTRL setting. Recent work \citep{gelada2019deepmdp} also learns a latent dynamics model and demonstrates connections to bisimulation metrics, but does not address multi-task learning. 

\section{Discussion}
\vspace{-5pt}
In this work, we advocate for a new framework, HiP-BMDP, to address the multi-task reinforcement learning setting. Like previous methods, HiP-BMDP assumes a shared state and action space across tasks, but additionally assumes latent structure in the dynamics. We exploit this structure through learning a universal dynamics model with latent parameter $\theta$, which captures the behavioral similarity across tasks. We provide error and value bounds for the HiP-MDP (in appendix) and HiP-BMDP settings, showing improvements in sample complexity over prior work by producing a bound that depends on the number of samples in aggregate over tasks, rather than number of tasks seen at training time.
Our work relies on an assumption that we have access to an environment id, or  knowledge of when we have switched environments. This assumption could be relaxed by incorporating an environment identification procedure at training time to cluster incoming data into separate environments. Further, our bounds rely $L^\infty$ norms for measuring error and the value and transfer bounds. In future work we will investigate tightening these bounds with $L^p$ norms.

\bibliographystyle{iclr2021_conference}
\bibliography{ref}

\newpage
\appendix

\section{Bisimulation Bounds}
We first look at the Block MDP case only~\citep{zhang2020invariant}, which can be thought of as the single-task setting in a HiP-BMDP. We can compute approximate error bounds in this setting by denoting $\phi$ an $(\epsilon_R,\epsilon_T)$-approximate bisimulation abstraction, where
\begin{equation*}
\begin{split}
\epsilon_R&:=\sup_{\substack{a\in\mathcal{A},\\x_1,x_2\in \mathcal{X},\phi(x_1)=\phi(x_2)}}\big|R(x_1,a)-R(x_2,a)\big|, \\
\epsilon_T&:=\sup_{\substack{a\in\mathcal{A},\\x_1,x_2\in \mathcal{X},\phi(x_1)=\phi(x_2)}}\big\lVert \Phi T(x_1,a) - \Phi T(x_2,a) \big\rVert_1.
\end{split}
\end{equation*}
$\Phi T$ denotes the \textit{lifted} version of $T$, where we take the next-step transition distribution from observation space $\mathcal{X}$ and lift it to latent space $\mathcal{S}$.

\begin{theorem}
\label{thm:single_task_bisim}
Given an MDP $\bar{\mathcal{M}}$ built on a $(\epsilon_R,\epsilon_T)$-approximate bisimulation abstraction of Block MDP $\mathcal{M}$, we denote the  evaluation of the optimal $Q$ function of $\bar{\mathcal{M}}$ on $\mathcal{M}$ as $[Q^*_{\bar{\mathcal{M}}}]_{\mathcal{M}}$. 
The value difference with respect to the optimal $Q^*_\mathcal{M}$ is upper bounded by
\begin{equation*}
    \big\| Q^*_\mathcal{M} - [Q^*_{\bar{\mathcal{M}}}]_\mathcal{M}\big\|_\infty \leq \epsilon_R + \gamma \epsilon_T \frac{R_\text{max}}{2(1-\gamma)}.
\end{equation*}
\end{theorem}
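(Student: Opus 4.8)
The plan is to bound $\|Q^*_\mathcal{M} - [Q^*_{\bar{\mathcal{M}}}]_\mathcal{M}\|_\infty$ by the standard two-step approach for abstraction quality bounds: first relate the optimal $Q$-function of the true MDP $\mathcal{M}$ to the optimal $Q$-function of the abstract MDP $\bar{\mathcal{M}}$ lifted back to $\mathcal{X}$, and then control the error incurred by this lifting in terms of the reward and transition mismatches $\epsilon_R$ and $\epsilon_T$. Concretely, I would introduce the abstract Bellman optimality operator $\bar{\mathcal{T}}$ on functions over $\phi(\mathcal{X})\times\mathcal{A}$ and the true operator $\mathcal{T}$ on functions over $\mathcal{X}\times\mathcal{A}$, observe that both are $\gamma$-contractions in the sup norm with fixed points $Q^*_{\bar{\mathcal{M}}}$ and $Q^*_\mathcal{M}$ respectively, and then use the lifted function $[Q^*_{\bar{\mathcal{M}}}]_\mathcal{M}(x,a) := Q^*_{\bar{\mathcal{M}}}(\phi(x),a)$ as an intermediate point. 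The key inequality is a one-step error bound: I would show $\|\mathcal{T}[Q^*_{\bar{\mathcal{M}}}]_\mathcal{M} - [Q^*_{\bar{\mathcal{M}}}]_\mathcal{M}\|_\infty$ is small, since $[Q^*_{\bar{\mathcal{M}}}]_\mathcal{M}$ is the fixed point of the \emph{abstract} operator but not of the true one, and the gap between the two operators applied to a lifted function is exactly where $\epsilon_R$ and $\epsilon_T$ enter.

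For the one-step bound I would write, for any $x\in\mathcal{X}_s$ and $a\in\mathcal{A}$,
\begin{equation*}
\big|\mathcal{T}[Q^*_{\bar{\mathcal{M}}}]_\mathcal{M}(x,a) - Q^*_{\bar{\mathcal{M}}}(\phi(x),a)\big| \leq \big|R(x,a) - \bar{R}(\phi(x),a)\big| + \gamma\big|\mathbb{E}_{x'\sim T(x,a)}V^*_{\bar{\mathcal{M}}}(\phi(x')) - \mathbb{E}_{\bar{s}'\sim\bar{T}(\phi(x),a)}V^*_{\bar{\mathcal{M}}}(\bar{s}')\big|,
\end{equation*}
then bound the first term by $\epsilon_R$ (after noting the abstract reward is an average of true rewards over a block, all of which are within $\epsilon_R$ of each other) and the second term by $\gamma\cdot\frac{R_\text{max}}{2(1-\gamma)}\cdot\epsilon_T$, using that $V^*_{\bar{\mathcal{M}}}$ has range at most $\frac{R_\text{max}}{1-\gamma}$, so the expectation difference against two distributions that are $\epsilon_T$-close in $\ell^1$ is at most $\frac{1}{2}\cdot\frac{R_\text{max}}{1-\gamma}\cdot\epsilon_T$ (the factor $\tfrac12$ coming from the total-variation form of the $\ell^1$ bound). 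Combining these gives $\|\mathcal{T}[Q^*_{\bar{\mathcal{M}}}]_\mathcal{M} - [Q^*_{\bar{\mathcal{M}}}]_\mathcal{M}\|_\infty \leq \epsilon_R + \gamma\epsilon_T\frac{R_\text{max}}{2(1-\gamma)}$. Then the fixed-point/contraction argument — $\|Q^*_\mathcal{M} - [Q^*_{\bar{\mathcal{M}}}]_\mathcal{M}\|_\infty \leq \frac{1}{1-\gamma}\|\mathcal{T}[Q^*_{\bar{\mathcal{M}}}]_\mathcal{M} - [Q^*_{\bar{\mathcal{M}}}]_\mathcal{M}\|_\infty$ — would naively produce an extra $\frac{1}{1-\gamma}$; to land exactly on the stated bound I instead telescope $Q^*_\mathcal{M} - [Q^*_{\bar{\mathcal{M}}}]_\mathcal{M} = \sum_{k\geq 0}(\mathcal{T}^{k+1}[Q^*_{\bar{\mathcal{M}}}]_\mathcal{M} - \mathcal{T}^k[Q^*_{\bar{\mathcal{M}}}]_\mathcal{M})$ but argue that along the optimal-action-matched trajectory the per-step reward discrepancies are what accumulate, absorbing the geometric series into the $\frac{R_\text{max}}{2(1-\gamma)}$ factor rather than multiplying by it; alternatively, and more cleanly, I would directly expand $Q^*_\mathcal{M}(x,a)$ and $Q^*_{\bar{\mathcal{M}}}(\phi(x),a)$ as discounted sums of rewards along their respective (coupled) optimal trajectories and bound the difference term by term.

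The main obstacle I anticipate is getting the constant exactly right: it is easy to pick up a spurious $\frac{1}{1-\gamma}$ factor or lose the factor $\frac12$ on the transition term, depending on whether one measures $\epsilon_T$ as an $\ell^1$ distance or a total-variation distance and whether one applies a crude contraction bound versus a careful trajectory coupling. I would resolve this by committing to the coupling argument: fix the optimal policy of $\bar{\mathcal{M}}$, lift it to $\mathcal{X}$, run it in both MDPs with a maximal coupling of the transition kernels at each step, and bound the expected discounted reward gap — the reward gaps contribute $\sum_k\gamma^k\epsilon_R$ which, being careful, should actually be handled so that only a single $\epsilon_R$ survives (since $\bar R$ is the block-average, not an arbitrary approximation), and the transition gaps contribute a coupling-failure probability of at most $\tfrac12\epsilon_T$ per step, each costing at most $\frac{R_\text{max}}{1-\gamma}$ in future value, discounted by $\gamma$ and summed, yielding $\gamma\cdot\tfrac12\epsilon_T\cdot\frac{R_\text{max}}{1-\gamma}\cdot\frac{1}{1-\gamma}$ — so in fact I expect the honest bound to require care about exactly which $\frac{1}{1-\gamma}$ factors appear, and matching the paper's statement may rely on the block-average structure of $\bar R$ killing the reward geometric series. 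This bookkeeping is the crux; the rest is the routine contraction-mapping machinery.
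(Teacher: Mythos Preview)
The paper does not actually prove this statement: its entire proof is the single line ``From Theorem 2 in \citet{jiangNotesStateAbstractionsa}.'' So there is no in-paper argument to compare against; what you have written is essentially a reconstruction of the standard proof behind that cited result.

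Your first approach --- bound the one-step Bellman residual $\|\mathcal{T}[Q^*_{\bar{\mathcal{M}}}]_\mathcal{M} - [Q^*_{\bar{\mathcal{M}}}]_\mathcal{M}\|_\infty$ by $\epsilon_R + \gamma\epsilon_T\frac{R_\text{max}}{2(1-\gamma)}$ and then invoke the $\gamma$-contraction of $\mathcal{T}$ --- is exactly the standard argument and is what the cited source does. That argument, as you correctly note, produces
\[
\big\|Q^*_\mathcal{M} - [Q^*_{\bar{\mathcal{M}}}]_\mathcal{M}\big\|_\infty \;\leq\; \frac{1}{1-\gamma}\Big(\epsilon_R + \gamma\epsilon_T\frac{R_\text{max}}{2(1-\gamma)}\Big)
\;=\; \frac{\epsilon_R}{1-\gamma} + \frac{\gamma\epsilon_T R_\text{max}}{2(1-\gamma)^2},
\]
which is the form the result takes in Jiang's notes and in the surrounding literature. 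The paper's printed statement omits this outer $\tfrac{1}{1-\gamma}$ factor; your instinct that something is off with the constant is correct, and the discrepancy lies in the paper's transcription of the cited theorem, not in your argument.

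Where your proposal does go astray is the coupling ``fix'' you sketch to try to recover the paper's constant. The claim that the block-average structure of $\bar R$ makes ``only a single $\epsilon_R$ survive'' is not right: at \emph{every} step $k$ of the coupled trajectory the realized state $x_k$ can have $|R(x_k,a_k)-\bar R(\phi(x_k),a_k)|$ as large as $\epsilon_R$, and these discrepancies sum with weights $\gamma^k$ to give $\frac{\epsilon_R}{1-\gamma}$, exactly as in the contraction bound. Likewise the transition-coupling failures accumulate geometrically, yielding the $(1-\gamma)^{-2}$ on the $\epsilon_T$ term. There is no mechanism in the block-average construction that cancels the geometric series. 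So: keep your contraction argument, accept the extra $\tfrac{1}{1-\gamma}$, and do not contort the proof to match a constant that the paper itself does not derive.
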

\begin{proof}
From Theorem 2 in \citet{jiangNotesStateAbstractionsa}.
\end{proof}

\section{Theoretical Results for the HiP-MDP Setting}
\label{app:hip-mdp}
We explore the HiP-MDP setting, where a low-dimensional state space is given, to highlight the results that can be obtained just from assuming this hierarchical structure of the dynamics. 

\subsection{Value Bounds}
Given a family of environments $\mathcal{M}_\Theta$, we bound the difference in expected value between two sampled MDPs, $\mathcal{M}_{\theta_i},\mathcal{M}_{\theta_j}\in\mathcal{M}_\Theta$ using $d(\theta_i,\theta_j)$. Additionally, we make the assumption that we have a behavior policy $\pi$ that is near both optimal policies $\pi^*_{\theta_i},\pi^*_{\theta_j}$. We use KL  divergence to define this neighborhood for $\pi^*_{\theta_i}$,
\begin{equation}
    d^{\text{KL}}(\pi,\pi^*_{\theta_i})=\mathbb{E}_{s\sim \rho^\pi}\big[KL(\pi(\cdot|s),\pi^*_{\theta_i}(\cdot|s))^{1/2}\big].
\end{equation}
We start with a bound for a specific policy $\pi$. One way to measure the difference between two tasks $\mathcal{M}_{\theta_i},\mathcal{M}_{\theta_j}$ is to measure the difference in value when that policy is applied in both settings. We show the relationship between the learned  $\theta$ and this difference in value. The following results are similar to error bounds in approximate value iteration~\citep{munos2005errorbounds,bertsekas1996ndprog}, but instead of tracking model error, we apply these methods to compare tasks with differences in dynamics. 
\begin{theorem}
\label{thm:value_fn}
Given policy $\pi$, the difference in expected value between two MDPs drawn from the family of MDPs $\mathcal{M}_{\theta_i},\mathcal{M}_{\theta_j}\in\mathcal{M}_\Theta$ is bounded by
\begin{equation}
    |V^\pi_{\theta_i} - V^\pi_{\theta_j}|\leq \frac{\gamma}{1-\gamma}\|\theta_i - \theta_j\|_1.
\end{equation}
\end{theorem}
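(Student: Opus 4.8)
The plan is to run a simulation-lemma–style argument: set up the Bellman equations for $V^\pi_{\theta_i}$ and $V^\pi_{\theta_j}$, use that the reward function $R$ is shared across the family so it cancels in the difference, and then control the residual transition-mismatch term with the Kantorovich (dual) form of the Wasserstein distance. Writing $R^\pi$ for the policy-averaged reward and $P^\pi_\theta$ for the state-to-state transition operator induced by $\pi$ in $\mathcal{M}_\theta$, the Bellman equations give $V^\pi_{\theta_i} = R^\pi + \gamma P^\pi_{\theta_i} V^\pi_{\theta_i}$ and likewise for $\theta_j$. Subtracting and inserting the cross term $\gamma P^\pi_{\theta_i} V^\pi_{\theta_j}$ yields
\[
V^\pi_{\theta_i} - V^\pi_{\theta_j} = \gamma P^\pi_{\theta_i}\big(V^\pi_{\theta_i} - V^\pi_{\theta_j}\big) + \gamma\big(P^\pi_{\theta_i} - P^\pi_{\theta_j}\big)V^\pi_{\theta_j}.
\]
Taking $\|\cdot\|_\infty$, using that $P^\pi_{\theta_i}$ is non-expansive in $\|\cdot\|_\infty$, and rearranging gives
\[
\big\|V^\pi_{\theta_i} - V^\pi_{\theta_j}\big\|_\infty \le \frac{\gamma}{1-\gamma}\,\big\|\big(P^\pi_{\theta_i} - P^\pi_{\theta_j}\big)V^\pi_{\theta_j}\big\|_\infty .
\]

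The second step bounds the inner quantity. For a fixed state $s$, $\big((P^\pi_{\theta_i} - P^\pi_{\theta_j})V^\pi_{\theta_j}\big)(s) = \mathbb{E}_{a\sim\pi(\cdot|s)}\big[\mathbb{E}_{s'\sim T_{\theta_i}(s,a)} V^\pi_{\theta_j}(s') - \mathbb{E}_{s'\sim T_{\theta_j}(s,a)} V^\pi_{\theta_j}(s')\big]$, and each inner difference is exactly the object appearing in the Wasserstein dual, hence at most $W\big(T_{\theta_i}(s,a),T_{\theta_j}(s,a)\big)$ once $V^\pi_{\theta_j}$ is $1$-Lipschitz with respect to the state metric $\|\cdot\|_1$ (the same metric under which the $\theta$-distance in \cref{eq:theta} is defined; this is the $V^\pi$ analogue of \cref{thm:lipschitz} for this family). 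Bounding the maximum over $(s,a)$ by $d(\theta_i,\theta_j)$ as in \cref{eq:theta}, and using that the environment encoder $\psi$ is trained so that $\|\theta_i - \theta_j\|_1 = d(\theta_i,\theta_j)$ (the objective \cref{eq:theta_obj}), we get $\big\|(P^\pi_{\theta_i} - P^\pi_{\theta_j})V^\pi_{\theta_j}\big\|_\infty \le \|\theta_i - \theta_j\|_1$. Substituting into the previous display gives the claimed bound.

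The main obstacle — the only non-mechanical step — is the Lipschitz control of $V^\pi_{\theta_j}$ that lets us pass from transition mismatch to Wasserstein distance: the crude estimate $|\mathbb{E}_P f - \mathbb{E}_Q f| \le \|f\|_\infty \|P-Q\|_1$ would inject an extra $\tfrac{R_\text{max}}{1-\gamma}$ factor and the wrong power of $(1-\gamma)$, so the argument must use the dual form of $W$ together with smoothness of the value function in the state representation. I would record this $1$-Lipschitz property as a standing regularity assumption on the HiP-MDP family (mirroring \cref{thm:lipschitz}); with it in hand, everything else is the routine telescoping above. An equivalent, fully self-contained route skips the operator identity and instead unrolls $V^\pi_{\theta_i}(s) - V^\pi_{\theta_j}(s)$ step by step, at each step peeling off the ``run $T_{\theta_i}$, then evaluate $V^\pi_{\theta_i}$ versus $V^\pi_{\theta_j}$'' term (which recurses) from the ``$T_{\theta_i}$ versus $T_{\theta_j}$ under $V^\pi_{\theta_j}$'' term (which is Wasserstein-bounded by $\|\theta_i-\theta_j\|_1$), producing the geometric sum $\sum_{t\ge 1}\gamma^t\|\theta_i-\theta_j\|_1 = \tfrac{\gamma}{1-\gamma}\|\theta_i-\theta_j\|_1$ directly.
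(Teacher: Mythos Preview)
Your proposal is correct, and the paper uses precisely the second route you sketch at the end: it defines $Z_k$ as the return when the first $k$ transitions follow $T_{\theta_i}$ and the rest follow $T_{\theta_j}$, telescopes $V^\pi_{\theta_i}-V^\pi_{\theta_j}=\sum_k(Z_{k+1}-Z_k)$, and collapses the geometric sum to obtain $\tfrac{\gamma}{1-\gamma}\,\mathbb{E}_{s\sim\rho^\pi_{\theta_i},\,a\sim\pi}\big[\mathbb{E}_{T_{\theta_i}}V^\pi_{\theta_j}-\mathbb{E}_{T_{\theta_j}}V^\pi_{\theta_j}\big]$, which it then bounds by $W(T_{\theta_i},T_{\theta_j})$ and invokes \cref{eq:theta}. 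Your Bellman-operator derivation and the paper's telescoping derivation are the two standard presentations of the same simulation-lemma identity and land on exactly the same residual term; you are actually more careful than the paper in isolating the one substantive step, namely that passing from the value-expectation mismatch to $W$ requires $V^\pi_{\theta_j}$ to be $1$-Lipschitz in the state metric, which the paper's proof uses without comment.
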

\begin{proof}
We use a telescoping sum to prove this bound, which is similar to \citet{luo2018algorithmic}. First, we let $Z_k$ denote the discounted sum of rewards if the first $k$ steps are in $\mathcal{M}_{\theta_i}$, and all steps $t>k$ are in $\mathcal{M}_{\theta_j}$,
$$Z_k:=\mathbb{E}_{\substack{\forall t\geq 0,a_t\sim\pi(s_t)\\ \forall j>t\geq 0,s_{t+1}\sim T_{\theta_i}(s_t,a_t) \\ \forall t\geq j,s_{t+1}\sim T_{\theta_j}(s_t,a_t)}}\bigg[\sum^\infty_{t=0} \gamma^t R(s_t,a_t)\bigg].$$
By definition, we have $Z_\infty=V^\pi_{\theta_i}$ and $Z_0=V^\pi_{\theta_j}$. Now, the value function difference can be written as a telescoping sum,
\begin{equation}
\label{eq:telescoping_sum}
    V^\pi_{\theta_i} - V^\pi_{\theta_j}=\sum^\infty_{k=0}(Z_{k+1}-Z_k).
\end{equation}
Each term can be simplified to
$$Z_{k+1} - Z_k=\gamma^{k+1}\mathbb{E}_{s_k,a_k\sim \pi,T_{\theta_i}}\bigg[\mathbb{E}_{\substack{s_{k+1}\sim T_{\theta_j}(\cdot|s_k,a_k),\\s'_{k+1}\sim T_{\theta_i}(\cdot|s_k,a_k)}}\big[V^\pi_{\theta_j}(s_{k+1}) - V^\pi_{\theta_j}(s'_{k+1}\big]\bigg].$$
Plugging this back into \cref{eq:telescoping_sum},
\begin{equation*}
    V^\pi_{\theta_i} - V^\pi_{\theta_j}=\frac{\gamma}{1-\gamma}\mathbb{E}_{\substack{s\sim \rho_{\theta_i}^\pi,\\a\sim\pi(s)}}\bigg[\mathbb{E}_{s'\sim T_{\theta_i}(\cdot|s,a)}V^\pi_{\theta_j}(s') - \mathbb{E}_{s'\sim T_{\theta_j}(\cdot|s,a)}V^\pi_{\theta_j}(s')\bigg].
\end{equation*}
This expected value difference is bounded by the Wasserstein distance between $T_{\theta_i},T_{\theta_j}$,
\begin{equation*}
\begin{split}
   |V^\pi_{\theta_i} - V^\pi_{\theta_j}|&\leq\frac{\gamma}{1-\gamma}W(T_{\theta_i},T_{\theta_j}) \\
   &=\frac{\gamma}{1-\gamma}\|\theta_i -  \theta_j\|_1 \quad \text{using \cref{eq:theta}}.
\end{split}
\end{equation*}
\end{proof}
Another comparison to make is how different the optimal policies in different tasks are with respect to the distance $\|\theta_i-\theta_j\|$.
\begin{theorem}
\label{thm:opt_val}
The difference in expected optimal value between two MDPs $
\mathcal{M}_{\theta_i},\mathcal{M}_{\theta_j}\in\mathcal{M}_\Theta$ is bounded by,
\begin{equation}
    |V^*_{\theta_i} - V^*_{\theta_j}| \leq \frac{\gamma}{(1-\gamma)^2}\|\theta_i - \theta_j\|_1.
\end{equation}
\end{theorem}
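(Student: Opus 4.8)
The plan is to run a value-difference (``simulation-lemma'' style) argument that turns the optimal-value gap into a self-referential inequality in the spirit of the approximate value iteration bounds cited above. Write $\Delta := \|V^*_{\theta_i} - V^*_{\theta_j}\|_\infty$ and pick a state $s^\star$ attaining this supremum; without loss of generality $V^*_{\theta_i}(s^\star) \ge V^*_{\theta_j}(s^\star)$ (otherwise swap $i$ and $j$, which is symmetric since both $d(\theta_i,\theta_j)$ and $\|\theta_i-\theta_j\|_1$ are). Let $a^\star := \pi^*_{\theta_i}(s^\star)$ be an optimal action at $s^\star$ in $\mathcal{M}_{\theta_i}$, so that $V^*_{\theta_i}(s^\star)=Q^*_{\theta_i}(s^\star,a^\star)$, while optimality of $V^*_{\theta_j}$ gives $V^*_{\theta_j}(s^\star)\ge Q^*_{\theta_j}(s^\star,a^\star)$. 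Hence $\Delta \le Q^*_{\theta_i}(s^\star,a^\star) - Q^*_{\theta_j}(s^\star,a^\star)$, so the whole problem reduces to bounding this $Q^\star$ gap at a single state--action pair.

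Next I would expand both $Q^\star$'s with their Bellman equations. The reward terms $R(s^\star,a^\star)$ cancel (rewards are shared across the family), leaving $\gamma\big(\E_{s'\sim T_{\theta_i}(s^\star,a^\star)}V^*_{\theta_i}(s') - \E_{s'\sim T_{\theta_j}(s^\star,a^\star)}V^*_{\theta_j}(s')\big)$. Adding and subtracting $\gamma\,\E_{s'\sim T_{\theta_i}(s^\star,a^\star)}V^*_{\theta_j}(s')$ splits this into a ``same kernel, different value function'' term $\gamma\,\E_{s'\sim T_{\theta_i}(s^\star,a^\star)}\big[V^*_{\theta_i}(s')-V^*_{\theta_j}(s')\big] \le \gamma\Delta$, and a ``different kernel, same value function'' term $\gamma\big(\E_{s'\sim T_{\theta_i}(s^\star,a^\star)}V^*_{\theta_j}(s') - \E_{s'\sim T_{\theta_j}(s^\star,a^\star)}V^*_{\theta_j}(s')\big)$. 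For the latter I invoke \cref{thm:lipschitz}: $V^*_{\theta_j}$ is $\tfrac{1}{1-\gamma}$-Lipschitz, so by the Kantorovich--Rubinstein dual form of the Wasserstein distance this term is at most $\tfrac{\gamma}{1-\gamma}\,W\big(T_{\theta_i}(s^\star,a^\star),T_{\theta_j}(s^\star,a^\star)\big) \le \tfrac{\gamma}{1-\gamma}\,d(\theta_i,\theta_j) = \tfrac{\gamma}{1-\gamma}\|\theta_i-\theta_j\|_1$ by \cref{eq:theta}. Combining the two pieces gives $\Delta \le \gamma\Delta + \tfrac{\gamma}{1-\gamma}\|\theta_i-\theta_j\|_1$, and rearranging yields $\Delta \le \tfrac{\gamma}{(1-\gamma)^2}\|\theta_i-\theta_j\|_1$, which is the claim.

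The step I expect to require the most care is matching up ground metrics: \cref{thm:lipschitz} controls $V^*$ in terms of the bisimulation pseudometric, whereas the Wasserstein distances defining $d(\theta_i,\theta_j)$ in \cref{eq:theta} use the $\ell_1$ ground metric, so I must ensure these are compatible (or carry the relevant Lipschitz constant explicitly rather than absorbing it). A shorter alternative that sidesteps this is to reuse \cref{thm:value_fn}: take $\pi = \pi^*_{\theta_i}$, use $V^*_{\theta_i}=V^{\pi^*_{\theta_i}}_{\theta_i}$ together with the optimality sandwich $V^*_{\theta_j}\ge V^{\pi^*_{\theta_i}}_{\theta_j}$ to obtain $V^*_{\theta_i}-V^*_{\theta_j}\le V^{\pi^*_{\theta_i}}_{\theta_i}-V^{\pi^*_{\theta_i}}_{\theta_j}\le \tfrac{\gamma}{1-\gamma}\|\theta_i-\theta_j\|_1$, then symmetrize; since $\tfrac{\gamma}{1-\gamma}\le\tfrac{\gamma}{(1-\gamma)^2}$ this already implies (in fact slightly strengthens) the stated bound.
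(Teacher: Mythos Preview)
Your main argument is essentially the paper's proof: both expand via the Bellman optimality equation, cancel the shared reward, split by add-and-subtract into a $\gamma\Delta$ term and a kernel-mismatch term bounded via $\tfrac{1}{1-\gamma}$-Lipschitzness together with \cref{eq:theta}, then solve the resulting contraction inequality; the only cosmetic difference is that the paper tracks $\sup_{s,a}|Q^*_{\theta_i}(s,a)-Q^*_{\theta_j}(s,a)|$ throughout rather than fixing a single maximizing $(s^\star,a^\star)$. Your metric-compatibility concern is real and is glossed over in the paper too, and your alternative via \cref{thm:value_fn} with $\pi=\pi^*_{\theta_i}$ plus symmetrization is a legitimate shortcut the paper does not take, yielding the sharper constant $\tfrac{\gamma}{1-\gamma}$.
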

\begin{proof}
\begin{equation*}
\begin{split}
     |V^*_{\theta_i}(s) - V^*_{\theta_j}(s)| &=|\max_a Q^*_{\theta_i}(s,a) - \max_{a'}Q^*_{\theta_j}(s,a')| \\
     &\leq \max_a|Q^*_{\theta_i}(s,a) - Q^*_{\theta_j}(s,a)| 
\end{split}
\end{equation*}
We can bound the RHS with
\begin{equation*}
\sup_{s,a}|Q^*_{\theta_i}(s,a) - Q^*_{\theta_j}(s,a)| \leq \sup_{s,a} |r_{\theta_i}(s,a) - r_{\theta_j}(s,a)| + \gamma\sup_{s,a}|\mathbb{E}_{s'\sim T_{\theta_i}(\cdot|s,a)}V^*_{\theta_i}(s')-\mathbb{E}_{s''\sim T_{\theta_j}(\cdot|s,a)}V^*_{\theta_j}(s'')| 
\end{equation*}
All MDPs in $\mathcal{M}_\Theta$ have the same reward function, so the first term is 0.
\begin{equation*}
\begin{split}
  \sup_{s,a}\big|Q^*_{\theta_i}(s,a) - Q^*_{\theta_j}(s,a)\big| &\leq  \gamma\sup_{s,a}|\mathbb{E}_{s'\sim T_{\theta_i}(\cdot|s,a)}V^*_{\theta_i}(s')-\mathbb{E}_{s''\sim T_{\theta_j}(\cdot|s,a)}V^*_{\theta_j}(s'')| \\
  &=\gamma\sup_{s,a}\bigg|\mathbb{E}_{s'\sim T_{\theta_i}(\cdot|s,a)}\big[V^*_{\theta_i}(s')-V^*_{\theta_j}(s')\big]+\mathbb{E}_{\substack{s''\sim T_{\theta_j}(\cdot|s,a),\\ s'\sim T_{\theta_i}(\cdot|s,a)}}\big[V^*_{\theta_j}(s')-V^*_{\theta_j}(s'')\big]\bigg|  \\
  &\leq \gamma\sup_{s,a}\big|\mathbb{E}_{s'\sim T_{\theta_i}(\cdot|s,a)}\big[V^*_{\theta_i}(s')-V^*_{\theta_j}(s')\big]\big|+\gamma\sup_{s,a}\big|\mathbb{E}_{\substack{s''\sim T_{\theta_j}(\cdot|s,a),\\ s'\sim T_{\theta_i}(\cdot|s,a)}}\big[V^*_{\theta_j}(s')-V^*_{\theta_j}(s'')\big]\big|  \\
  &\leq \gamma\sup_{s,a}\big|\mathbb{E}_{s'\sim T_{\theta_i}(\cdot|s,a)}\big[V^*_{\theta_i}(s')-V^*_{\theta_j}(s')\big]\big| + \frac{\gamma}{1-\gamma}\|\theta_i - \theta_j\|_1 \\
  &\leq \gamma\max_{s}\big|V^*_{\theta_i}(s)-V^*_{\theta_j}(s)\big| + \frac{\gamma}{1-\gamma}\|\theta_i - \theta_j\|_1 \\
  &=\gamma\max_{s}\big|\max_{a}Q^*_{\theta_i}(s,a)-\max_{a'}Q^*_{\theta_j}(s,a')\big| + \frac{\gamma}{1-\gamma}\|\theta_i - \theta_j\|_1 \\
  &\leq \gamma\sup_{s,a}\big|Q^*_{\theta_i}(s,a)-Q^*_{\theta_j}(s,a)\big| + \frac{\gamma}{1-\gamma}\|\theta_i - \theta_j\|_1 \\
\end{split}
\end{equation*}
Solving for $\sup_{s,a}\big|Q^*_{\theta_i}(s,a)-Q^*_{\theta_j}(s,a)\big|$,
\begin{equation*}
    \sup_{s,a}\big|Q^*_{\theta_i}(s,a)-Q^*_{\theta_j}(s,a)\big|\leq \frac{\gamma}{(1-\gamma)^2}\|\theta_i-\theta_j\|_1.
\end{equation*}
Plugging this back in,
\begin{equation*}
    |V^*_{\theta_i}(s) - V^*_{\theta_j}(s)| \leq \frac{\gamma}{(1-\gamma)^2}\|\theta_i-\theta_j\|_1.
\end{equation*}
\end{proof}
Both these results lend more intuition for casting the multi-task setting under the HiP-MDP formalism. The difference in the optimal performance between any two environments is controlled by the distance between the hidden parameters for corresponding environments. One can interpret the hidden parameter as a knob to allow precise changes across the tasks.

\subsection{Expected Error Bounds}
In MTRL, we are concerned with the performance over a family of tasks. The empirical risk is typically defined as follows for $T$ tasks~\citep{maurerBenefitMultitaskRepresentation}:
\begin{equation}
    \epsilon_{avg} (\theta) = \frac{1}{T} \sum_{t=1}^{T} \mathbb{E}[\ell(f_t(h(w_t(X))),Y))]. 
\end{equation}

Consequently, we bound the expected loss over the family of environments $\mathcal{E}$ with respect to $\theta$. In particular, we are interested in the average approximation error and define it as the absolute model error averaged across all environments: 
\begin{equation}
    \epsilon_{avg}(\theta) = \frac{1}{|\mathcal{E}|} \sum_{i=1}^{\mathcal{E}} \Big| V^*_{\hat{\theta_i}}(s) - V^*_{\theta_i}(s)  \Big|.
\end{equation}

\begin{theorem}
\label{thm:avg_approximationerror}
Given a family of environments $\mathcal{M}_\Theta$, each parameterized with an underlying true hidden parameter $\theta_1, \theta_2, \cdots, \theta_{\mathcal{E}}$, and let $\hat{\theta}_1, \hat{\theta}_2, \cdots, \hat{\theta}_{\mathcal{E}}$ be their respective approximations such that the average approximation error across all environments is bounded as follows:
\begin{equation}
    \epsilon_{avg} (\theta) \leq \frac{\epsilon \gamma}{(1-\gamma)^2},
\end{equation}
where each environment's parameter $\theta_i$ is $\epsilon$-close to its approximation $\hat{\theta_i}$ i.e. $d(\hat{\theta_i},\theta_i) \leq \epsilon$, where $d$ is the distance metric defined in Eq.~\ref{eq:theta}.
\end{theorem}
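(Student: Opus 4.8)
The plan is to reduce Theorem~\ref{thm:avg_approximationerror} to a straightforward averaging argument on top of the per-task bound already available in \cref{thm:opt_val}. The key observation is that the quantity $\big|V^*_{\hat{\theta_i}}(s) - V^*_{\theta_i}(s)\big|$ appearing in $\epsilon_{avg}(\theta)$ is exactly the kind of optimal-value discrepancy between two members of the HiP-MDP family $\mathcal{M}_\Theta$ that \cref{thm:opt_val} controls, provided we instantiate that theorem with the pair $(\hat\theta_i,\theta_i)$ in place of $(\theta_i,\theta_j)$. So the first step is simply to invoke \cref{thm:opt_val} for each $i\in\{1,\dots,\mathcal{E}\}$ to get
\begin{equation*}
    \big|V^*_{\hat\theta_i}(s) - V^*_{\theta_i}(s)\big| \leq \frac{\gamma}{(1-\gamma)^2}\,\|\hat\theta_i - \theta_i\|_1 .
\end{equation*}
One small caveat to check here: \cref{thm:opt_val} is stated for two MDPs drawn from the family, so I would note that the ``approximate'' MDP $\mathcal{M}_{\hat\theta_i}$ lies in the same family (its transition function is $T(\cdot,\cdot,\hat\theta_i)$ with the shared reward), which is precisely the HiP-MDP modeling assumption; in particular the shared-reward hypothesis used in the proof of \cref{thm:opt_val} still holds.

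The second step is to use the hypothesis $d(\hat\theta_i,\theta_i)\leq\epsilon$ together with the definition of the metric $d$ in \cref{eq:theta}. Since $d(\hat\theta_i,\theta_i)=\max_{s,a}W\big(T_{\hat\theta_i}(s,a),T_{\theta_i}(s,a)\big)$ and, in the convention adopted right after \cref{eq:theta}, the Wasserstein distance is taken with respect to $\|\cdot\|_1$ on the state space so that this maximum equals $\|\hat\theta_i-\theta_i\|_1$ by the objective in \cref{eq:theta_obj}, the per-task bound becomes
\begin{equation*}
    \big|V^*_{\hat\theta_i}(s) - V^*_{\theta_i}(s)\big| \leq \frac{\gamma}{(1-\gamma)^2}\,\epsilon .
\end{equation*}

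The final step is to average over the $|\mathcal{E}|$ environments. Plugging the uniform per-task bound into the definition of $\epsilon_{avg}(\theta)$,
\begin{equation*}
    \epsilon_{avg}(\theta) = \frac{1}{|\mathcal{E}|}\sum_{i=1}^{\mathcal{E}} \big|V^*_{\hat\theta_i}(s) - V^*_{\theta_i}(s)\big| \leq \frac{1}{|\mathcal{E}|}\sum_{i=1}^{\mathcal{E}} \frac{\epsilon\gamma}{(1-\gamma)^2} = \frac{\epsilon\gamma}{(1-\gamma)^2},
\end{equation*}
which is the claimed inequality. I would close by remarking that since each term is bounded by the same constant, the average is bounded by that constant, and in fact the same bound would hold for the worst-case (max over environments) error as well.

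Honestly, I don't expect a real obstacle here: the theorem is essentially a corollary of \cref{thm:opt_val}, and the only thing requiring care is the bookkeeping that identifies $\max_{s,a}W(T_{\hat\theta_i},T_{\theta_i})$ with $\|\hat\theta_i-\theta_i\|_1$ via \cref{eq:theta}/\cref{eq:theta_obj}, and the observation that the approximating model is itself a member of the family so that \cref{thm:opt_val} applies verbatim. If one wanted to be more careful, the mildly subtle point is whether $\hat\theta_i$ (a \emph{learned} parameter) truly indexes an MDP with the shared reward function and a well-defined transition kernel $T(\cdot,\cdot,\hat\theta_i)$ in the family — this is an implicit modeling assumption that should perhaps be stated, but given the setup in \cref{sec:the_model} it is already in force.
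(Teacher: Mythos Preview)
Your proposal is correct and takes essentially the same approach as the paper: both bound each per-environment term $|V^*_{\hat\theta_i}(s)-V^*_{\theta_i}(s)|$ by $\frac{\gamma}{(1-\gamma)^2}\|\hat\theta_i-\theta_i\|_1$ via the shared-reward $Q^*$ contraction argument, then average and apply $d(\hat\theta_i,\theta_i)\leq\epsilon$. The only difference is presentational---you invoke \cref{thm:opt_val} as a black box, whereas the paper re-derives the same inequality inline; your version is tighter in exposition but mathematically identical.
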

\begin{proof}
We here consider the approximation error averaged across all environments as follows:
\begin{equation*}
    \epsilon_{avg}(\theta) = \frac{1}{\mathcal{E}} \sum_{i=1}^{\mathcal{E}} \Big| V^{*}_{\hat{\theta_i}}(s) - V^*_{\theta_i}(s)  \Big|
\end{equation*}
    
\begin{equation}
\label{eq:avgerrorbound}
\begin{split}
      \epsilon_{avg}(\theta) &= \frac{1}{\mathcal{E}} \sum_{i=1}^{\mathcal{E}} |\max_a Q^*_{\hat{\theta_i}}(s,a) - \max_{a'}Q^*_{\theta_i}(s,a')| \\
     &\leq \frac{1}{\mathcal{E}} \sum_{i=1}^{\mathcal{E}} \max_a|Q^*_{\hat{\theta_i}}(s,a) - Q^*_{\theta_i}(s,a)| 
\end{split}
\end{equation}    

Let us consider an environment $\theta_i \in \mathcal{M_E}$ for which we can bound the RHS with
\begin{equation*}
\sup_{s,a}|Q^*_{\hat{\theta_i}}(s,a) - Q^*_{\theta_i}(s,a)| \leq \sup_{s,a} |r_{\hat{\theta_i}}(s,a) - r_{\theta_i}(s,a)| + \gamma\sup_{s,a}|\mathbb{E}_{s'\sim T_{\hat{\theta_i}}(\cdot|s,a)}V^*_{\hat{\theta_i)}}(s')-\mathbb{E}_{s''\sim T_{\theta_i}(\cdot|s,a)}V^*_{\theta_i}(s'')| 
\end{equation*}
Considering the family of environments $\mathcal{M}_\mathcal{E}$ have the same reward function and is known, resulting in first term to be 0.

\begin{equation*}
\begin{split}
  \sup_{s,a}\big|Q^*_{\hat{\theta_i}}(s,a) - Q^*_{\theta_i}(s,a)\big| &\leq  \gamma\sup_{s,a}|\mathbb{E}_{s'\sim T_{\hat{\theta_i}}(\cdot|s,a)}V^*_{\hat{\theta_i}}(s')-\mathbb{E}_{s''\sim T_{\theta_i}(\cdot|s,a)}V^*_{\theta_i}(s'')| \\
  &=\gamma\sup_{s,a}\bigg|\mathbb{E}_{s'\sim T_{\hat{\theta_i}}(\cdot|s,a)}\big[V^*_{\hat{\theta_i}}(s')-V^*_{\theta_i}(s')\big]+\mathbb{E}_{\substack{s''\sim T_{\theta_i}(\cdot|s,a),\\ s'\sim T_{\hat{\theta_i}}(\cdot|s,a)}}\big[V^*_{\hat{\theta_i}}(s')-V^*_{\theta_i}(s'')\big]\bigg|  \\
  &\leq \gamma\sup_{s,a}\big|\mathbb{E}_{s'\sim T_{\hat{\theta_i}}(\cdot|s,a)}\big[V^*_{\hat{\theta_i}}(s')-V^*_{\theta_i}(s')\big]\big|+\gamma\sup_{s,a}\big|\mathbb{E}_{\substack{s''\sim T_{\theta_i}(\cdot|s,a),\\ s'\sim T_{\hat{\theta_i}}(\cdot|s,a)}}\big[V^*_{\hat{\theta_i}}(s')-V^*_{\theta_i}(s'')\big]\big|  \\
  &\leq \gamma\sup_{s,a}\big|\mathbb{E}_{s'\sim T_{\hat{\theta_i}}(\cdot|s,a)}\big[V^*_{\hat{\theta_i}}(s')-V^*_{\theta_i}(s')\big]\big| + \frac{\gamma}{1-\gamma}|\hat{\theta_i} - \theta_i| \\
  &\leq \gamma\max_{s}\big|V^*_{\hat{\theta_i}}(s)-V^*_{\theta_i}(s)\big| + \frac{\gamma}{1-\gamma}|\hat{\theta_i} - \theta_i| \\
  &=\gamma\max_{s}\big|\max_{a}Q^*_{\hat{\theta_i}}(s,a)-\max_{a'}Q^*_{\theta_i}(s,a')\big| + \frac{\gamma}{1-\gamma}|\hat{\theta_i} - \theta_i| \\
  &\leq \gamma\sup_{s,a}\big|Q^*_{\hat{\theta_i}}(s,a)-Q^*_{\theta_i}(s,a)\big| + \frac{\gamma}{1-\gamma}|\hat{\theta_i} - \theta_i| \\
\end{split}
\end{equation*}

Solving for $\sup_{s,a}\big|Q^*_{\hat{\theta_i}}(s,a)-Q^*_{\theta_i}(s,a)\big|$,
\begin{equation}
\label{eq:errorboundperenv}
    \sup_{s,a}\big|Q^*_{\hat{\theta_i}}(s,a)-Q^*_{\theta_i}(s,a)\big|\leq \frac{\gamma}{(1-\gamma)^2}|\hat{\theta_i}-\theta_i|
\end{equation}

Plugging Eq.~\ref{eq:errorboundperenv} back in Eq.~\ref{eq:avgerrorbound},
\begin{equation*}
\begin{split}
      \epsilon_{avg}(\theta) &\leq \frac{1}{\mathcal{E}} \sum_{i=1}^{\mathcal{E}}  \frac{\gamma}{(1-\gamma)^2}|\hat{\theta_i}-\theta_i| \\
      &= \frac{\gamma}{\mathcal{E} (1-\gamma)^2} \Bigg[ |\hat{\theta}_{i=1}-\theta_{i=1}| + |\hat{\theta}_{i=2}-\theta_{i=2}| + \cdots + |\hat{\theta}_{i=\mathcal{E}}-\theta_{i=\mathcal{E}}| \Bigg]
\end{split}
\end{equation*} 

We now consider that the distance between the approximated $\hat{\theta_i}$ and the underlying hidden parameter $\theta_i \in \mathcal{M_E}$ is defined as in Eq.~\ref{eq:theta}, such that: $d(\hat{\theta_i},\theta_i) \leq \epsilon_\theta$

Plugging this back concludes the proof,
\begin{equation*}
      \epsilon_{avg}(\theta) \leq  \frac{\gamma \epsilon_\theta }{(1-\gamma)^2}.
\end{equation*}
\end{proof}
It is interesting to note that the average approximation error across all environments is independent of the number of environments and primarily governed by the error in approximating the hidden parameter $\theta$ for each environment.

\section{Additional Results and Proofs for HiP-BMDP Results}
\label{sec:proofs}
We first compute $L^\infty$ norm bounds for $Q$ error under approximate abstractions and transfer bounds.
\begin{theorem}[$Q$ error]
Given an MDP $\bar{\mathcal{M}}_{\hat{\theta}}$ built on a $(\epsilon_R,\epsilon_T,\epsilon_\theta)$-approximate bisimulation abstraction of an instance of a HiP-BMDP $\mathcal{M}_\theta$, we denote the  evaluation of the optimal $Q$ function of $\bar{\mathcal{M}}_{\hat{\theta}}$ on $\mathcal{M}$ as $[Q^*_{\bar{\mathcal{M}}_{\hat{\theta}}}]_{\mathcal{M}_\theta}$. 
The value difference with respect to the optimal $Q^*_\mathcal{M}$ is upper bounded by
\begin{equation*}
    \big\| Q^*_{\mathcal{M}_\theta} - [Q^*_{\bar{\mathcal{M}}_{\hat{\theta}}}]_{\mathcal{M}_\theta}\big\|_\infty \leq \epsilon_R + \gamma (\epsilon_T + \epsilon_\theta) \frac{R_\text{max}}{2(1-\gamma)}.
\end{equation*}
\end{theorem}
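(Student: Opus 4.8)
The plan is to follow the standard abstraction-error argument (as in \citet{jiang2015abstraction} and \cref{thm:single_task_bisim}) but carry the extra $\theta$-error term through the Bellman recursion. First I would set up the lifted MDP $\bar{\mathcal{M}}_{\hat\theta}$ on the abstract state space $\phi(\mathcal{X})$, with reward $\bar R$ and transition $\bar T$ obtained by some weighting of the ground reward $R$ and lifted transition $\Phi T$ over each block, and then evaluated back on $\mathcal{M}_\theta$ by the pullback $[\cdot]_{\mathcal{M}_\theta}$. The key quantities to control are (i) the reward mismatch, bounded by $\epsilon_R$ directly from its definition, (ii) the transition mismatch between $\Phi T(x,a)$ for the true model and $\bar T(\phi(x),a)$, bounded by $\epsilon_T$ by the same averaging argument as in the single-task case, and (iii) the additional mismatch between the true task dynamics $T_\theta$ and the learned task dynamics $T_{\hat\theta}$, which by the definition of the metric in \cref{eq:theta} and the objective in \cref{eq:theta_obj} is controlled by $W(T_\theta(s,a),T_{\hat\theta}(s,a)) \le \|\theta-\hat\theta\|_1 = \epsilon_\theta$ uniformly over $s,a$.

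Next I would write the Bellman optimality fixed-point equations for $Q^*_{\mathcal{M}_\theta}$ and for $[Q^*_{\bar{\mathcal{M}}_{\hat\theta}}]_{\mathcal{M}_\theta}$, subtract them, and take $\|\cdot\|_\infty$. Using the nonexpansiveness of $\max_a$ and splitting the difference into a reward term, a "transition measured against the true value function" term, and a "value-function-difference under a fixed transition" term, I get a recursive inequality of the form
\begin{equation*}
\big\| Q^*_{\mathcal{M}_\theta} - [Q^*_{\bar{\mathcal{M}}_{\hat\theta}}]_{\mathcal{M}_\theta}\big\|_\infty \le \epsilon_R + \gamma\big(\epsilon_T + \epsilon_\theta\big)\cdot(\text{bound on }\|V\|_\infty \text{ variation}) + \gamma\big\| Q^*_{\mathcal{M}_\theta} - [Q^*_{\bar{\mathcal{M}}_{\hat\theta}}]_{\mathcal{M}_\theta}\big\|_\infty.
\end{equation*}
The total-variation/$L^1$ distance between two next-state distributions, paired against a value function whose range is at most $\frac{R_\text{max}}{1-\gamma}$, contributes at most $\tfrac12 \cdot \frac{R_\text{max}}{1-\gamma}$ per unit of $L^1$ distance (the factor $\tfrac12$ because $\|P-Q\|_1$ counts mass twice), which is exactly where the $\frac{R_\text{max}}{2(1-\gamma)}$ factor comes from. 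Solving the recursion (dividing by $1-\gamma$) gives the claimed bound $\epsilon_R + \gamma(\epsilon_T+\epsilon_\theta)\frac{R_\text{max}}{2(1-\gamma)}$; here I would need to be slightly careful about whether the $\epsilon_R$ term also picks up a $\frac{1}{1-\gamma}$ — matching the clean form of \cref{thm:single_task_bisim} suggests the recursion should be organized so the $\epsilon_R$ stays outside, which works if one bounds $\|Q^*_{\mathcal{M}_\theta}-[Q^*_{\bar{\mathcal{M}}_{\hat\theta}}]_{\mathcal{M}_\theta}\|_\infty$ after one Bellman step rather than at the fixed point directly, or equivalently cites \cref{thm:single_task_bisim} with $\epsilon_T$ replaced by $\epsilon_T+\epsilon_\theta$.

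The main obstacle I anticipate is the bookkeeping around the $\epsilon_\theta$ term: I need to argue that replacing the true latent parameter $\theta$ by the learned $\hat\theta$ amounts to replacing the Block MDP $\mathcal{M}_\theta$ by another MDP whose lifted transition function differs in $L^1$ by at most $\epsilon_\theta$ at every state-action pair, so that it simply adds to the $\epsilon_T$ budget and the single-task result \cref{thm:single_task_bisim} applies verbatim with $\epsilon_T \mapsto \epsilon_T + \epsilon_\theta$. This requires being careful that the Wasserstein-1 distance in \cref{eq:theta} over the (latent) state metric upper-bounds the $L^1$/total-variation distance used in the $\epsilon_T$ definition — which holds when the underlying state metric is the discrete $\{0,1\}$ metric, consistent with the remark after \cref{eq:theta} that $d(x,y)=\|x-y\|_1$. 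Everything else is routine contraction-mapping manipulation.
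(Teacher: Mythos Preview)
Your proposal is correct and matches the paper's approach: the paper's proof is precisely the shortcut you identify at the end, namely a triangle-inequality splitting $\|\Phi T_{\theta}(x_1,a)-\Phi T_{\hat\theta}(x_2,a)\|_1 \le \|\Phi T_{\theta}(x_1,a)-\Phi T_{\theta}(x_2,a)\|_1 + \|\Phi T_{\theta}(x_2,a)-\Phi T_{\hat\theta}(x_2,a)\|_1 \le \epsilon_T + \|\theta-\hat\theta\|_1 = \epsilon_T + \epsilon_\theta$, followed by a direct invocation of \cref{thm:single_task_bisim} with $\epsilon_T \mapsto \epsilon_T + \epsilon_\theta$. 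Your extra care about the Wasserstein-versus-$L^1$ identification and the placement of the $\epsilon_R$ term is more than the paper spells out, but the argument is the same.
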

\begin{proof}
In the HiP-BMDP setting, we have a global encoder $\phi$ over all tasks, but the difference in transition distribution also includes $\theta$. The reward functions are the same across tasks, so there is no change to $\epsilon_R$. However, we now must incorporate difference in dynamics in $\epsilon_T$. Assuming we have two environments with hidden parameters $\theta_i,\theta_j\in\Theta$, we can compute $\epsilon^{\theta_i,\theta_j}_T$ across those two environments by joining them into a super-MDP:
\begin{equation*}
\begin{split}
    \epsilon^{\theta_i,\theta_j}_T&=\sup_{\substack{a\in\mathcal{A},\\x_1,x_2\in \mathcal{X},\phi(x_1)=\phi(x_2)}}\big\lVert \Phi T_{\theta_i}(x_1,a) - \Phi T_{\theta_j}(x_2,a) \big\rVert_1 \\
    &\leq \sup_{\substack{a\in\mathcal{A},\\x_1,x_2\in \mathcal{X},\phi(x_1)=\phi(x_2)}}\bigg(\big\lVert \Phi T_{\theta_i}(x_1,a) - \Phi T_{\theta_i}(x_2,a)\big\rVert_1 + \big\lVert \Phi T_{\theta_i}(x_2,a) - \Phi T_{\theta_j}(x_2,a) \big\rVert_1\bigg) \\
    &\leq \sup_{\substack{a\in\mathcal{A},\\x_1,x_2\in \mathcal{X},\phi(x_1)=\phi(x_2)}}\big\lVert \Phi T_{\theta_i}(x_1,a) - \Phi T_{\theta_i}(x_2,a)\big\rVert_1 + \sup_{\substack{a\in\mathcal{A},\\x_1,x_2\in \mathcal{X},\phi(x_1)=\phi(x_2)}}\big\lVert \Phi T_{\theta_i}(x_2,a) - \Phi T_{\theta_j}(x_2,a) \big\rVert_1 \\
    &=\epsilon_T^{\theta_i} + \|\theta_i - \theta_j\|_1
\end{split}
\end{equation*}

This result is intuitive in that with a shared encoder learning a per-task bisimulation relation, the distance between bisimilar states from another task depends on the change in transition distribution between those two tasks. We can now extend the single-task bisimulation bound (\cref{thm:single_task_bisim}) to the HiP-BMDP setting by denoting approximation error of $\theta$ as $\|\theta - \hat{\theta}\|_1<\epsilon_\theta$.
\end{proof}

\begin{theoremnum}[\ref{thm:sample_complexity}]
For any $\phi$ which defines an $(\epsilon_R,\epsilon_T,\epsilon_\theta)$-approximate bisimulation abstraction on a HiP-BMDP family $\mathcal{M}_\Theta$, we define the empirical measurement of $Q^*_{\bar{\mathcal{M}}_{\hat{\theta}}}$ over $D$ to be $Q^*_{\bar{\mathcal{M}}^D_{\hat{\theta}}}$. Then, with probability $\geq 1-\delta$,
\begin{equation}
     \big\| Q^*_{\mathcal{M}_\theta} - [Q^*_{\bar{\mathcal{M}}^D_{\hat{\theta}}}]_{\mathcal{M}_\theta}\big\|_\infty \leq \epsilon_R + \gamma (\epsilon_T + \epsilon_\theta) \frac{R_\text{max}}{2(1-\gamma)} + \frac{R_\text{max}}{(1-\gamma)^2}\sqrt{\frac{1}{2n_\phi(D)}\log\frac{2|\phi(\mathcal{X})||\mathcal{A}|}{\delta}}.
\end{equation}
\end{theoremnum}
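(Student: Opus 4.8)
The plan is to combine the deterministic value bound from \cref{thm:qerror_hipbmdp} with a concentration argument that controls the additional error introduced by estimating the abstract MDP's reward and transition model from only $n_\phi(D)$ samples per abstract state-action pair. Concretely, I would introduce an intermediate object: the ``population'' abstract MDP $\bar{\mathcal{M}}_{\hat\theta}$ built from the true (lifted) reward and transition functions on the abstraction $\phi$, and the ``empirical'' abstract MDP $\bar{\mathcal{M}}^D_{\hat\theta}$ whose reward and transition probabilities are the sample averages over $D$. Then a triangle inequality splits the target quantity:
\begin{equation*}
\big\| Q^*_{\mathcal{M}_\theta} - [Q^*_{\bar{\mathcal{M}}^D_{\hat\theta}}]_{\mathcal{M}_\theta}\big\|_\infty \leq \big\| Q^*_{\mathcal{M}_\theta} - [Q^*_{\bar{\mathcal{M}}_{\hat\theta}}]_{\mathcal{M}_\theta}\big\|_\infty + \big\| [Q^*_{\bar{\mathcal{M}}_{\hat\theta}}]_{\mathcal{M}_\theta} - [Q^*_{\bar{\mathcal{M}}^D_{\hat\theta}}]_{\mathcal{M}_\theta}\big\|_\infty.
\end{equation*}
The first term is bounded by $\epsilon_R + \gamma(\epsilon_T + \epsilon_\theta)\frac{R_\text{max}}{2(1-\gamma)}$ directly by \cref{thm:qerror_hipbmdp}, so the work is all in the second term.

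For the second term, I would first reduce the difference of lifted evaluations to a difference of optimal $Q$-functions on the abstract state space, i.e. $\|Q^*_{\bar{\mathcal{M}}_{\hat\theta}} - Q^*_{\bar{\mathcal{M}}^D_{\hat\theta}}\|_\infty$ over $\phi(\mathcal{X})\times\mathcal{A}$, since lifting back to $\mathcal{M}_\theta$ does not increase the sup-norm. Next I would invoke a standard simulation-lemma / contraction argument: if two MDPs share a state-action space and differ by at most $\Delta_R$ in reward and $\Delta_P$ (in $\ell_1$) in transition kernel at every state-action pair, then their optimal $Q$-functions differ in sup-norm by at most $\frac{\Delta_R}{1-\gamma} + \frac{\gamma R_\text{max} \Delta_P}{2(1-\gamma)^2}$ (or a comparable expression). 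Then I would bound $\Delta_R$ and $\Delta_P$ with high probability by Hoeffding's inequality: each empirical reward mean concentrates around its expectation at rate $\sqrt{\frac{1}{2n_\phi(D)}\log(2/\delta')}$, and similarly each empirical transition probability; taking a union bound over all $|\phi(\mathcal{X})||\mathcal{A}|$ abstract state-action pairs (and possibly over next-states, folded into the $\ell_1$ norm) with $\delta' = \delta/(|\phi(\mathcal{X})||\mathcal{A}|)$ yields the $\log\frac{2|\phi(\mathcal{X})||\mathcal{A}|}{\delta}$ factor. Matching the claimed coefficient $\frac{R_\text{max}}{(1-\gamma)^2}$ on the square-root term is the bookkeeping target, so I would be careful to track whether the reward term or the transition term dominates and absorb constants into $R_\text{max}$ as the paper implicitly does.

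The main obstacle I anticipate is making the concentration step rigorous under the independence caveat the paper itself flags: Hoeffding requires the samples in each $D_{x,a}$ bucket to be i.i.d., which fails for trajectory data, so I would state this as an explicit hypothesis (as the theorem statement's preamble already does) and treat the $n_\phi(D)$ samples per bucket as independent draws. A secondary subtlety is the transition concentration: bounding $\|\hat P(\cdot|x,a) - P(\cdot|x,a)\|_1$ uniformly either needs an $\ell_1$-concentration inequality (introducing a $\sqrt{|\phi(\mathcal{X})|}$ factor) or a reformulation in terms of the error in the one-step Bellman backup $\|\sum_{x'}(\hat P - P)(x'|x,a) V(x')\|_\infty$, which is a bounded-difference average and concentrates at the clean Hoeffding rate without the state-space-size penalty; I would use the latter route since that is what produces exactly the stated bound. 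Finally I would combine the two triangle-inequality pieces and simplify to obtain the displayed inequality, noting that the first two terms are deterministic and only the third carries the ``with probability $\geq 1-\delta$'' qualifier.
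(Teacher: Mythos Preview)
Your proposal is correct and follows essentially the same route as the paper: the identical triangle-inequality split, the first term handled by \cref{thm:qerror_hipbmdp}, and the second term reduced to $\|Q^*_{\bar{\mathcal{M}}_{\hat\theta}} - Q^*_{\bar{\mathcal{M}}^D_{\hat\theta}}\|_\infty$ on the abstract space, then controlled by a contraction step yielding $\frac{1}{1-\gamma}\|\mathcal{T}^\phi_D Q^* - \mathcal{T}^\phi Q^*\|_\infty$ and a Hoeffding/McDiarmid bound on the one-step Bellman backup with a union bound over $|\phi(\mathcal{X})||\mathcal{A}|$ pairs. The only cosmetic differences are that the paper names McDiarmid rather than Hoeffding (equivalent here, since the backup is an average of i.i.d.\ terms bounded in $[0,R_\text{max}/(1-\gamma)]$) and briefly cites the $\frac{1}{1-\gamma}$-Lipschitz property of the value function from \cref{thm:lipschitz} when setting the bounded-difference constant; your ``latter route'' is exactly what they do.
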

\begin{proof}
\begin{equation*}
\begin{split}
    \big\| Q^*_{\mathcal{M}_\theta} - [Q^*_{\bar{\mathcal{M}}^D_{\hat{\theta}}}]_{\mathcal{M}_\theta}\big\|_\infty &\leq  \big\| Q^*_{\mathcal{M}_\theta} - [Q^*_{\bar{\mathcal{M}}_{\hat{\theta}}}]_{\mathcal{M}_\theta}\big\|_\infty + \big\| [Q^*_{\bar{\mathcal{M}}_{\hat{\theta}}}]_{\mathcal{M}_\theta} - [Q^*_{\bar{\mathcal{M}}^D_{\hat{\theta}}}]_{\mathcal{M}_\theta}\big\|_\infty \\
    &= \big\| Q^*_{\mathcal{M}_\theta} - [Q^*_{\bar{\mathcal{M}}_{\hat{\theta}}}]_{\mathcal{M}_\theta}\big\|_\infty + \big\| Q^*_{\bar{\mathcal{M}}_{\hat{\theta}}} - Q^*_{\bar{\mathcal{M}}^D_{\hat{\theta}}}\big\|_\infty \\
\end{split}
\end{equation*}
The first term is solved by \cref{thm:qerror_hipbmdp}, so we only need to solve the second term using McDiarmid's inequality and the knowledge that the value function of a bisimulation representation is $\frac{1}{1-\gamma}$-Lipschitz from \cref{thm:lipschitz}.

First, we write this difference to be a deviation from an expectation in order to apply the concentration inequality.
\begin{equation*}
\begin{split}
\big\| Q^*_{\bar{\mathcal{M}}_{\hat{\theta}}} - Q^*_{\bar{\mathcal{M}}^D_{\hat{\theta}}}\big\|_\infty &= \big\|  Q^*_{\bar{\mathcal{M}}_{\hat{\theta}}} - \mathcal{T}^\phi_D Q^*_{\bar{\mathcal{M}}_{\hat{\theta}}} + \mathcal{T}^\phi_D Q^*_{\bar{\mathcal{M}}_{\hat{\theta}}} -  \mathcal{T}^\phi_D Q^*_{\bar{\mathcal{M}}^D_{\hat{\theta}}}\big\|_\infty \\
&\leq \|Q^*_{\bar{\mathcal{M}}_{\hat{\theta}}} - \mathcal{T}^\phi_D Q^*_{\bar{\mathcal{M}}_{\hat{\theta}}}\|_\infty + \gamma \|Q^*_{\bar{\mathcal{M}}_{\hat{\theta}}} - Q^*_{\bar{\mathcal{M}}^D_{\hat{\theta}}}\|_\infty \\
&\leq \frac{1}{1-\gamma}\|\mathcal{T}^\phi_D Q^*_{\bar{\mathcal{M}}_{\hat{\theta}}} - \mathcal{T}^\phi Q^*_{\bar{\mathcal{M}}_{\hat{\theta}}}\|_\infty \\
\end{split}
\end{equation*}

Now we can apply McDiarmid's inequality,
\begin{equation*}
 \mathbb{P}_D\bigg[\big| Q^*_{\bar{\mathcal{M}}_{\hat{\theta}}} - Q^*_{\bar{\mathcal{M}}^D_{\hat{\theta}}}\big|\geq t \bigg] \leq 2\exp{\bigg(-\frac{2t^2|D_{\phi(x),a}|}{R_\text{max}^2/(1-\gamma)^2}\bigg)}.
\end{equation*}
Solve for the $t$ that makes this inequality hold for all $(\phi(x),a)\in \mathcal{X}\times \mathcal{A}$ with a union bound over all $|\phi(\mathcal{X})||\mathcal{A}|$ abstract states,
\begin{equation*}
t> \frac{R_\text{max}}{1-\gamma}\sqrt{\frac{1}{2n_\phi(D)}\log\frac{2|\phi(\mathcal{X})||\mathcal{A}|}{\delta}}.   
\end{equation*}
Combine to get
\begin{equation*}
    \big\| Q^*_{\mathcal{M}_\theta} - [Q^*_{\bar{\mathcal{M}}^D_{\hat{\theta}}}]_{\mathcal{M}_\theta}\big\|_\infty \leq \epsilon_R + \gamma (\epsilon_T + \epsilon_\theta) \frac{R_\text{max}}{2(1-\gamma)} + \frac{R_\text{max}}{(1-\gamma)^2}\sqrt{\frac{1}{2n_\phi(D)}\log\frac{2|\phi(\mathcal{X})||\mathcal{A}|}{\delta}}.
\end{equation*}
\end{proof}

\section{Additional Implementation Details}
\label{app:implementation}

In Figure \ref{app::fig::env::walker-v1}, we show the variation in the left foot of the walker. 
\begin{figure}[h]
\centering
\includegraphics[width=0.15\linewidth]{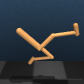}
    \hspace{10pt}
\includegraphics[width=0.15\linewidth]{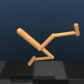}
    \hspace{10pt}
\includegraphics[width=0.15\linewidth]{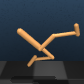}
\caption{\small Variation in Walker (V1) across different tasks.}
\label{app::fig::env::walker-v1}
\end{figure}

\paragraph{MTRL Algorithm}

The Multitask RL algorithm for the HiP-BMDP setting can be  found in \cref{alg:multitask}. We take the DeepMDP baseline \cite{gelada2019deepmdp} and incorporate our HiP-BMDP objective (text shown in red color)

\newcommand{\Task}{\mathcal{T}}
\newcommand{\task}{\mathcal{T}}
\newcommand{\z}{\mathbf{z}}
\newcommand{\act}{\mathbf{a}}
\newcommand{\state}{\mathbf{s}}

\begin{algorithm}[t]
\begin{algorithmic}[1]
\REQUIRE Along with DeepMDP components (Actor, Critic, Dynamics Model ($M$)), an additional environment encoder $\psi$ to generated task-specific $\theta$ parameters.
\FOR{each timestep $t=1..T$}
    \FOR{each $\task_i$}
        \STATE  $a^i_t \sim \pi^i(\cdot | s^i_{t})$
        \STATE  $s'^i_t \sim p^i(\cdot | s^i_{t}, a^i_t)$
        \STATE $\mathcal{D} \leftarrow \mathcal{D} \cup (s^i_t, a^i_t, r(s^i_t, a^i_t), s'^i_t), i$
        \STATE \textsc{UpdateCritic}($\mathcal{D}, i$) (uses data only from $i^{th}$ task) 
        \STATE \textsc{UpdateActor}($\mathcal{D}, i$) (uses data only from $i^{th}$ task)
        \STATE \textsc{UpdateUsingHip-BMDPLoss}($\mathcal{D}, i$)
    \ENDFOR
\ENDFOR
\end{algorithmic}
\caption{{\bf HiP-BMDP training for the Multi-task RL setting.}}
\label{alg:multitask}
\end{algorithm}

\begin{algorithm}[t]
\begin{algorithmic}[1]
\REQUIRE Batches of data for the different tasks $\{\task_i\}_{i=1 \ldots T}$ sampled from the Replay Buffer $\mathcal{D}$, learning rates $\alpha_1$ and $\alpha_2$, index of the current task $i$, Transition Model $M$, environment encoder $\psi$.
\FOR{each batch of dataset $t=1..T$, $t \neq i$ }
    \STATE {Compute $\mathcal{L}(\psi, M) = \mathcal{L}^i(\psi, M, i, t)$ using  \cref{eq:total_loss}}
\STATE $\psi \gets \psi - \alpha_1  \nabla_\theta \sum_i \mathcal{L}$
\STATE $M \gets M - \alpha_2  \nabla_\theta \sum_i \mathcal{L}$
\ENDFOR
\end{algorithmic}
\caption{{\bf UpdateModelUsingHip-BMDPLoss}}
\label{alg:UpdateModelUsingHip-BMDPLoss}
\end{algorithm}

\paragraph{Meta-RL Algorithm}
The meta-RL algorithm for the HiP-MDP setting can be  found in \cref{alg:metatrain}. We take the PEARL algorithm~\citep{rakelly2019pearl} and incorporate our HiP-MDP objective (text shown in red color)

\subsection{Baselines}
\label{sec:baselines}
For PCGrad, the authors recommend projecting the gradient with respect to all previous tasks. In practice, that leads to very poor training. Instead, we observe that it is better to project the gradients with respect to any one task (randomly selected per update). We use this scheme in all the experiments. For GradNorm, we observe that the learned weights $w_i$ (for weighing per-task loss) can become negative for some tasks (which means the model tries to unlearn those tasks). In practice, we clamp the $w_i$ values to not become smaller than a threshold. 

\subsection{Hyper Parameters}

\subsubsection{MTRL Algorithm}
All the hyper parameters (for MTRL algorithm) are listed in~\cref{table:mtrl_hyper_params}.

\begin{table}[h!]
\centering
\begin{tabular}{|l|c|}
\hline
Parameter name        & Value \\
\hline

Actor learning rate & $10^{-3}$ \\
Actor update frequency & $2$ \\
Actor log stddev bounds & $[-10, 2]$ \\
Alpha learning rate & $10^{-4}$ \\
Batch size & $128$ \\
Critic learning rate & $10^{-3}$ \\
Critic target update frequency & $2$ \\
Critic Q-function soft-update rate $\tau_{\textrm{Q}}$ & 0.01 \\
Critic encoder soft-update rate $\tau_{\textrm{enc}}$ & 0.05 \\
Discount $\gamma$ & $0.99$ \\
Decoder learning rate & $10^{-3}$ \\
Critic update frequency & $1$ \\
Encoder feature dimension & $100$ \\
Encoder learning rate & $10^{-3}$ \\
Number of encoder layers & $4$ \\
Number of filters in encoder & $32$ \\
Hidden Dimension & $1024$ \\
Replay buffer capacity & $1000000$ \\
Optimizer & Adam \\
Temperature Adam's $\beta_1$ & $0.5$ \\
Init temperature & $0.1$ \\
Number of embeddings in $\psi$ & $10$ \\
Embedding dimension for $\psi$ & $100$ \\
Learning rate for $\psi$ & $10^{-3}$ \\
$\alpha_{\psi}$ (ie $\alpha$ for $\psi$) for Finger-Spin-V0 & $0.1$ \\
$\alpha_{\psi}$ (ie $\alpha$ for $\psi$) for Cheetah-Run-V0 & $0.1$ \\
$\alpha_{\psi}$ (ie $\alpha$ for $\psi$) for Walker-Run-V0 & $1.0$ \\
$\alpha_{\psi}$ (ie $\alpha$ for $\psi$) for Walker-Run-V1 & $0.01$ \\
$\alpha_{distal}$ (ie $\alpha$ for Distral) for Finger-Spin-V0 & $0.05$ \\
$\alpha_{distal}$ (ie $\alpha$ for Distral) for Cheetah-Run-V0 & $0.01$ \\
$\alpha_{distal}$ (ie $\alpha$ for Distral) for Walker-Run-V0 & $0.05$ \\
$\alpha_{distal}$ (ie $\alpha$ for Distral) for Walker-Run-V1 & $0.01$ \\
$\beta_{distal}$ (ie $\beta$ for Distral) & $1.0$ \\
\hline
\end{tabular}\\
\caption{\label{table:mtrl_hyper_params} A complete overview of used hyper parameters.}
\end{table}

\subsubsection{MetaRL Algorithm}

For MetaRL, we use the same hyperparameters as used by PEARL \cite{rakelly2019pearl}. We set $\alpha_{\phi} = 0.01$ for all environments, other than \textit{Walker-Stand} environments where $\alpha_{\phi} = 0.001$.

\begin{algorithm}[t]
\begin{algorithmic}[1]
\REQUIRE Batch of training tasks $\{\task_i\}_{i=1 \ldots T}$ from $p(\task)$, learning rates $\alpha_1, \alpha_2, \alpha_3,$ \textcolor{red}{$\alpha_{\phi}$}
\STATE Initialize replay buffers $\mathcal{B}^i$ for each training task
\WHILE{not done}
\FOR{each $\task_i$} 
\STATE Initialize context $C^{i} = \{\}$
\FOR{$k = 1, \ldots, K$}
\STATE Sample $\z \sim q_{\phi}(\z | C^{i})$
\STATE Gather data from $\pi_{\theta}(\act | \state, \z)$ and add to $\mathcal{B}^{i}$
\STATE Update $C^{i} = \{(\state_j, \act_j, \state'_j, r_j)\}_{j:1\ldots N} \sim \mathcal{B}^{i}$
\ENDFOR
\ENDFOR
\FOR{step in training steps}
\FOR{each $\task_i$}
\STATE Sample context $C^i \sim \mathcal{S}_c(\mathcal{B}^i)$ and RL batch $b^i \sim \mathcal{B}^{i}$
\STATE Sample $\z \sim q_{\phi}(\z | C^{i})$
\STATE $\mathcal{L}^i_{actor} = \mathcal{L}_{actor}(b^i, \z)$
\STATE $\mathcal{L}^i_{critic} = \mathcal{L}_{critic}(b^i, \z)$
\STATE $\mathcal{L}^i_{KL} = \beta D_{\text{KL}}(q(\z | C^i) || r(\z))$
\STATE \textcolor{red}{Sample a RL batch $b^{j}$ from any other task j}
\STATE \textcolor{red}{Compute $\mathcal{L}^i_{BiSim} = \mathcal{L}^i(q, T, i, j)$ using the equation \ref{eq:total_loss}}
\ENDFOR
\STATE $\phi \gets \phi - \alpha_1 \nabla_\phi \sum_i \left(\mathcal{L}^i_{critic} + \mathcal{L}^i_{KL} + \textcolor{red}{ \alpha_{\phi} \times \mathcal{L}^i_{BiSim}} \right)$
\STATE $\theta_{\pi} \gets \theta_{\pi} - \alpha_2  \nabla_\theta \sum_i \mathcal{L}^i_{actor}$
\STATE $\theta_{Q} \gets \theta_{Q} - \alpha_3  \nabla_\theta \sum_i \mathcal{L}^i_{critic}$
\ENDFOR
\ENDWHILE
\end{algorithmic}
\caption{{\bf HiP-MDP training for the meta-RL setting.}}
\label{alg:metatrain}
\end{algorithm}

\section{Additional Results}
\label{app::results}
Along with the environments described in \ref{subsec::environments}, we considered the following additional environments:

\begin{enumerate}
    \item \texttt{Walker-Stand-V0}: \texttt{Walker-Stand} task where the friction coefficient, between the ground and the walker's leg, varies across different environments.
    \item \texttt{Walker-Walk-V0}: \texttt{Walker-Walk} task where the friction coefficient, between the ground and the walker's leg, varies across different environments.
    \item \texttt{Walker-Stand-V1}: \texttt{Walker-Stand} task where the size of left-foot of the walker varies across different environments.
    \item \texttt{Walker-Walk-V1}: \texttt{Walker-Walk} task where the size of left-foot of the walker varies across different environments.
\end{enumerate}

\subsection{Multi-Task Setting}

In \cref{fig::multi_task::interpolation}, we observe that the HiP-BMDP method consistently outperforms other baselines when evaluated on the interpolation environments (zero-shot transfer). As noted previously, the effectiveness of our proposed model can not be attributed to task-embeddings alone as HiP-BMDP-nobisim model uses the same architecture as the HiP-BMDP model but does not include the task bisimulation metric loss. We hypothesise that the Distral-Ensemble baseline behaves poorly because it cannot leverage a shared global dynamics model.

\begin{figure}[h]
\vspace{-5pt}
\includegraphics[width=0.245\linewidth]{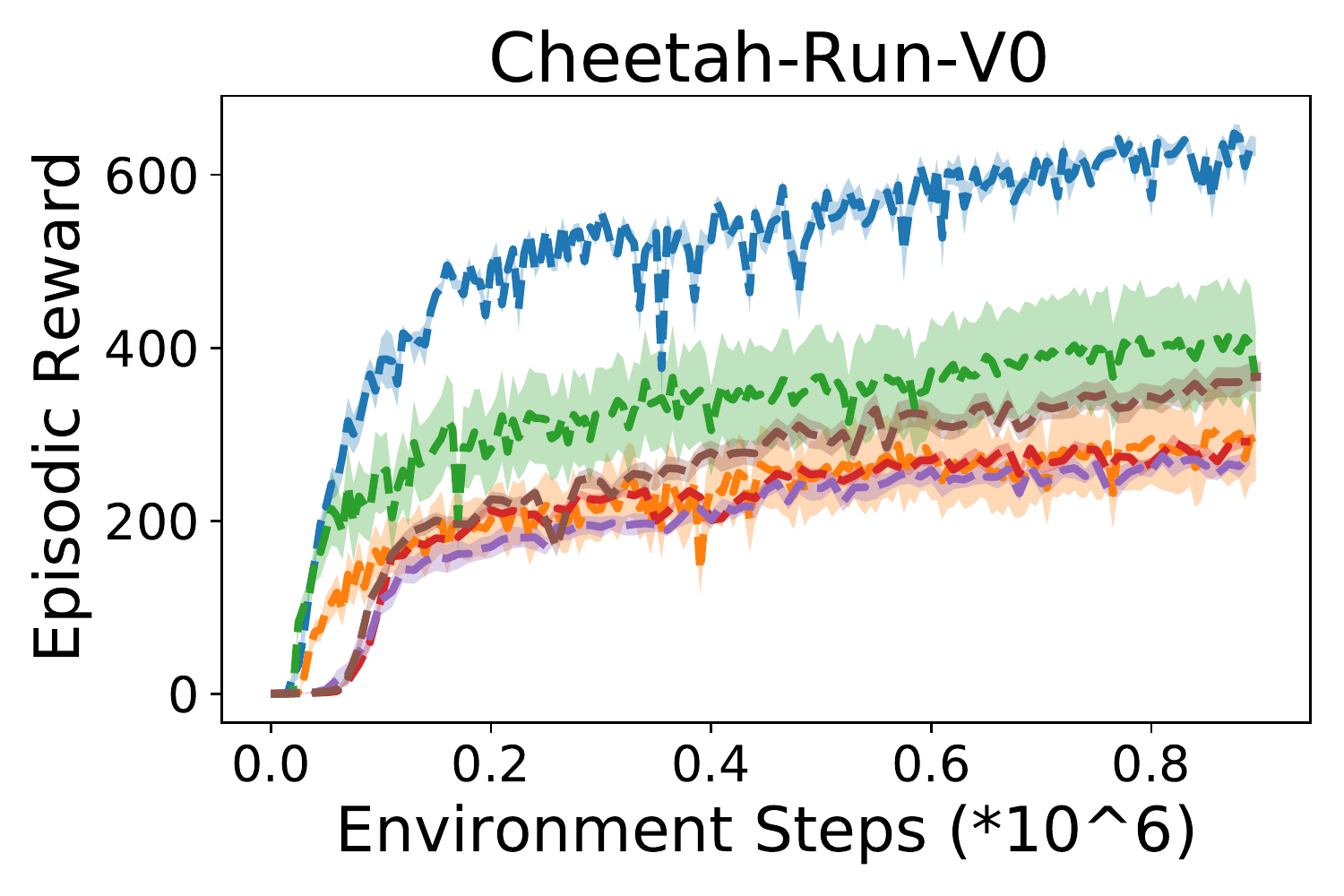}
\includegraphics[width=0.245\linewidth]{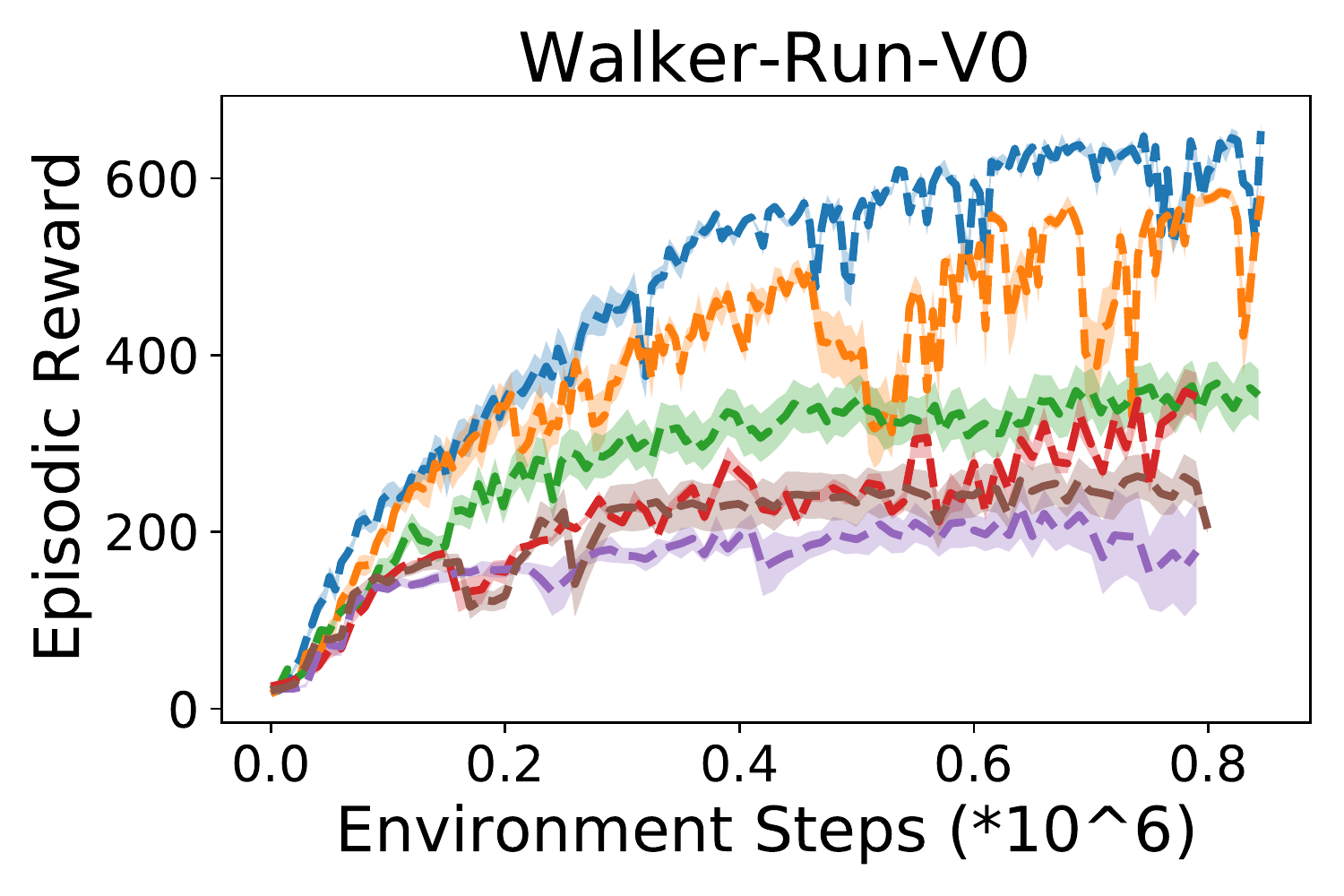}
\includegraphics[width=0.245\linewidth]{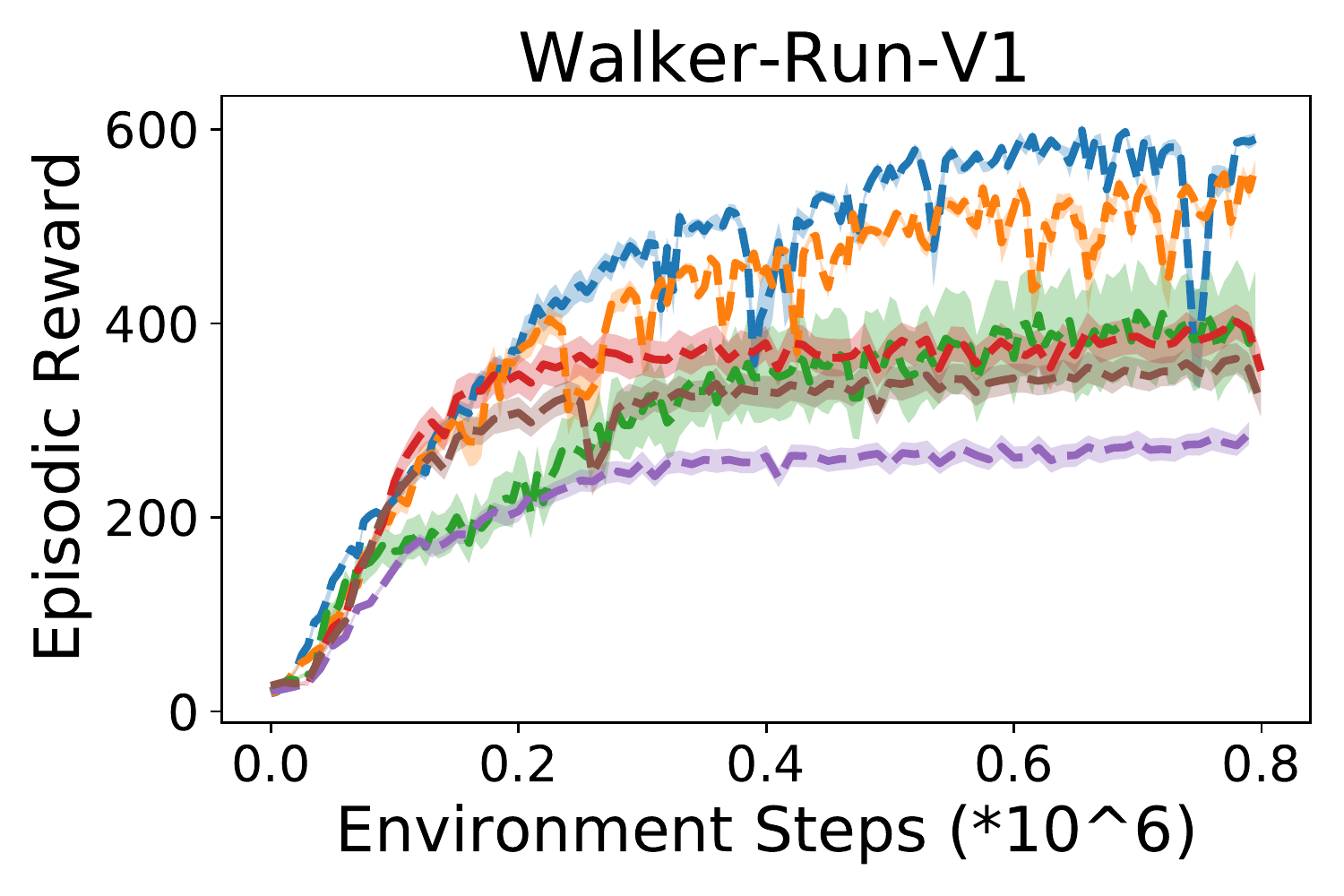}
\includegraphics[width=0.245\linewidth]{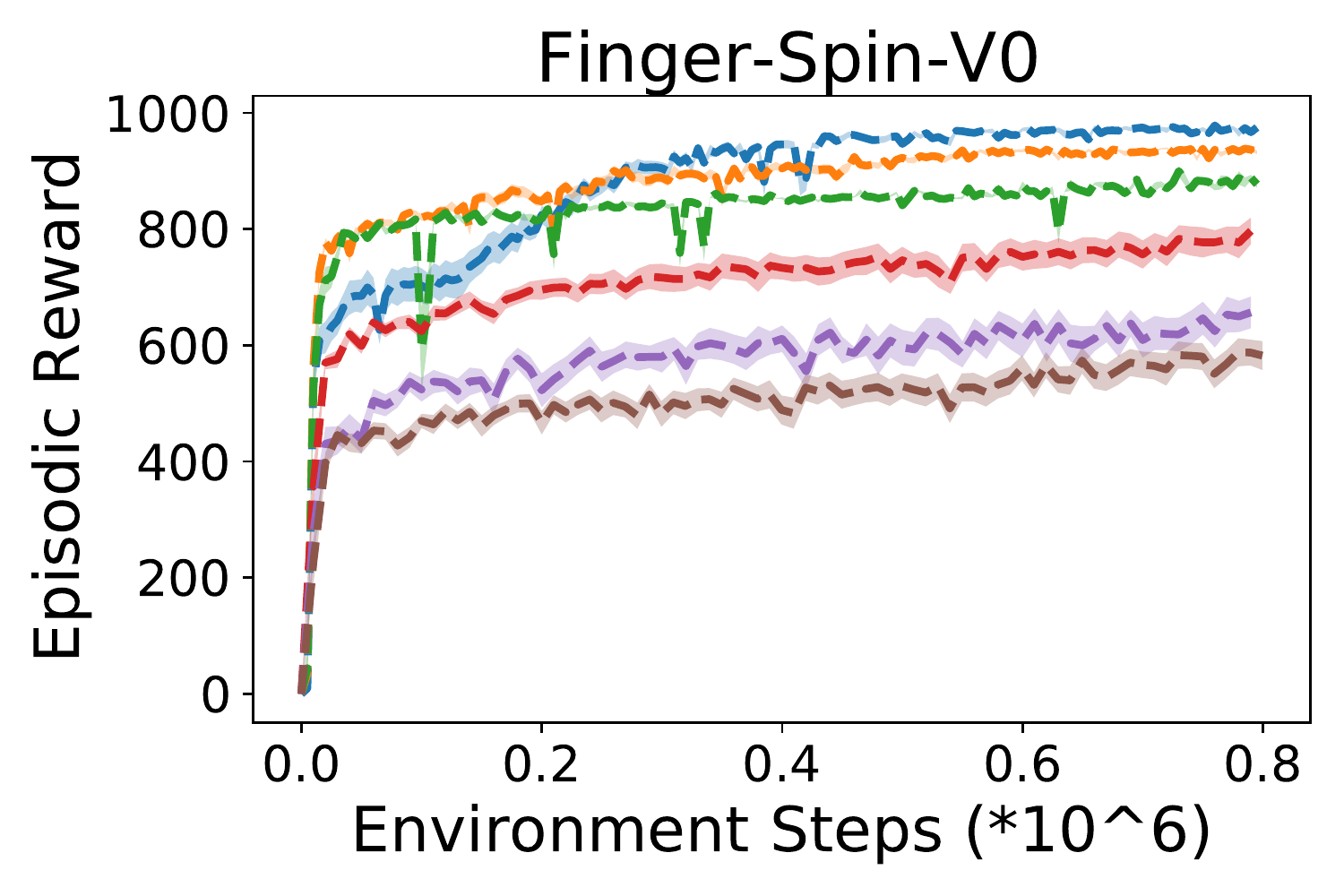}
\includegraphics[width=\linewidth]{figs/results/multitask/legend.png}
\caption{Multi-Task Setting. Performance on the training tasks.Note that the environment steps (on the x-axis) denote the environment steps for each task. Since we are training over four environments, the actual number of steps is approximately 3.2 million.}
\vspace{-10pt}
\label{fig::multi_task::eval_and_extrapolation::train}
\end{figure}

\begin{figure}[h]
\includegraphics[width=0.24\linewidth]{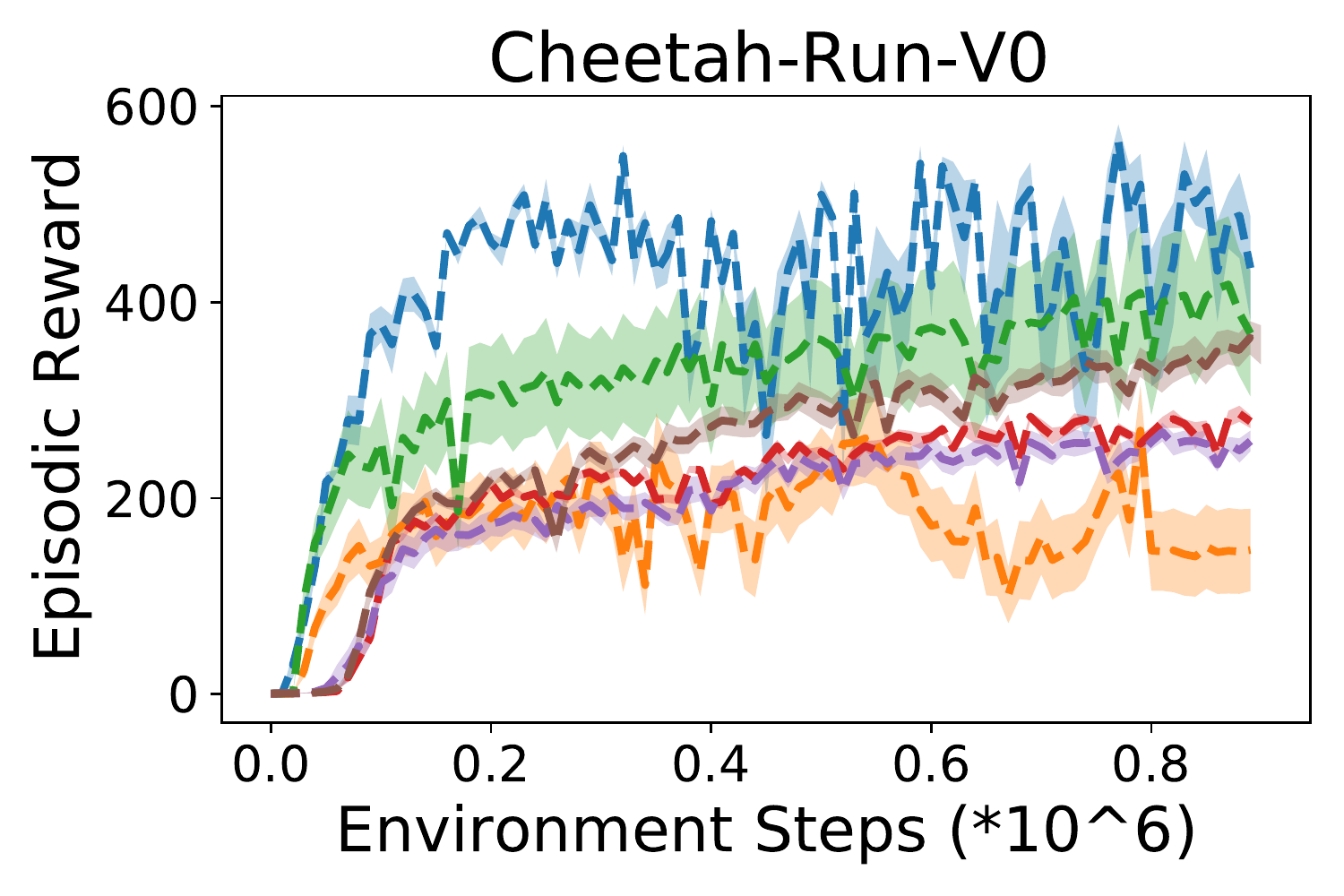}
\includegraphics[width=0.24\linewidth]{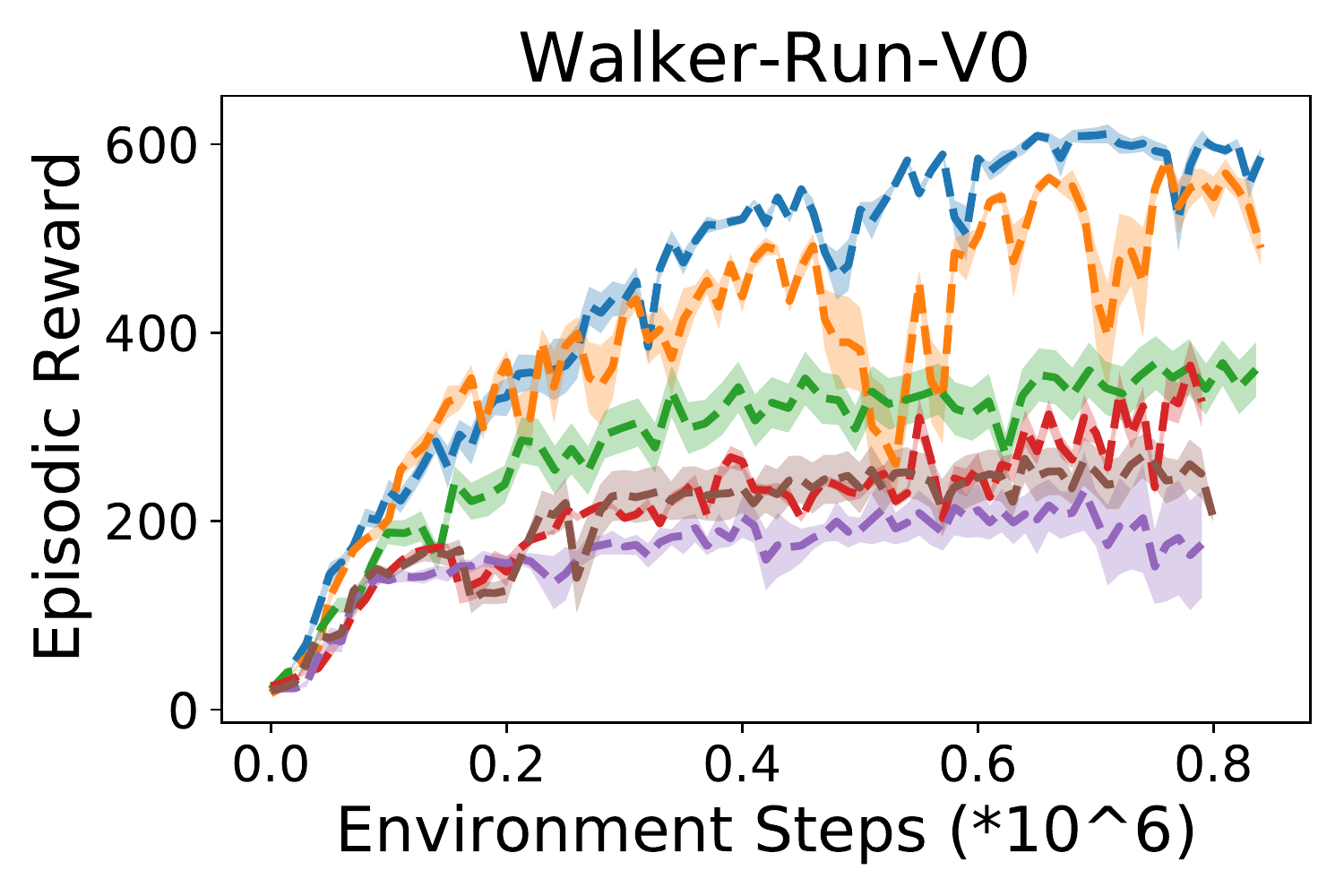}
\includegraphics[width=0.24\linewidth]{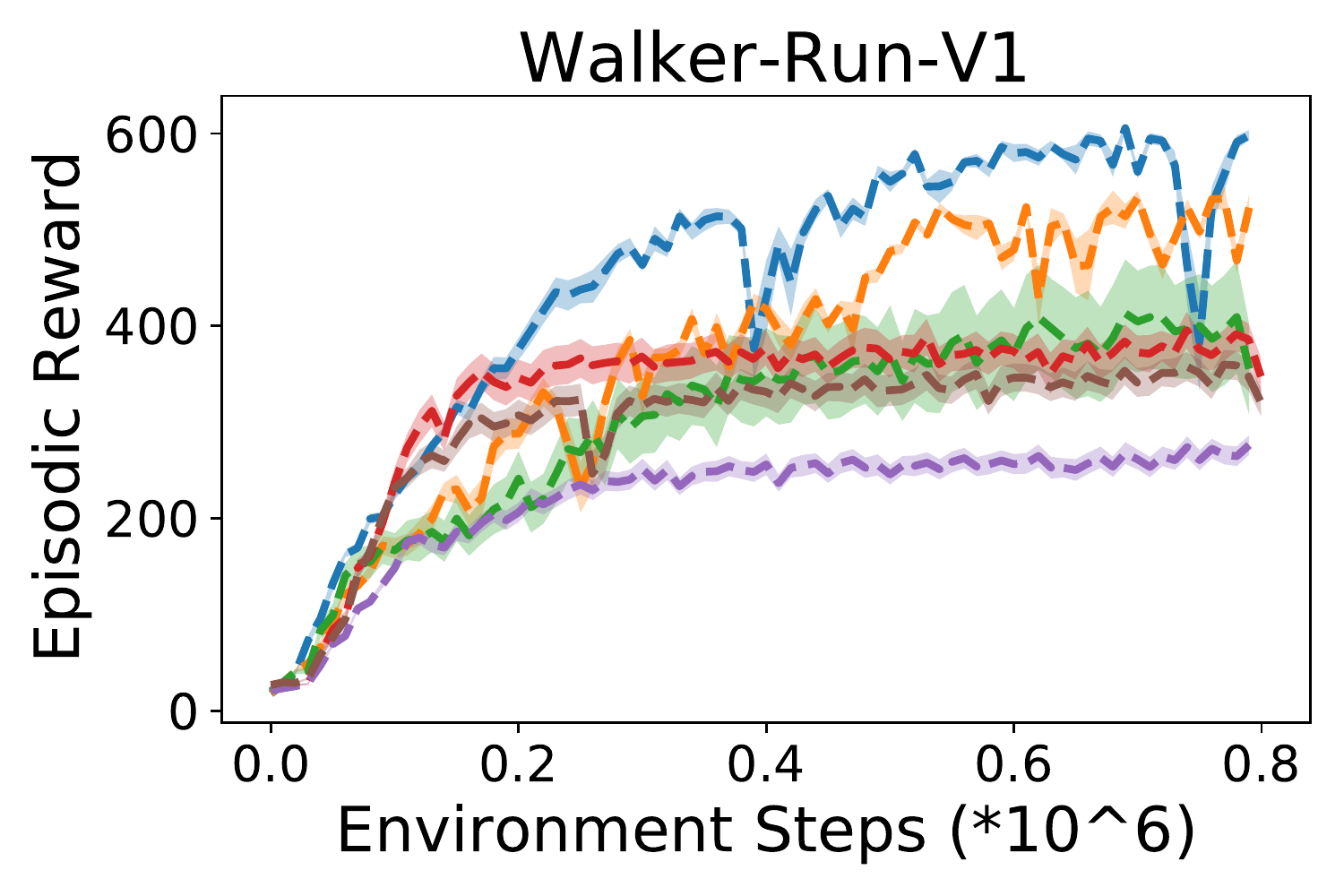}
\includegraphics[width=0.24\linewidth]{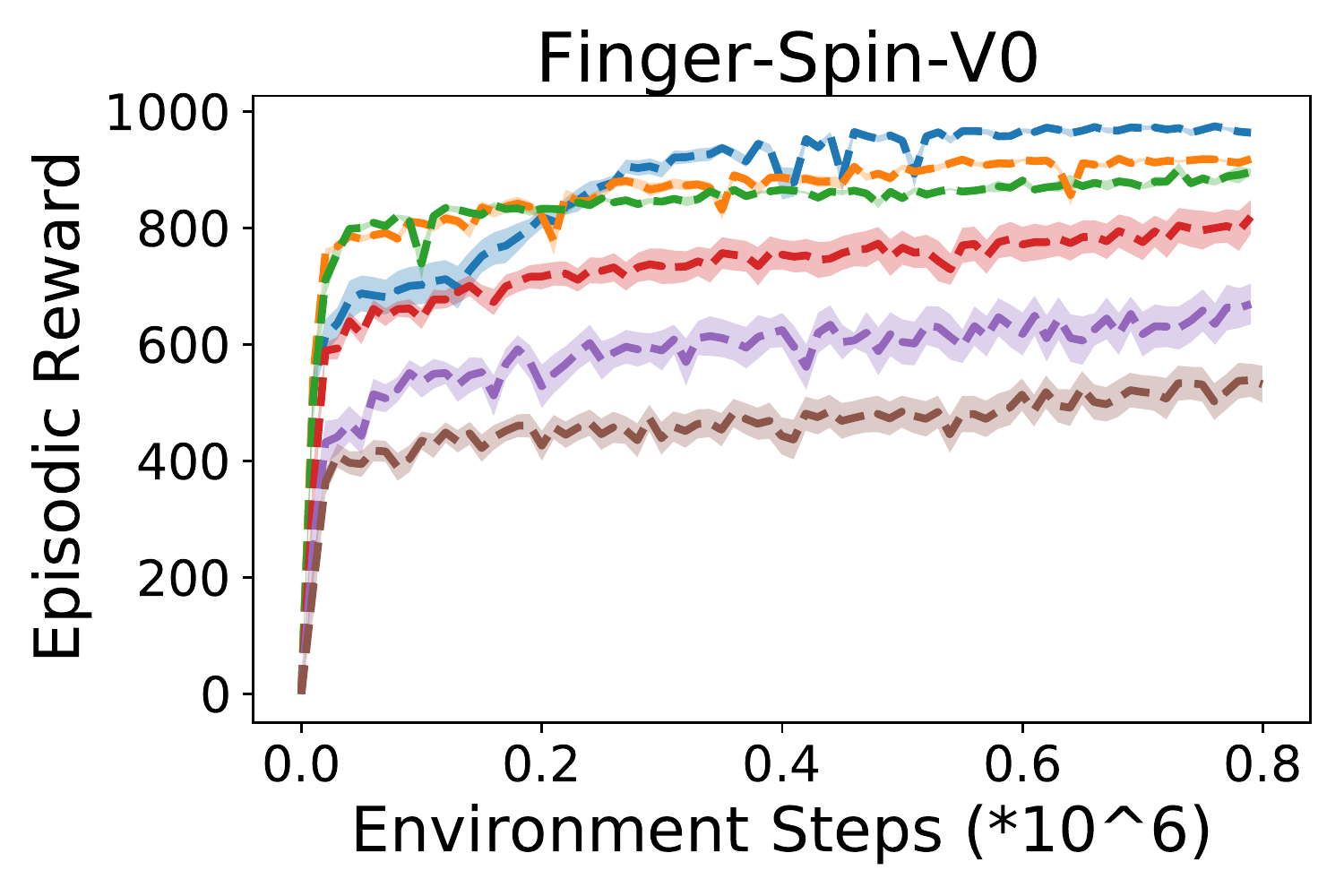}
\includegraphics[width=\linewidth]{figs/results/multitask/legend.png}
\caption{\small Zero-shot generalization performance on the interpolation tasks in the MTRL setting. HiP-BMDP (ours) consistently outperforms other baselines. 10 seeds, 1 standard error shaded.}
\label{fig::multi_task::interpolation}
\end{figure}

\begin{figure}[h]
\includegraphics[width=0.245\linewidth]{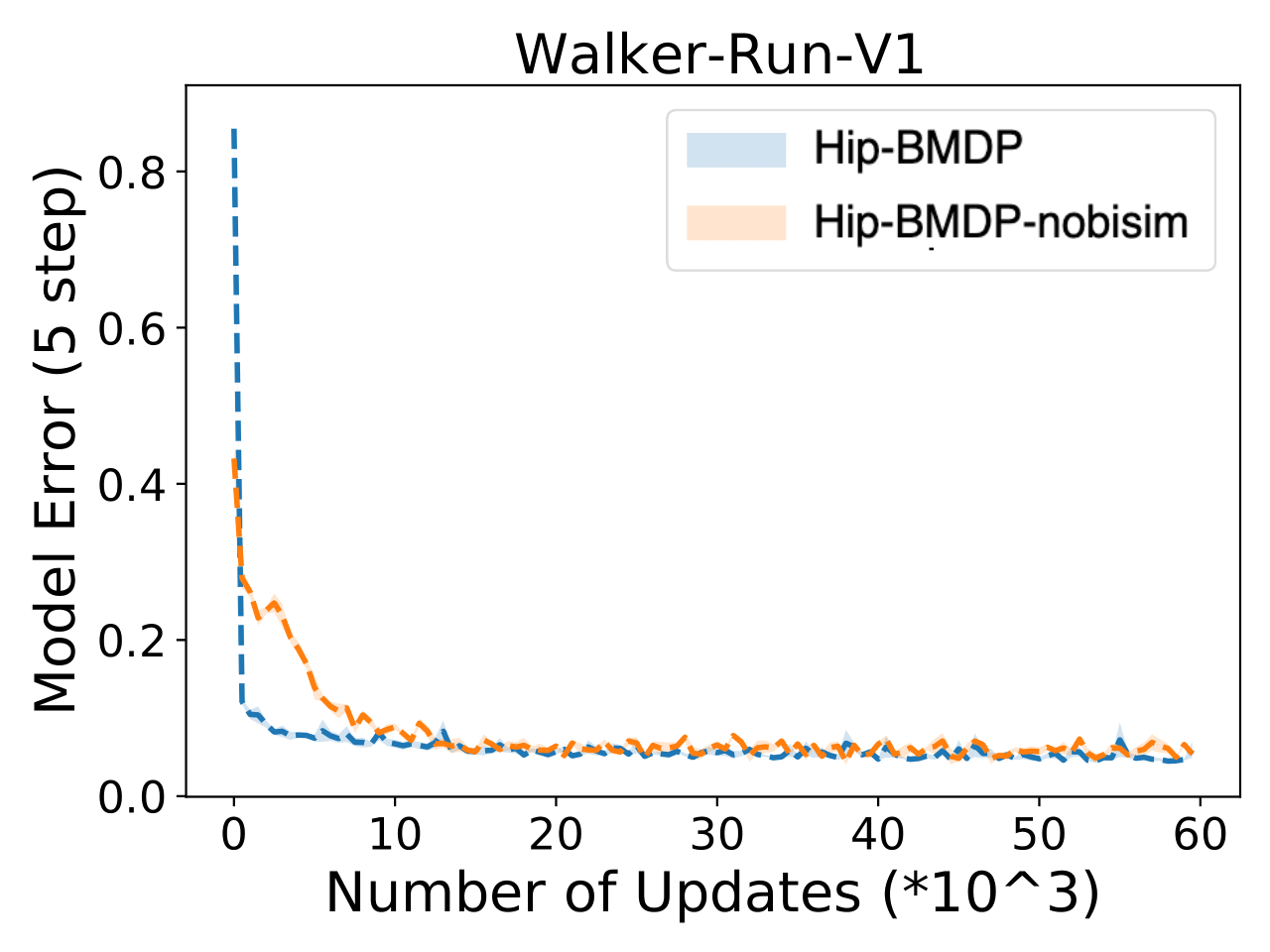}
\includegraphics[width=0.245\linewidth]{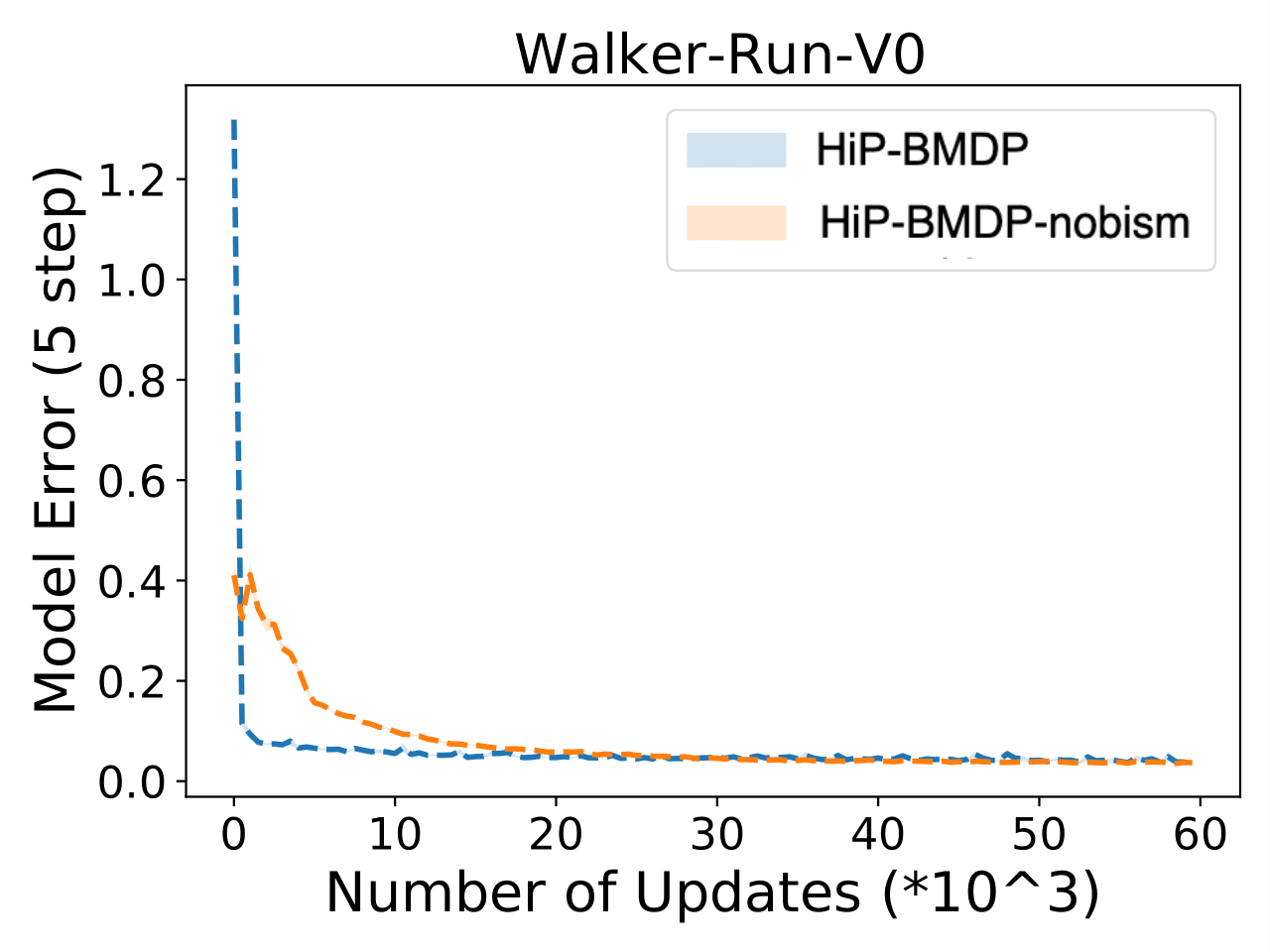}
\includegraphics[width=0.245\linewidth]{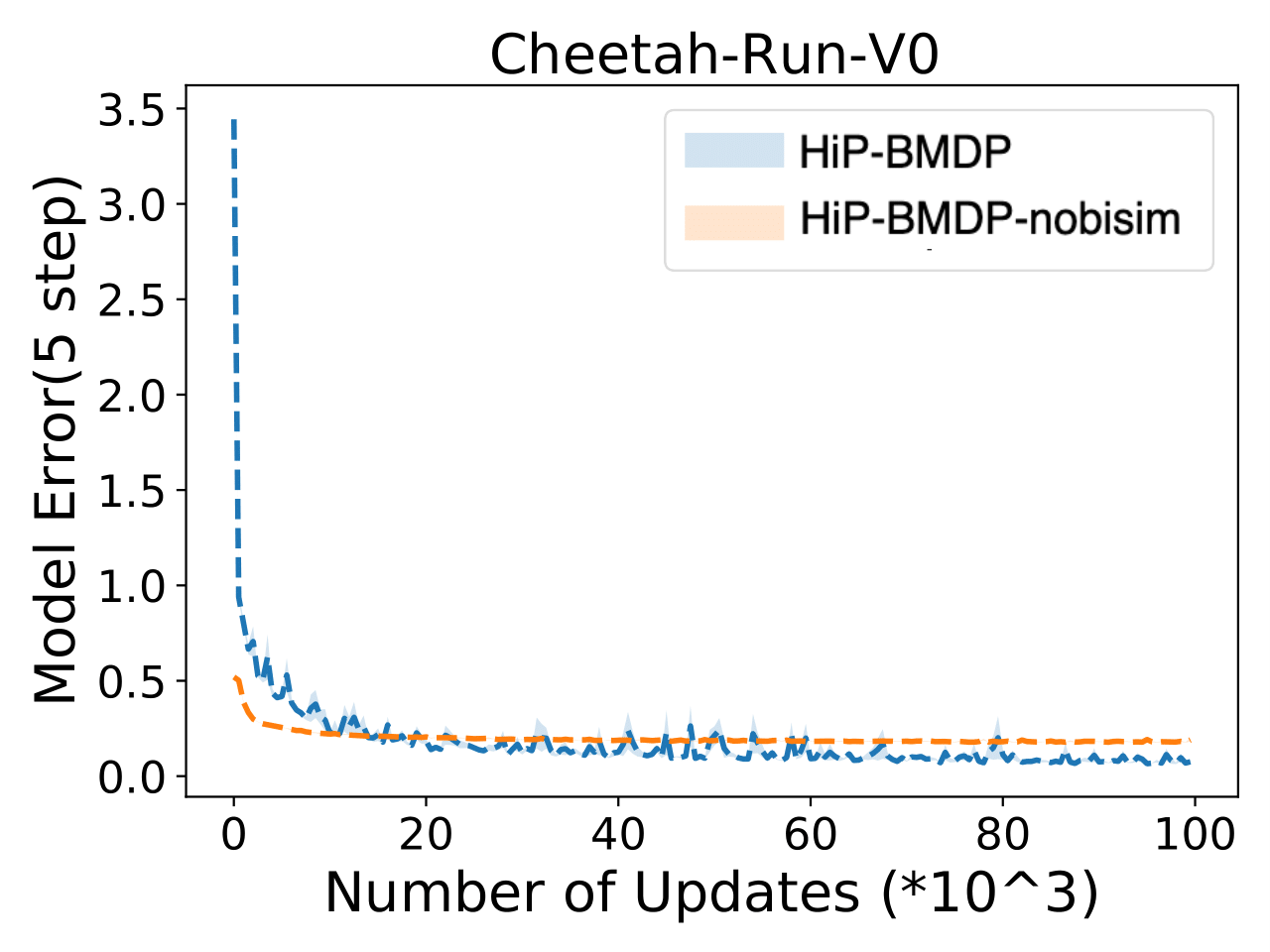}
\includegraphics[width=0.245\linewidth]{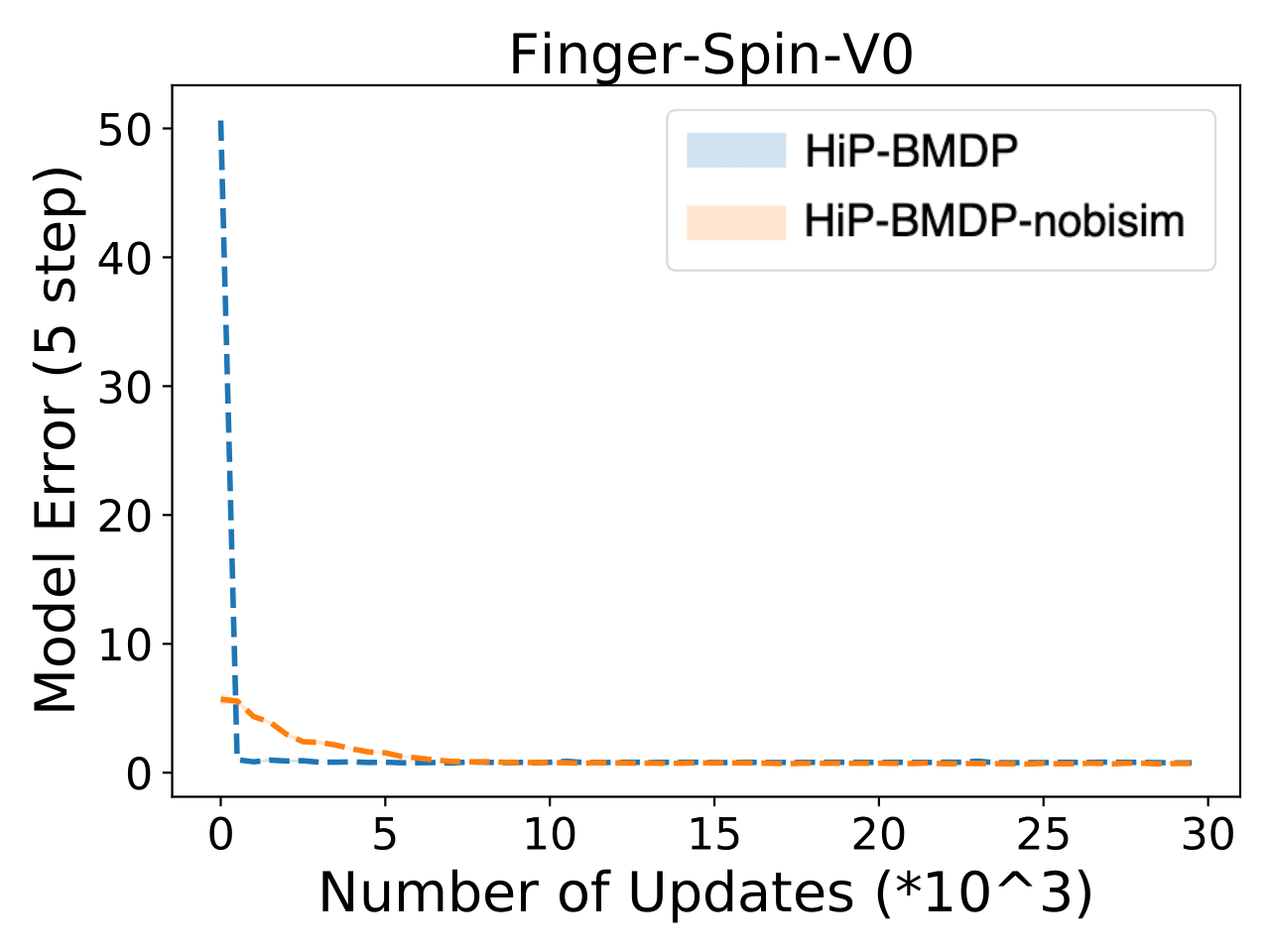}
\caption{\small Average per-step model error (in latent space) after unrolling the transition model for 5 steps.}
\label{fig::app::multitask::model::5step}
\end{figure}

\begin{figure}[h]
\includegraphics[width=0.245\linewidth]{figs/results/model_error/walker_v1_100_step_model_error.png}
\includegraphics[width=0.245\linewidth]{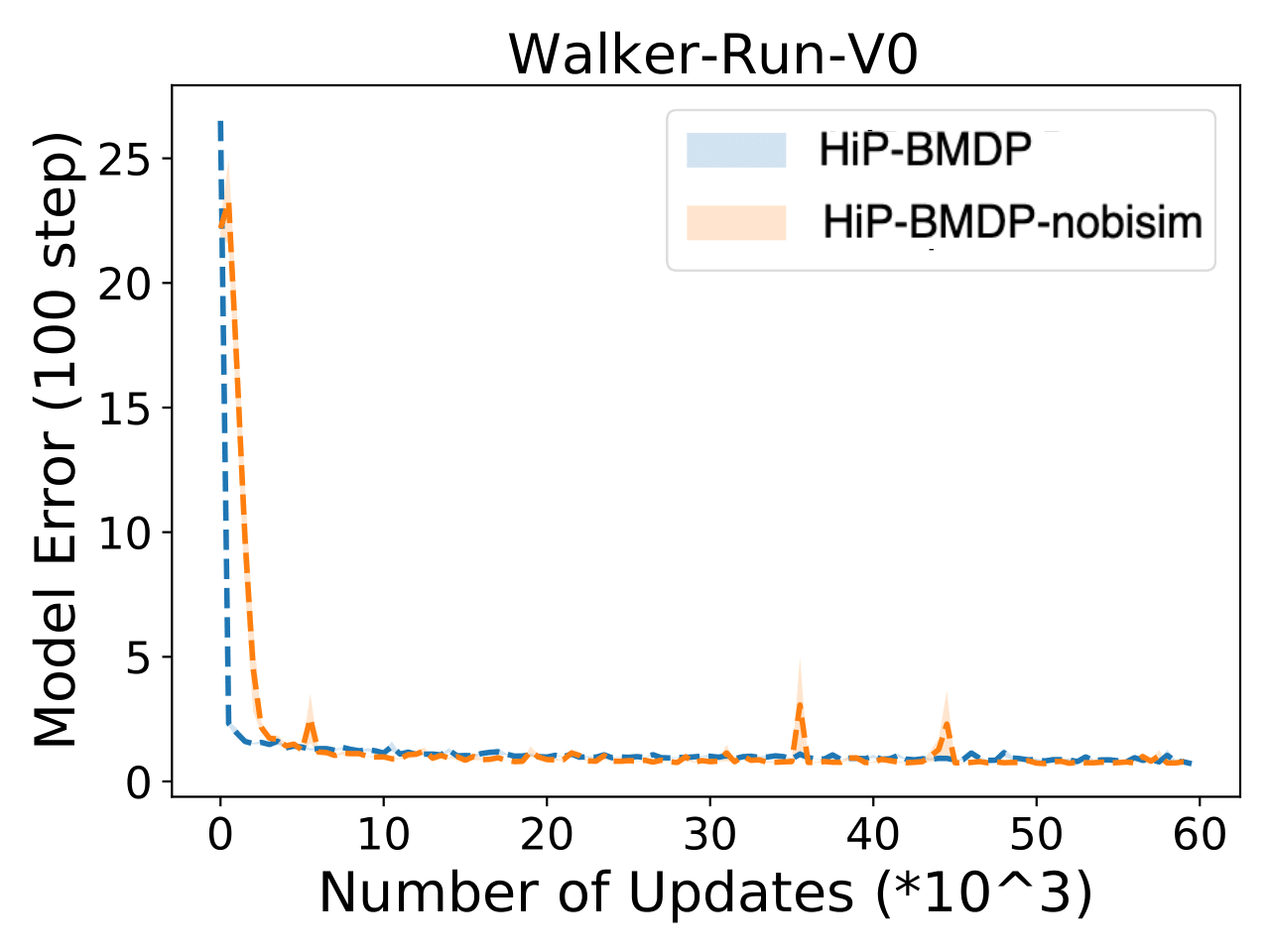}
\includegraphics[width=0.245\linewidth]{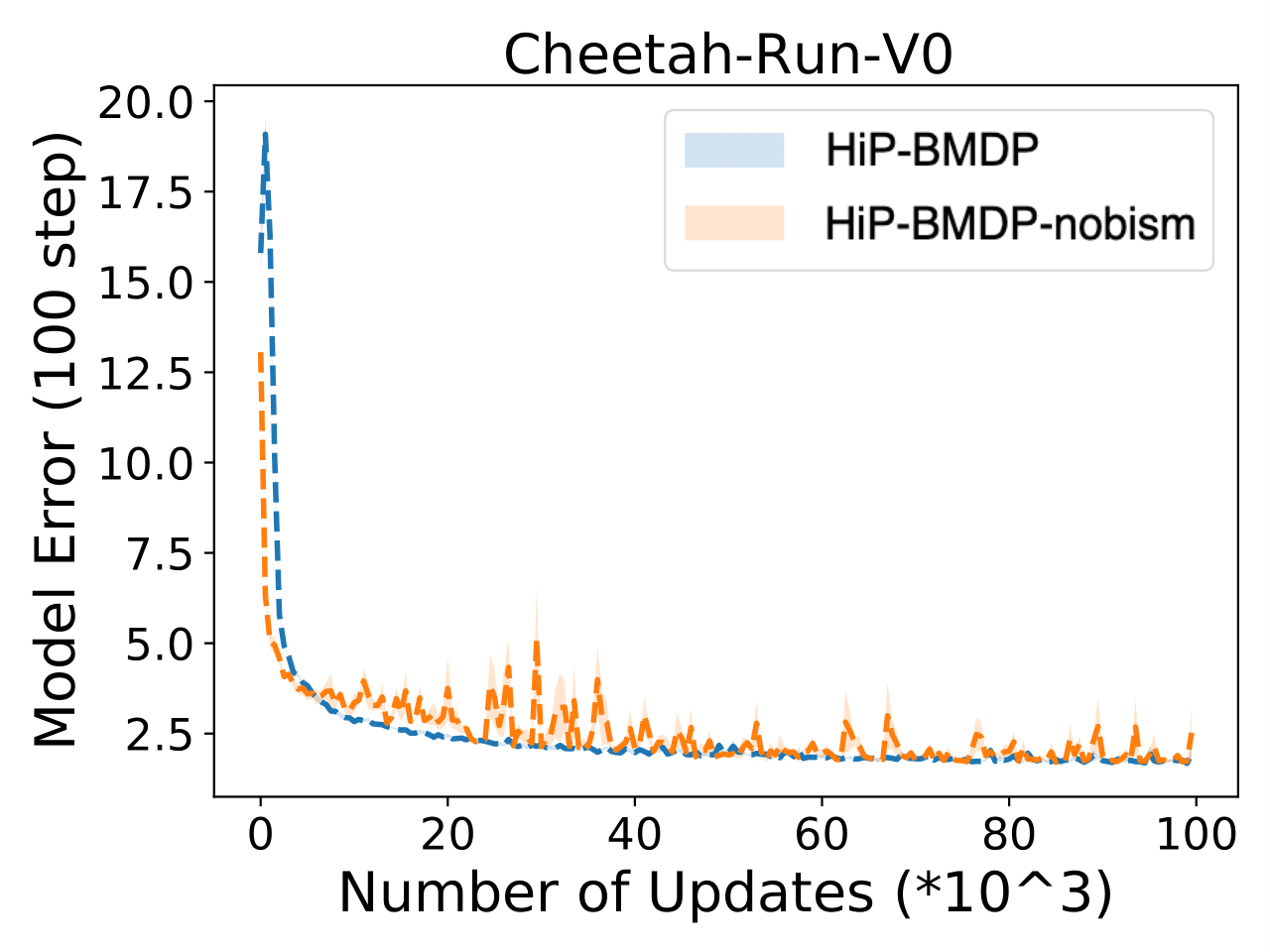}
\includegraphics[width=0.245\linewidth]{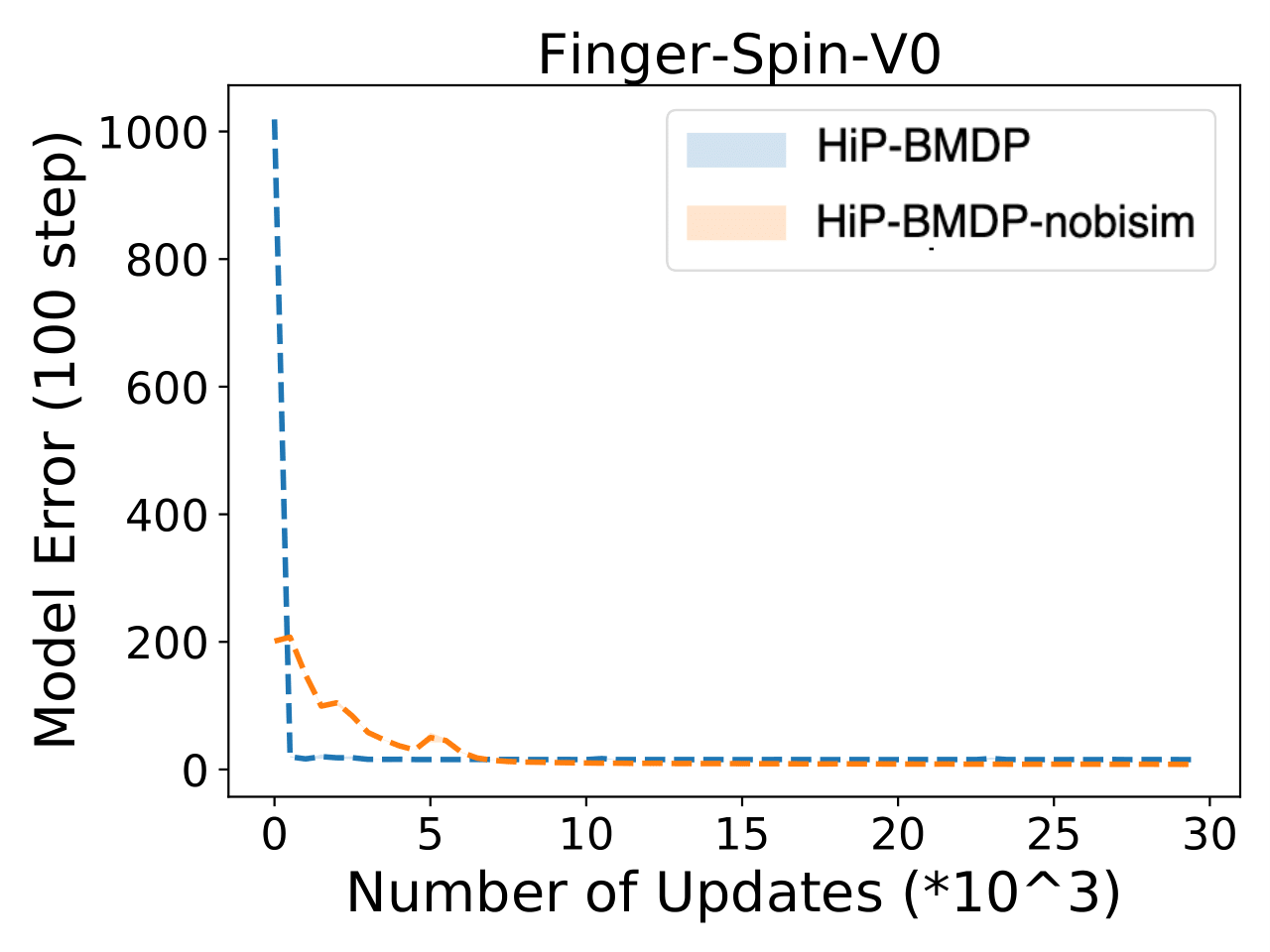}
\caption{\small Average per-step model error (in latent space) after unrolling the transition model for 100 steps.}
\label{fig::app::multitask::model::100step}
\end{figure}

\subsection{Meta-RL Setting}
We provide the Meta-RL results for the additional environments. Recall that we extend the PEARL algorithm \citep{rakelly2019pearl} by training the \textit{inference network} $q_\phi(\textbf{z} | \textbf{c})$ with our additional HiP-BMDP loss. The algorithm pseudocode can be found in \cref{app:implementation}. In \cref{fig::app::metarl::interpolation}, we show the results on the interpolation setup and in \cref{fig::app::metarl::extrapolation}, we show the results on the extrapolation setup. In some environments (eg Walker-Walk-V1), the proposed approach (blue) converges faster to  a threshold reward (green) than the baseline. In the other environments, the gains are quite small.

\begin{figure}[h]
\centering
\includegraphics[width=0.24\linewidth]{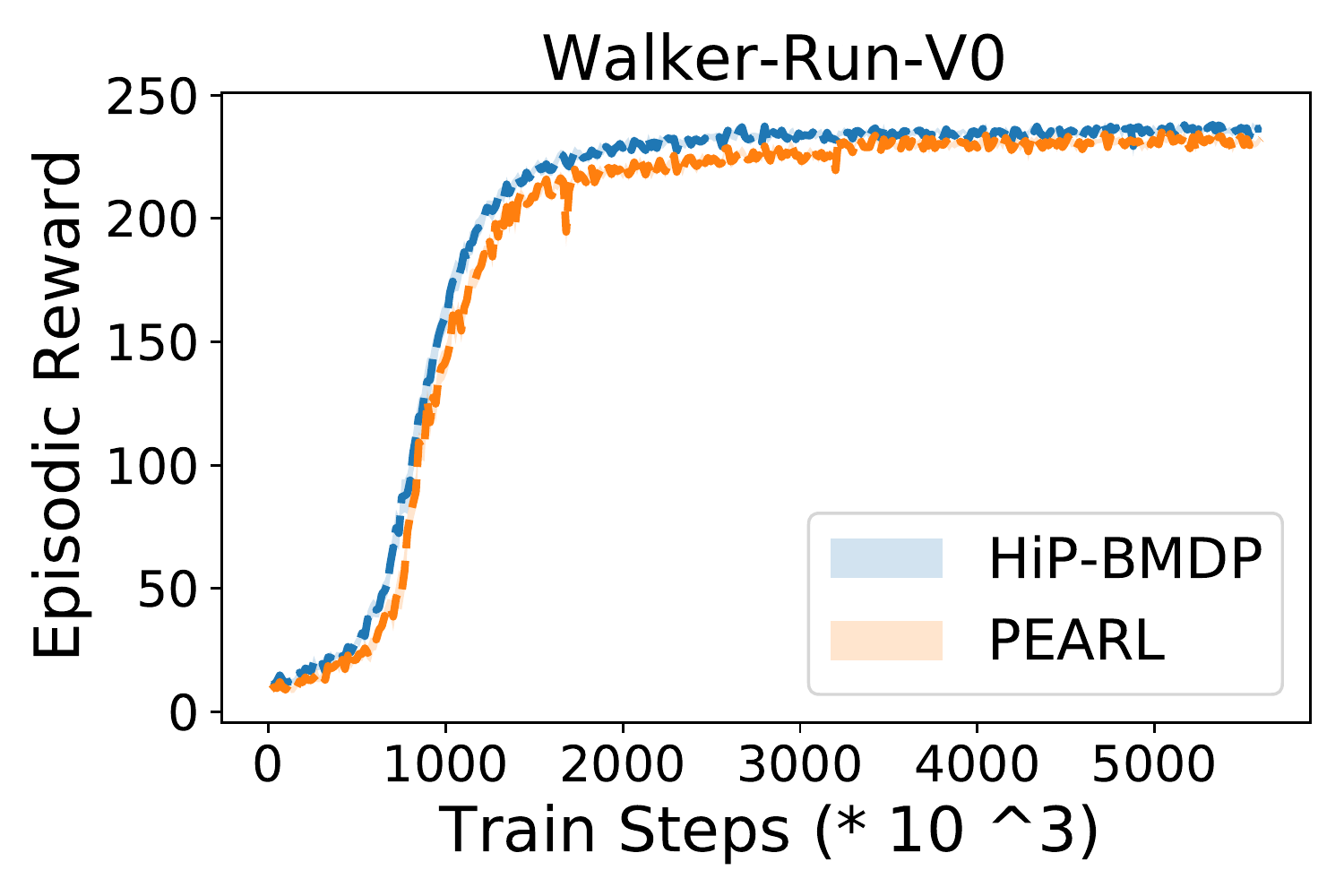}
\includegraphics[width=0.24\linewidth]{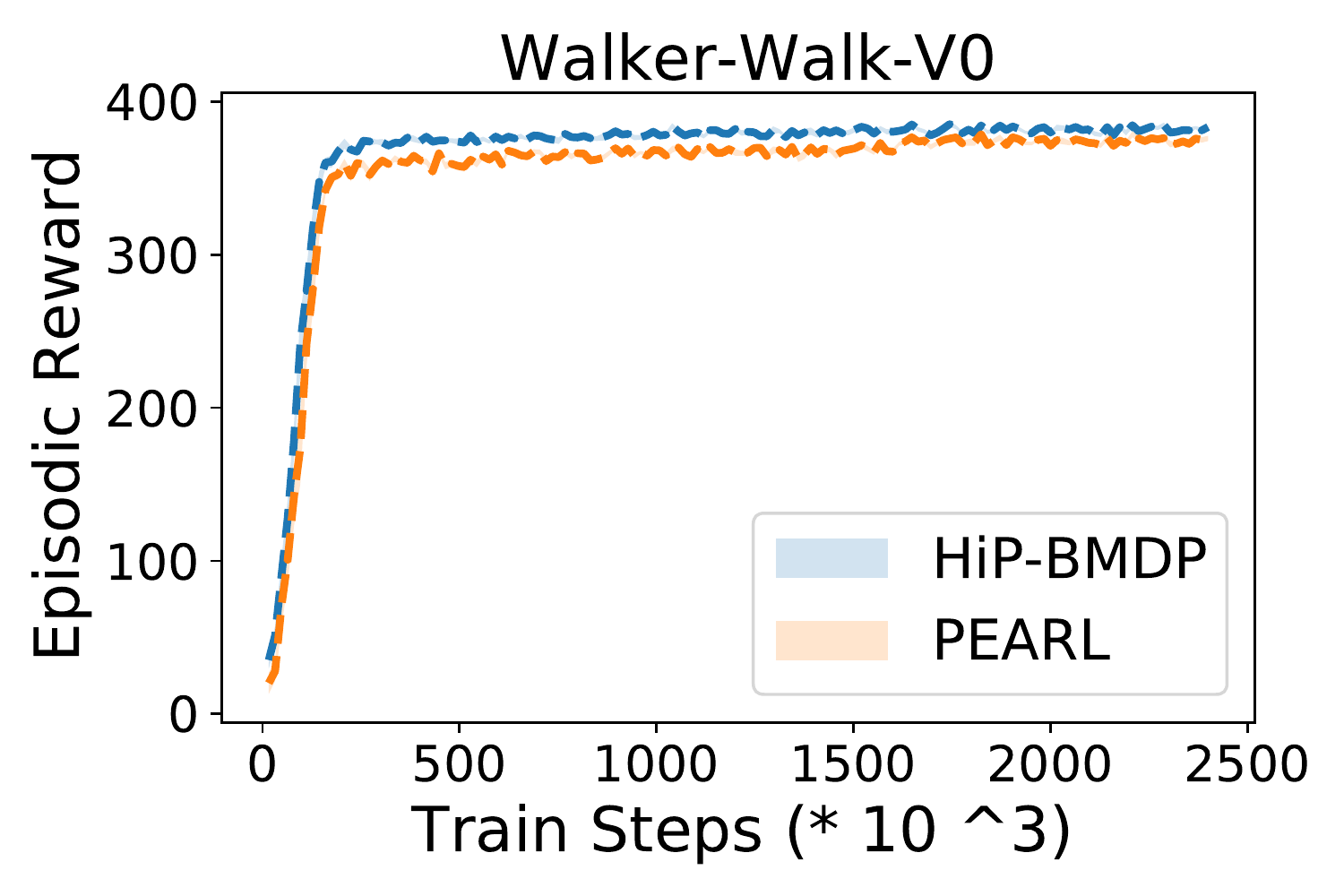}
\includegraphics[width=0.24\linewidth]{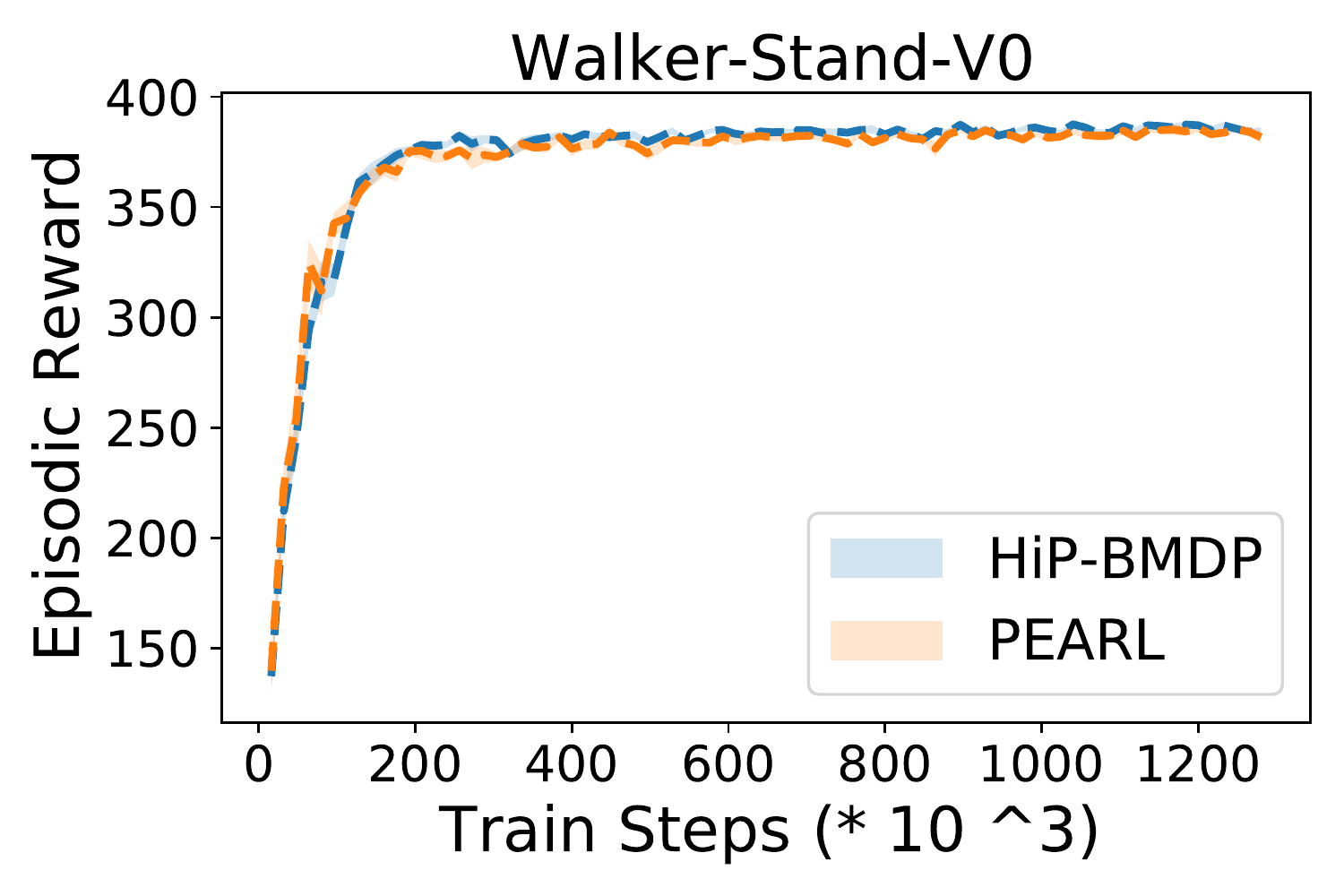}
\includegraphics[width=0.24\linewidth]{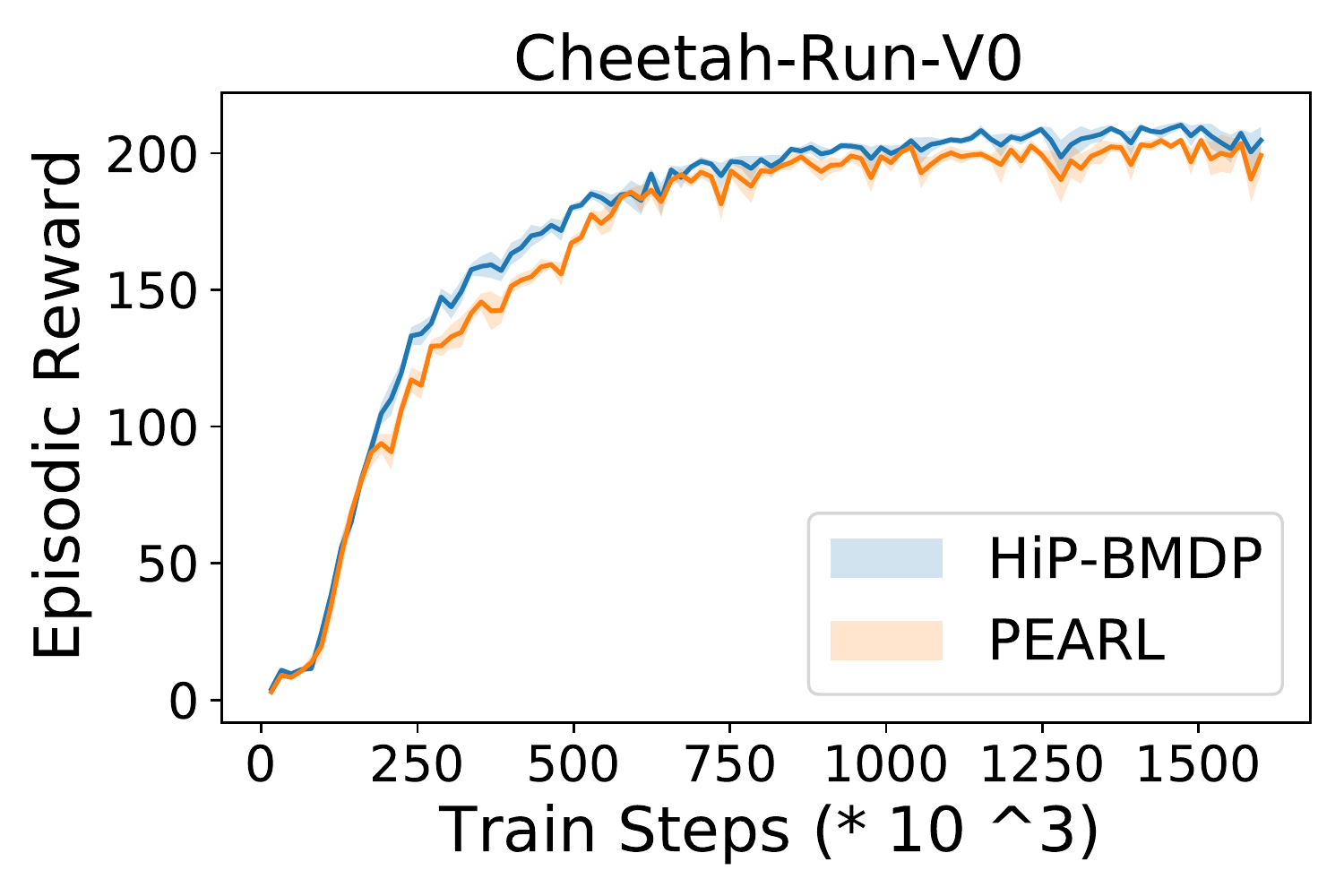}
\includegraphics[width=0.24\linewidth]{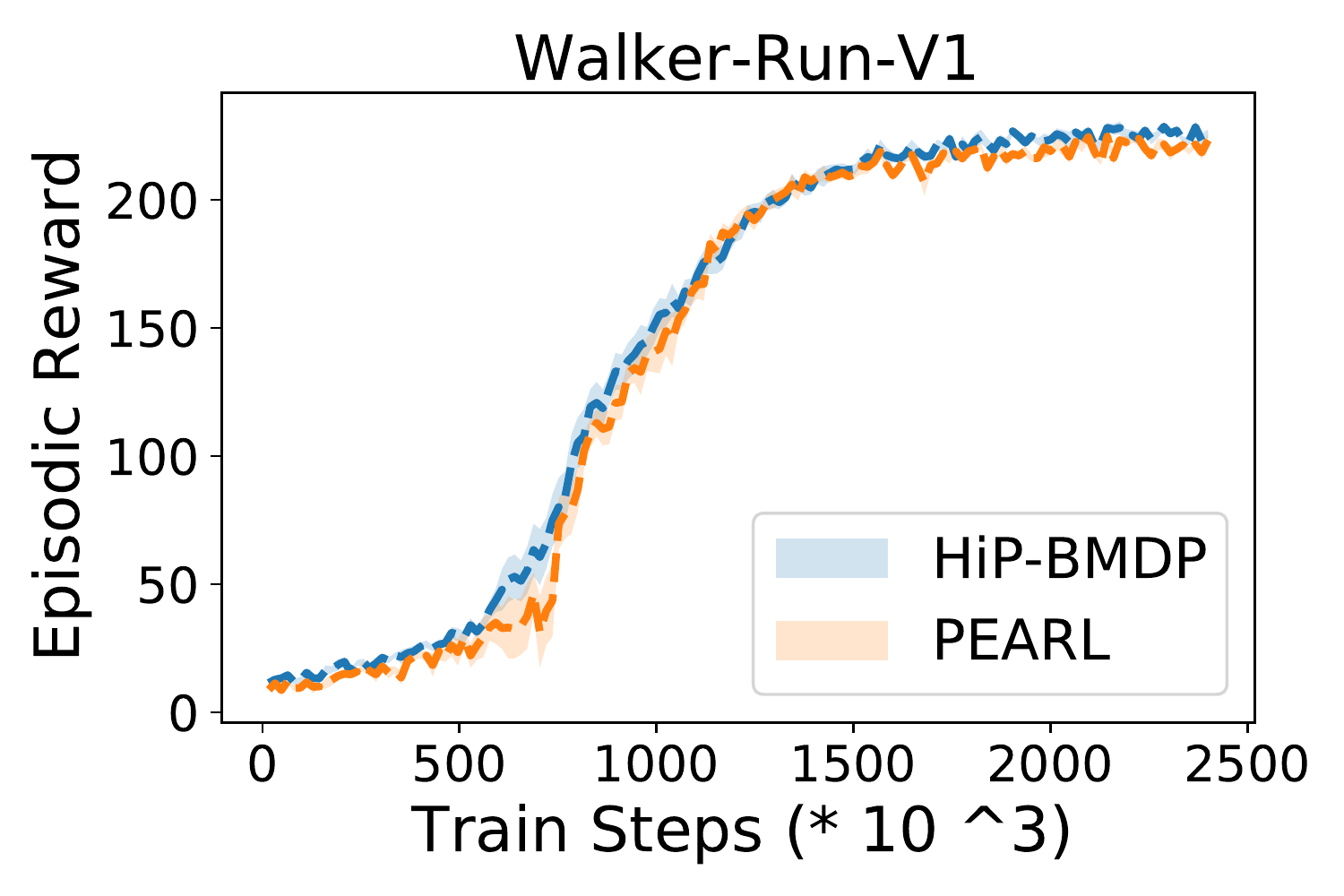}
\includegraphics[width=0.24\linewidth]{figs/results/metaRL/walker-walk-distribution-v1/neurips_walker_walk_v1_interpolation.pdf}
\includegraphics[width=0.24\linewidth]{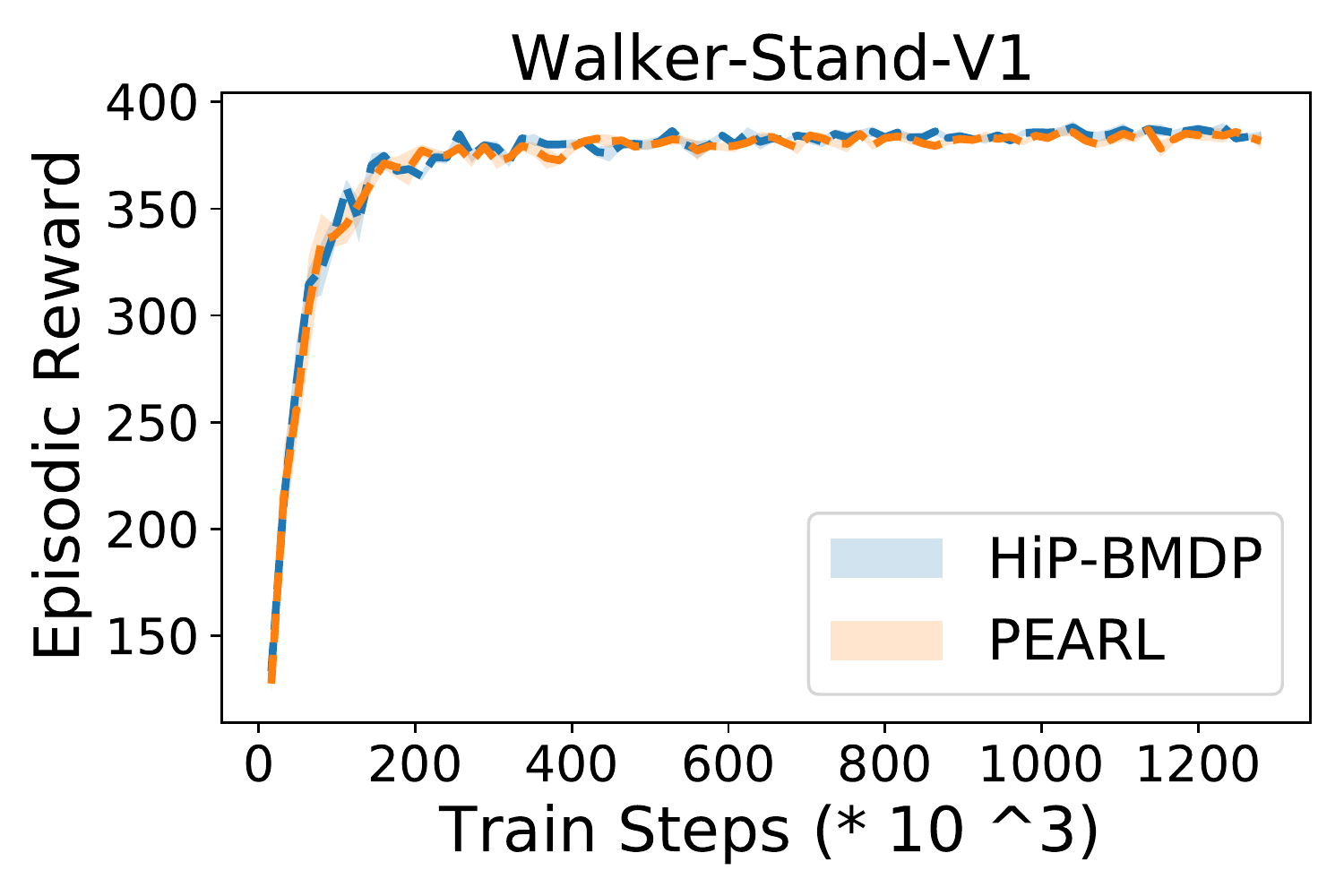}
\caption{\small Few-shot generalization performance on the interpolation tasks on Walker-Run-V0, Walker-Walk-V0 Walker-Stand-V0, Cheetah-Run-V0 (top row), Walker-Run-V1, Walker-Walk-V1 and Walker-Stand-V1 (bottom row) respectively. We note that for the Walker-Walk-V1, the proposed approach (blue) converges faster to  a threshold reward (green) than the baseline. In other environments, the gains are quite small. }
\label{fig::app::metarl::interpolation}
\end{figure}

\begin{figure}[h]
\centering
\includegraphics[width=0.24\linewidth]{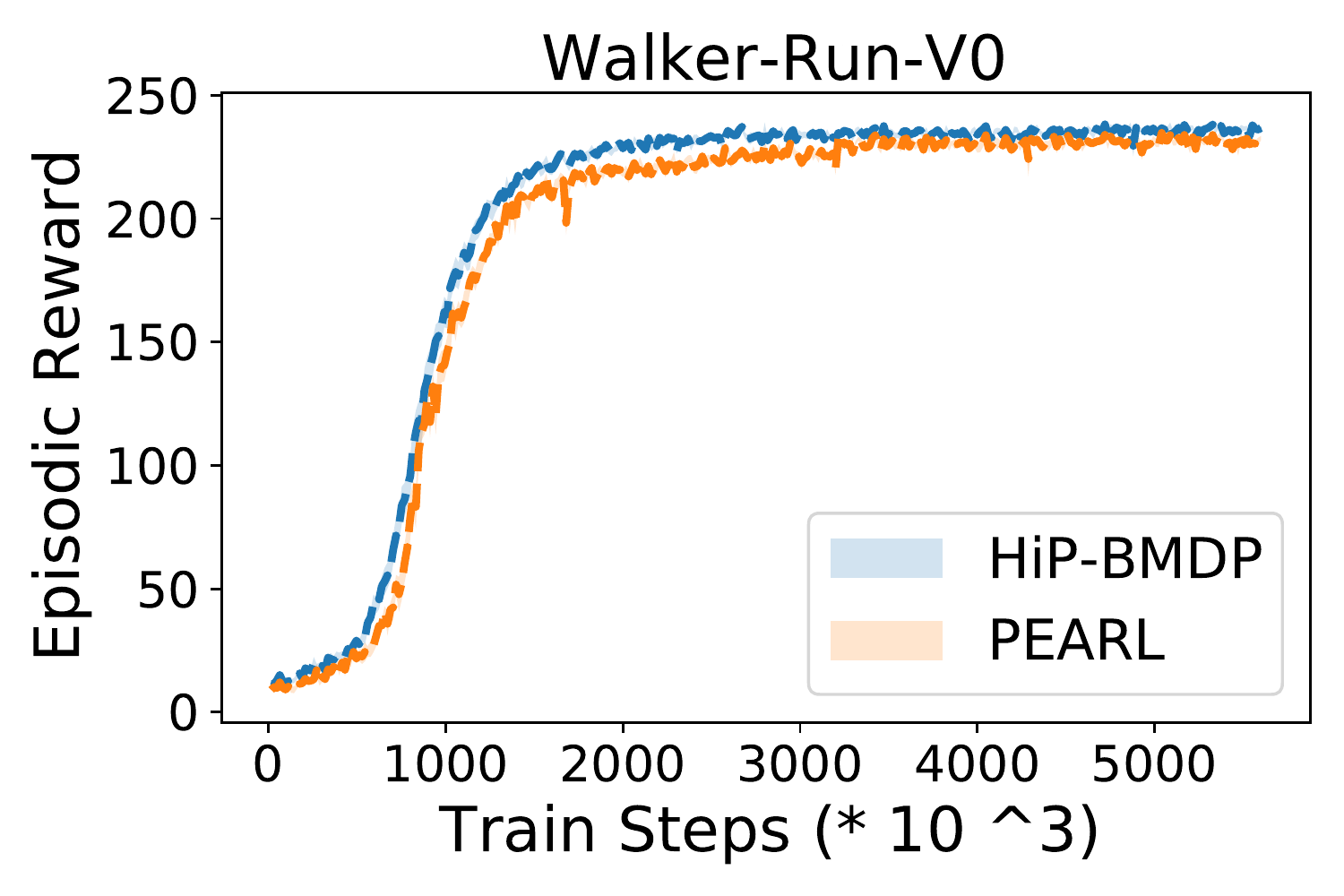}
\includegraphics[width=0.24\linewidth]{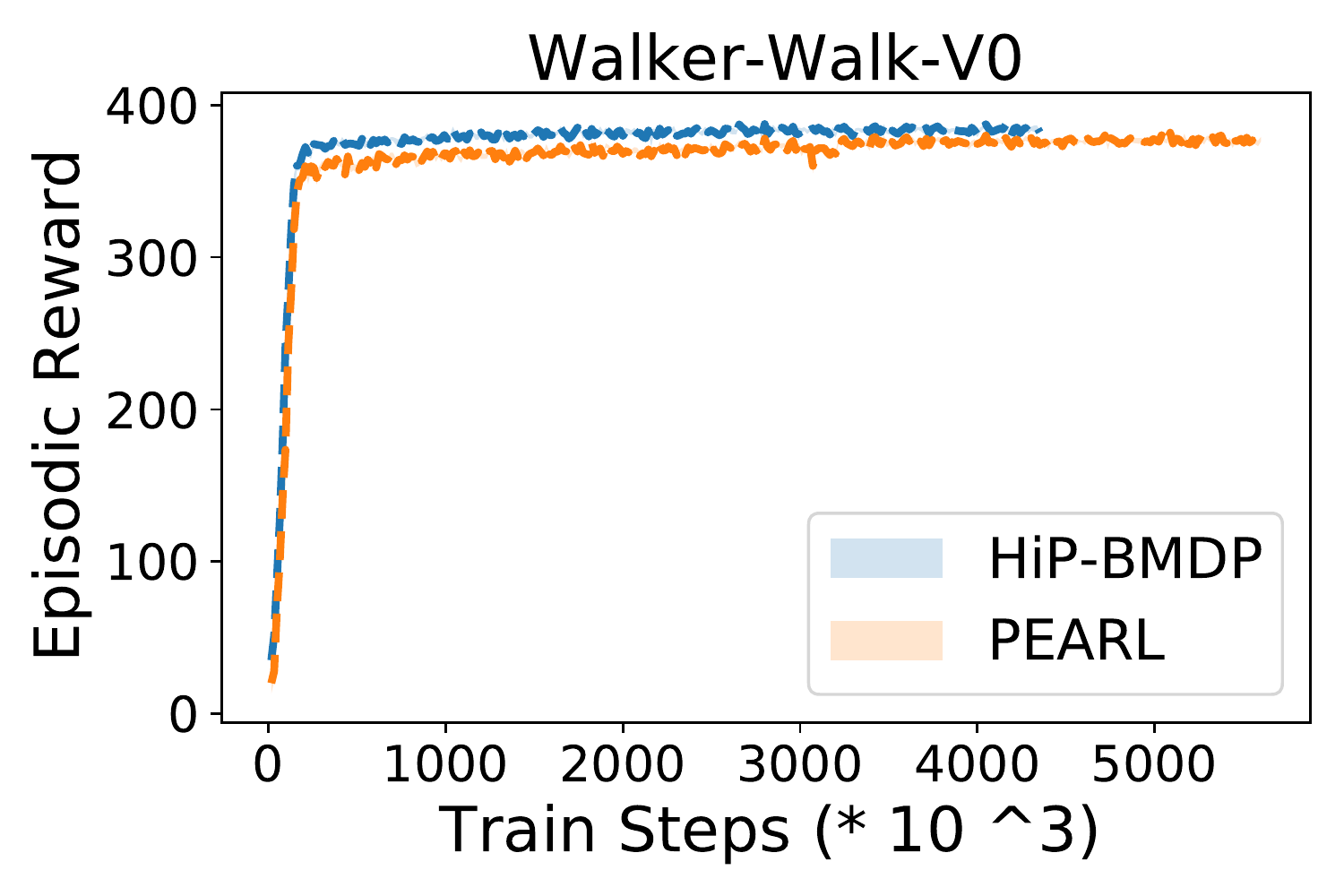}
\includegraphics[width=0.24\linewidth]{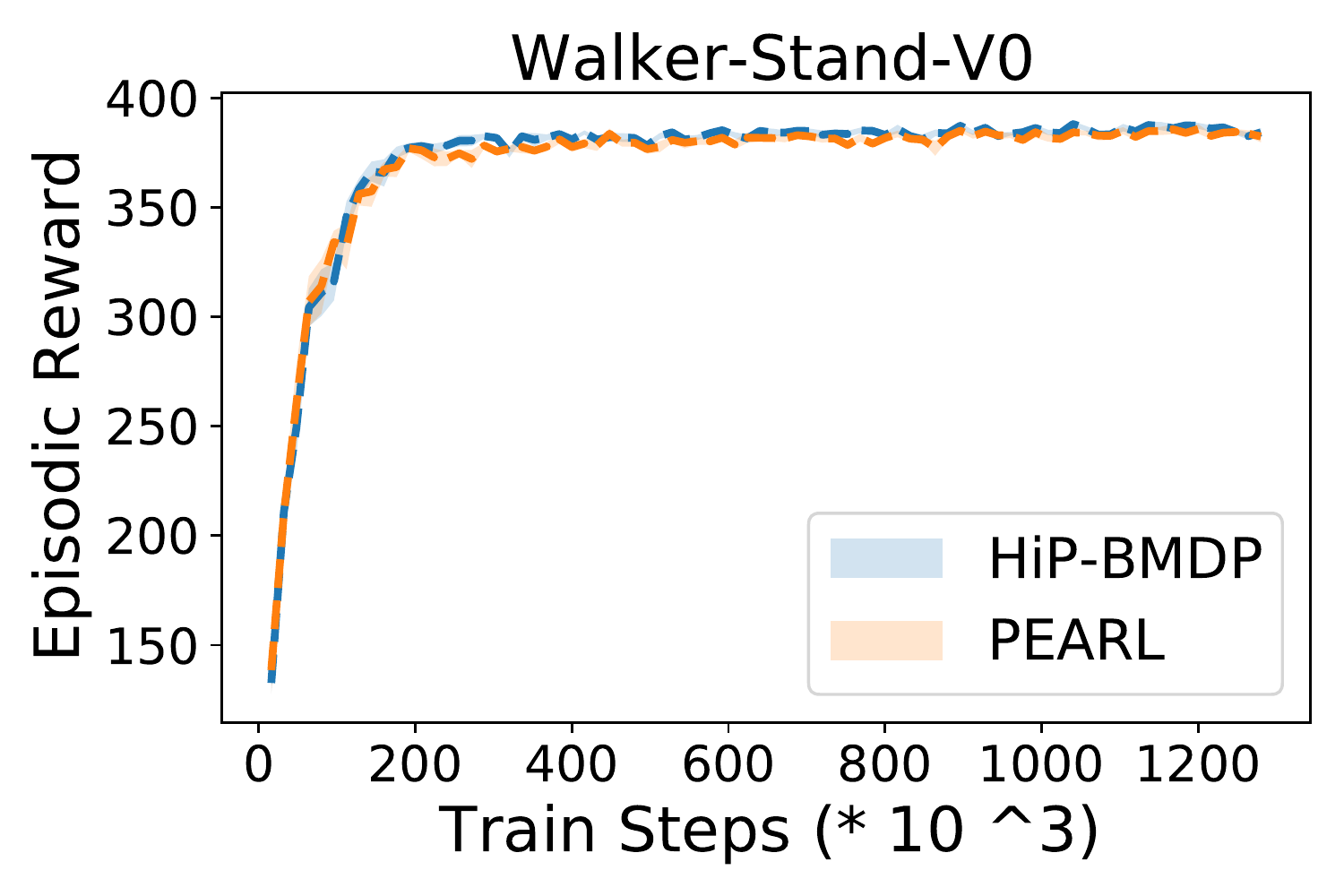}
\includegraphics[width=0.24\linewidth]{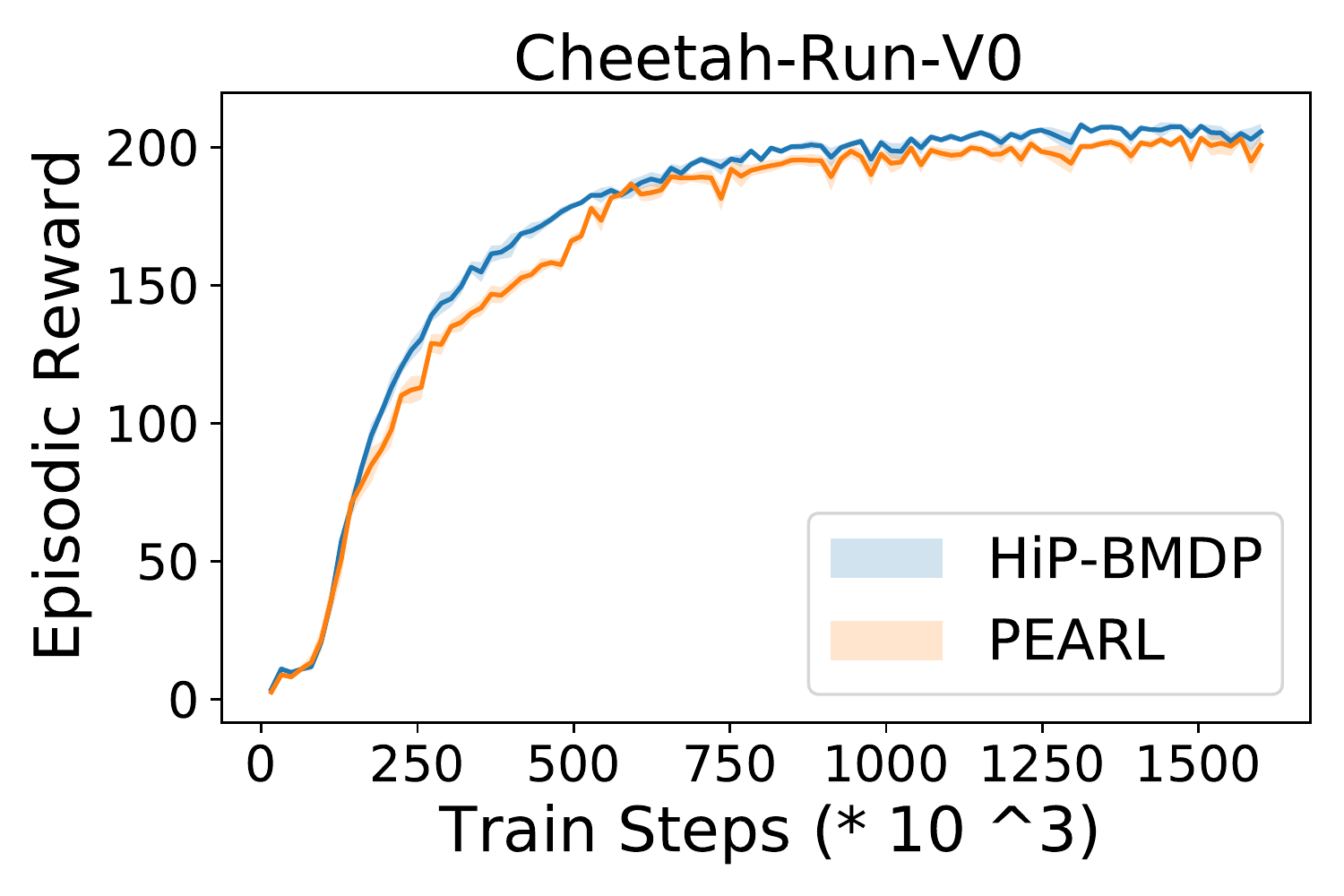}
\includegraphics[width=0.24\linewidth]{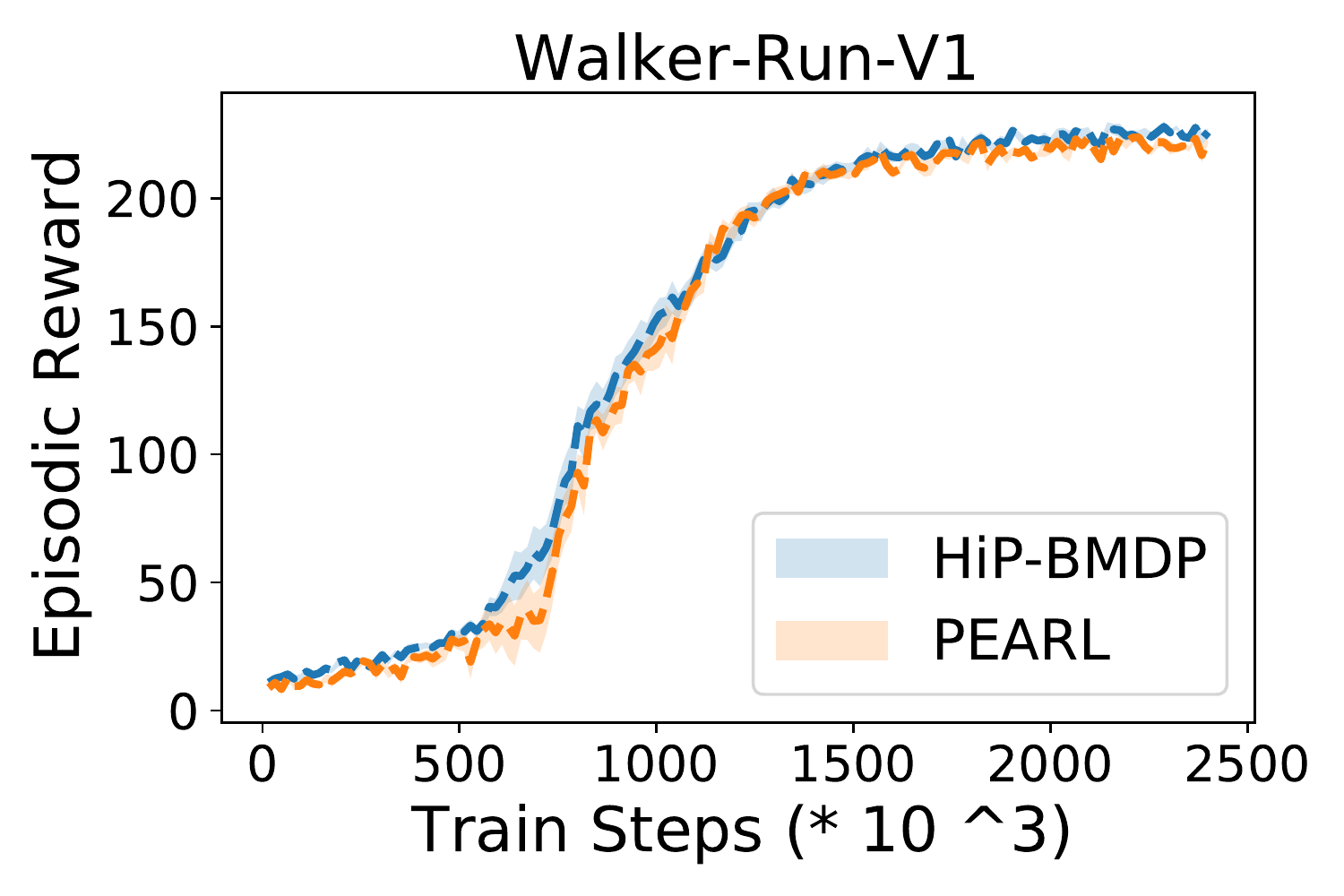}
\includegraphics[width=0.24\linewidth]{figs/results/metaRL/walker-walk-distribution-v1/neurips_walker_walk_v1_extrapolation.pdf}
\includegraphics[width=0.24\linewidth]{figs/results/metaRL/walker-stand-distribution-v1/neurips_walker_stand_v1_interpolation.pdf}
\caption{\small Few-shot generalization performance on the extrapolation tasks on Walker-Run-V0, Walker-Walk-V0, Walker-Stand-V0, Cheetah-Run-V0 (top row), Walker-Run-V1, Walker-Walk-V1 and Walker-Stand-V1 (bottom row) respectively. We note that for the Walker-Walk-V1, the proposed approach (blue) converges faster to  a threshold reward (green) than the baseline. In other environments, the gains are quite small. }
\label{fig::app::metarl::extrapolation}
\end{figure}

\subsection{Evaluating the Universal Transition Model.}
We investigate how well the transition model performs in an unseen environment by only adapting the task parameter $\theta$. We instantiate a new MDP, sampled from the family of MDPs, and use a behavior policy to collect transitions. These transitions are used to update only the $\theta$ parameter, and the transition model is evaluated by unrolling the transition model for $k$-steps. In Figures \ref{fig::app::multitask::model::5step} and \ref{fig::app::multitask::model::100step}, we report the average, per-step model error in latent space, averaged over 10 environments over 5 and 100 steps respectively. While we expect both the proposed setup and baseline setups to adapt to the new environment, we expect the proposed setup to adapt faster because of the exploitation of underlying structure. We indeed observe that for both 5 step and 100 step unrolls, the proposed \textit{HiP-BMDP} model adapts much faster than the baseline \textit{HiP-BMDP-nobisim} (Figures \ref{fig::app::multitask::model::5step} and \ref{fig::app::multitask::model::100step})

\end{document}